%% file: camera_ready.tex
\documentclass{article}

\PassOptionsToPackage{numbers, compress}{natbib}
\usepackage[main, final]{neurips_2026} 

\usepackage[utf8]{inputenc} 
\usepackage[T1]{fontenc}    
\usepackage{hyperref}       
\usepackage{url}            
\usepackage{booktabs}       
\usepackage{amsfonts}       
\usepackage{nicefrac}       
\usepackage{microtype}      
\usepackage{multirow}       
\usepackage{xcolor}         
\usepackage{xurl}
\usepackage{amsmath}
\usepackage{amssymb}
\usepackage{amsthm}         
\usepackage{dsfont}
\usepackage[most]{tcolorbox}   
\usepackage{enumitem}
\usepackage{graphicx}
\usepackage{caption}
\usepackage{subcaption}
\usepackage{algorithm}
\usepackage{algpseudocode}

\newtheorem{theorem}{Theorem}
\newtheorem{proposition}{Proposition}

\newtheorem{lemma}{Lemma}
\newtheorem{corollary}{Corollary}
\newtheorem{remark}{Remark}

\newcounter{restatedtheorem}
\newenvironment{restatethm}[2][]{
  \begingroup
  \stepcounter{restatedtheorem}
  \renewcommand{\thetheorem}{\ref{#2}}
  
  \begin{theorem}[#1]
  \addtocounter{theorem}{-1}
}{
  \end{theorem}
  \endgroup
}

\title{Budgeted Multi-Source Counterfactual Annotation \\ for Off-Policy Evaluation}

\author{
\textbf{Biao Xiang}$^{1}$\thanks{Corresponding author.
Email: \texttt{bqx5053@psu.edu}} \qquad
\textbf{Ali Eshragh}$^{2,3}$\thanks{Equal supervision.} \qquad
\textbf{Yuexing Li}$^{2 \dagger}$ \qquad
\textbf{Kai Wang}$^{4\dagger}$ \\
[0.1em]
$^1$Pennsylvania State University, State College, PA \\
$^2$Johns Hopkins Carey Business School, Washington, DC \\
$^3$International Computer Science Institute, Berkeley, CA \\
$^4$Georgia Institute of Technology, Atlanta, GA \\
}

\begin{document}

\maketitle

\begin{abstract}
Off-policy evaluation (OPE) estimates the value of a target policy from logged data, but limited behavior-policy coverage can force high-variance reweighting or reward-model extrapolation.
Counterfactual annotations can add evidence about unobserved actions, yet practical sources, including domain experts and large language models (LLMs), may be costly, biased, or noisy.
We study budgeted acquisition of such annotations for contextual-bandit OPE.
Given source-specific costs and error profiles, we formulate an integer allocation problem over context-action pairs and annotation sources to minimize the component of estimator variance that depends on the annotation plan.
We characterize when annotations are valuable through a first-annotation threshold and local annotation-value regimes.
For the coupled multi-source problem, we develop a majorization-minimization algorithm with dynamic-programming subroutines that monotonically improves the objective.
Experiments in synthetic clinical and LLM-annotated education bandits show that our allocation method reduces fixed-profile mean squared error (MSE) by $20.58\%$ and $10.77\%$, respectively, relative to no annotation.
\end{abstract}

\section{Introduction}
\label{sec:introduction}

Off-policy evaluation (OPE) estimates the value of a target policy from data collected by a different behavior policy.
It is especially important when online evaluation is expensive, disruptive, or risky, as in healthcare and education.
In contextual bandits, logged data reveal rewards only for actions selected by the behavior policy.
When the target policy assigns probability to actions that are poorly covered in the logged data, evaluation can depend on high-variance reweighting, reward-model extrapolation, or both.

Counterfactual annotations offer a practical way to add evidence about unobserved actions without deploying the target policy.
For example, a clinician may assess what would likely have happened under an alternative treatment, or an automated tutor may estimate a student's response under a different instructional activity.
Recent work has shown how such annotations can be incorporated into OPE estimators~\citep{tang2023counterfactual,mandyam2024}.
However, a key practical question remains: when annotations are costly and imperfect, which ones should be collected?

This question matters because annotation sources vary in cost and reliability.
Domain experts may be accurate but expensive, while large language models (LLMs) may scale cheaply yet make systematic errors on particular context-action pairs.
In the counterfactual-annotated estimator we study, such errors do not introduce bias into the policy-value estimate but affect variance through the reward model.
Annotation value therefore depends jointly on source quality, cost, factual coverage, and how strongly the target policy weights the annotated action.

We formulate counterfactual-annotation acquisition as a budget-constrained allocation problem: given estimates of each source's cost, bias, and variability, choose which context-action pairs to annotate and which sources to query so as to minimize the part of the estimation error that depends on the annotation plan.
The problem is discrete because annotations are acquired one at a time, and coupled because annotations for different actions in the same context jointly shape the reward model used for evaluation.
These challenges require both a statistical understanding of when annotations are valuable and an optimization method for allocating them under budget.
To address them, we make three contributions:
\begin{enumerate}[label=(\Roman*),leftmargin=1.5em,itemsep=2pt,topsep=2pt]
    \item \textbf{Annotation-value regimes.}
    We derive a first-annotation threshold and a four-regime characterization of when annotations help immediately, help only up to a finite amount, help only after acquiring a sufficiently large batch, or do not help locally.

    \item \textbf{Budgeted allocation algorithm.}
    We show that the allocation problem is coupled, nonconvex, and not generally submodular, and develop a majorization-minimization (MM) algorithm with dynamic-programming subroutines that monotonically improves the original objective.

    \item \textbf{Empirical validation.}
    Across synthetic and LLM-annotated bandit environments, we compare against greedy allocation and certified solver benchmarks, quantify profile-estimation sensitivity, and validate the final OPE estimator using finite pilots and independent evaluation samples.
\end{enumerate}

\section{Related Work}
\label{sec:related_work}

\looseness=1
Off-policy evaluation (OPE) estimates the value of a target policy from data generated by a behavior policy.
Core estimators include importance sampling and per-decision variants~\citep{precup2000}, direct-method estimators that learn reward, value, or transition models~\citep{beygelzimer2009offset,le2019batch}, and doubly robust (DR) estimators that combine model estimates with importance-weighted corrections~\citep{dudik2011doubly,dudik2014doubly,jiang2016doubly}.
Variance reduction and reliability have been further studied through data-efficient DR estimators~\citep{thomas2016data,farajtabar2018more}, representation-based OPE~\citep{liu2018representation}, empirical benchmarking~\citep{voloshin2021empirical}, and confidence or estimator-selection methods~\citep{thomas2015hcope,taufiq2022conformal,foffano2023conformal,nie2024opera}.
Our work builds on this OPE foundation, but addresses the acquisition stage: which counterfactual annotations to collect before \mbox{applying the estimator}.

A related stream augments logged data with counterfactual or auxiliary information.
Structural-causal methods construct counterfactual trajectories~\citep{buesing2019woulda,oberst2019counterfactual}; semi-offline OPE reweights human counterfactual annotations to preserve the logged context distribution~\citep{tang2023counterfactual}; and CANDOR incorporates imperfect annotations into DR contextual-bandit estimators, particularly through the reward model~\citep{mandyam2024}.
Auxiliary-data methods such as trajectory augmentation, PERRY, and APRIL use generated trajectories or LLM-derived predictions for inference and policy evaluation~\citep{angelopoulos2023prediction,gao2024trajectory,mandyam2025perry,mandyam2025april}.
These works study how to use auxiliary information once available; we study how to allocate a finite annotation budget across heterogeneous sources.

Our problem is also related to budgeted data acquisition: active learning and optimal design choose measurements under resource constraints~\citep{settles2009active,pukelsheim2006optimal}, while multiple-logger OPE combines logged data from different behavior policies~\citep{agarwal2017effective}.
Our setting differs because the acquired data are counterfactual rewards used in a DR reward model, rather than supervised labels or additional logged actions.
Thus, annotation value is not standalone informativeness; it is defined by how the annotation changes the resulting OPE estimator's variance under the target and behavior policies.

\section{Problem Formulation}
\label{sec:problem}

We study off-policy evaluation with counterfactual annotations in a contextual bandit with finite context space $\mathcal{S}$ and action space $\mathcal{A}$.
We use separate factual datasets for reward-model fitting and OPE: $\mathcal D_{\mathrm{fit}}=\{(\tilde s_j,\tilde a_j,\tilde r_j)\}_{j=1}^{M}$ and $\mathcal D_{\mathrm{ope}}=\{(s_i,a_i,r_i)\}_{i=1}^{N}$.
For each record in either dataset, the context $s\in\mathcal{S}$ is drawn from a distribution $d_0$, the action $a\in\mathcal{A}$ is chosen by a behavior policy $\pi_b(\cdot|s)$, and the reward $r$ is drawn from the reward distribution $R(s,a)$.
We write $\mu_r(s,a)=\mathbb{E}[r|s,a]$ and $\sigma_r^2(s,a)=\operatorname{Var}(r|s,a)$ for the true mean and variance of rewards for context-action pair $(s,a)$.

In addition to the logged data, annotations may be obtained from $K$ independent sources.
For each pair $(s,a)$ and source $k\in\mathcal{K}=\{1,\ldots,K\}$, let $n_k(s,a)$ be the number of collected annotations, and let $g_{k,j}\sim G_k(s,a)$, $j=1,\ldots,n_k(s,a)$, denote independent annotation draws from that source.
Denote $\mathbb{E}[g_{k,j}|s,a]=\mu_r(s,a)+\varepsilon_{g_k}(s,a)$ and $\operatorname{Var}(g_{k,j}|s,a)=\sigma_r^2(s,a)+\Delta_{g_k}(s,a)$, where $\varepsilon_{g_k}$ denotes annotation bias and $\Delta_{g_k}$ denotes excess annotation variance for source $k$.
The fitting data and acquired annotations are independent of the OPE evaluation sample.

Because annotations may be biased, we leverage the idea of the $\mathrm{DM}^+\text{-}\mathrm{IS}$ estimator \citep{mandyam2024} by incorporating annotations only in the reward estimate, rather than treating them as factual rewards in the importance-sampling correction.
For each pair $(s,a)$, let $n_f(s,a)=\sum_{j=1}^{M}\mathds{1}\{\tilde s_j=s,\tilde a_j=a\}$ denote the number of fitting observations for that pair in $\mathcal D_{\mathrm{fit}}$, and let $N(s,a)=n_f(s,a)+\sum_{k=1}^K n_k(s,a)$ denote the total number of factual and annotated reward observations available for that pair.
We define the annotation-augmented reward estimate, for $N(s,a) > 0$, as
\[
    \hat r^+(s,a)
    =
    \frac{
        \sum_{j=1}^{M}\mathds{1}\{\tilde s_j=s,\tilde a_j=a\}\tilde r_j
        +
        \sum_{k=1}^{K}\sum_{j=1}^{n_k(s,a)}g_{k,j}
    }{
        N(s,a)
    } .
\]

For $N(s,a)=0$, use a fixed default $r_0(s,a)$, this case is treated separately in Appendix~\ref{sec:nf_zero}.

Thus, for a target policy $\pi_e$ with standard common support, i.e., $\pi_b(a|s)>0$ whenever $\pi_e(a|s)>0$ for all $s\in\mathcal{S}, a\in\mathcal{A}$, the $\mathrm{DM}^+\text{-}\mathrm{IS}$ estimator is:
\begin{align}
\label{eq:multi-source_OPE_estimator}
    \hat v^{\mathrm{DM}^+\text{-}\mathrm{IS}}(\pi_e)
    =
    \frac{1}{N}\sum_{i=1}^{N}
    \left[
        \hat r^+(s_i,\pi_e)
        +
        \frac{\pi_e(a_i| s_i)}{\pi_b(a_i| s_i)}\bigl(r_i-\hat r^+(s_i,a_i)\bigr)
    \right],
\end{align}
where $\hat r^+(s,\pi_e)=\sum_{a\in\mathcal{A}}\pi_e(a|s)\hat r^+(s,a)$ is the estimated value of $\pi_e$ at context $s$.
Our goal is to minimize the mean squared error (MSE) of $\hat v^{\mathrm{DM}^+\text{-}\mathrm{IS}}(\pi_e)$ by choosing $\mathbf n$, i.e., how many annotations should be queried from each source for each context-action pair.

\section{Objective Function and Budget-Constrained Allocation}
To express how allocation affects MSE, we define reward-model bias and variance for $N(s,a)>0$ as
\[
\begin{aligned}
B(s,a;\mathbf n)
&=\frac{\sum_k n_k(s,a)\varepsilon_{g_k}(s,a)}{N(s,a)},\\
W(s,a;\mathbf n)
&=\frac{n_f(s,a)\sigma_r^2(s,a)+\sum_k n_k(s,a)\bigl[\sigma_r^2(s,a)+\Delta_{g_k}(s,a)\bigr]}{N(s,a)^2}.
\end{aligned}
\]
For $N(s,a)=0$, set $B(s,a;\mathbf n)=r_0(s,a)-\mu_r(s,a)$ and $W(s,a;\mathbf n)=0$.
Write $\mathcal B_s(\mathbf n)=\sum_a\pi_e(a\mid s)B(s,a;\mathbf n)$, $\eta(s,a)=1-\pi_b(a\mid s)$, $\alpha_1(s,a)=d_0(s)\pi_e(a\mid s)^2/\pi_b(a\mid s)$, and $\alpha_3(s,a)=\eta(s,a)\alpha_1(s,a)$, with $\alpha_1=\alpha_3=0$ when $\pi_e(a\mid s)=0$.
The following result isolates the part of OPE error controlled by the annotation plan.
\begin{theorem}[Unbiasedness and allocation-dependent MSE]
\label{thm:objective}
Under the independent-sample design above, known behavior propensities, common support, and finite second moments, fix the factual counts and a feasible allocation $\mathbf n$.
The estimator in~\eqref{eq:multi-source_OPE_estimator} is unbiased, and
\[
\operatorname{MSE}(\mathbf n)=C_0+\frac{J(\mathbf n)}{N},
\]
where $C_0$ is allocation-invariant and
\begin{equation}
\label{eq:final_objective}
\begin{aligned}
J(\mathbf n)&=J_1(\mathbf n)-J_2(\mathbf n)+J_3(\mathbf n),\\
J_1(\mathbf n)&=\sum_{s,a}\alpha_1(s,a)B(s,a;\mathbf n)^2,\\
J_2(\mathbf n)&=\sum_s d_0(s)\mathcal B_s(\mathbf n)^2,\\
J_3(\mathbf n)&=\sum_{s,a}\alpha_3(s,a)W(s,a;\mathbf n).
\end{aligned}
\end{equation}
\end{theorem}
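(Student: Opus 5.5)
The plan is to condition on the fitted reward model and use the law of total variance. Since $\mathcal D_{\mathrm{fit}}$ and the annotations are independent of $\mathcal D_{\mathrm{ope}}$, I would condition on the $\sigma$-field $\mathcal F$ generated by $\mathcal D_{\mathrm{fit}}$ (with factual counts $n_f$ fixed) and all annotation draws. Given $\mathcal F$, the function $\hat r^+$ is deterministic and the summands $X_i$ of~\eqref{eq:multi-source_OPE_estimator} are i.i.d. Write $\mu_r(s,\pi_e)=\sum_a\pi_e(a\mid s)\mu_r(s,a)$, let $\delta(s,a)=\hat r^+(s,a)-\mu_r(s,a)$, and let $w=\pi_e(a\mid s)/\pi_b(a\mid s)$. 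One summand decomposes as
\[
X=\mu_r(s,\pi_e)+w\bigl(r-\mu_r(s,a)\bigr)+\bigl[\delta(s,\pi_e)-w\,\delta(s,a)\bigr],
\]
where $\delta(s,\pi_e)=\sum_a\pi_e(a\mid s)\delta(s,a)$.

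\emph{Unbiasedness.} Given $(s,a)$, the middle term has mean zero. By common support, $\mathbb E_{a\sim\pi_b}[w\,\delta(s,a)\mid s]=\delta(s,\pi_e)$, so the bracket also has conditional mean zero given $s$. Hence $\mathbb E[X\mid\mathcal F]=\sum_s d_0(s)\mu_r(s,\pi_e)=v(\pi_e)$, which does not depend on $\mathcal F$. This gives unbiasedness. It also gives $\operatorname{Var}(\mathbb E[\hat v\mid\mathcal F])=0$, so $\operatorname{MSE}=\operatorname{Var}(\hat v)=\mathbb E[\operatorname{Var}(X\mid\mathcal F)]/N$.

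\emph{Conditional variance.} All three cross terms vanish. The middle term is mean zero given $(s,a)$, and the bracket is a function of $(s,a)$. The bracket is mean zero given $s$, and the first term is a function of $s$ alone. This yields
\[
\operatorname{Var}(X\mid\mathcal F)=\operatorname{Var}_s\bigl(\mu_r(s,\pi_e)\bigr)+\mathbb E\bigl[w^2\sigma_r^2(s,a)\bigr]+\sum_{s,a}\alpha_1(s,a)\delta(s,a)^2-\sum_s d_0(s)\delta(s,\pi_e)^2 .
\]
The first two terms are allocation-invariant and form $C_0$; finite second moments make every term finite. For the last two terms I take expectation over $\mathcal F$. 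First, $\mathbb E[\delta(s,a)^2]=B(s,a;\mathbf n)^2+W(s,a;\mathbf n)$ by the bias–variance decomposition of a sample mean with independent draws. Second, because the estimates $\hat r^+(s,a)$ for distinct pairs use disjoint independent samples once the counts are fixed, $\mathbb E[\delta(s,\pi_e)^2]=\mathcal B_s(\mathbf n)^2+\sum_a\pi_e(a\mid s)^2W(s,a;\mathbf n)$. Collecting terms, the $W$ coefficient is $d_0\pi_e^2/\pi_b-d_0\pi_e^2=\alpha_3$, which gives $J_1-J_2+J_3$.

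\emph{Main obstacle.} The delicate step is justifying the cross-pair independence and the fixed-count conditioning. This is where "fix the factual counts" is needed: with random $n_f$, the estimates across pairs would be coupled through the multinomial counts. I also need to handle the boundary case $N(s,a)=0$. There $\delta=r_0-\mu_r$ is deterministic, so the stated conventions $B=r_0-\mu_r$ and $W=0$ make the same formulas hold, and such pairs contribute nothing to the cross-pair covariance. I would check these two points first, and then the algebra above is routine.
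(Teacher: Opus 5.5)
Your proof is correct and reaches the same formula, but it is more self-contained than the paper's. The paper does not derive the variance decomposition. It cites \citet{mandyam2024} both for unbiasedness and for a four-term expression $N\operatorname{Var}(\hat v)=T_1+T_2+T_3+T_4$. In that expression, $T_3$ contains only the reward-model bias $\mathbb E_{\mathcal D}[\hat r^+(s,a)]-\mu_r(s,a)$, and $T_4$ contains $\operatorname{Var}_{\mathcal D}(\hat r^+(s,a))$ with a coefficient that reduces to $\alpha_3(s,a)$. The paper's own work is only to compute these two moments as $B(s,a;\mathbf n)$ and $W(s,a;\mathbf n)$ and substitute them.

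You instead condition on the fitting data and annotations. You show $\mathbb E[\hat v\mid\mathcal F]=v(\pi_e)$ identically, so the between-model variance vanishes, then compute the within-model variance exactly. Only afterwards do you average over the reward model, which is where the same two moment computations enter. This gains three things:
\begin{itemize}
\item The argument no longer depends on an imported decomposition.
\item It proves the stronger fact that the estimator is unbiased for every realization of the fitted model. This shows directly why annotation bias affects only the variance.
\item It shows where cross-pair independence is used, namely in $\operatorname{Var}(\delta(s,\pi_e))=\sum_a\pi_e(a\mid s)^2W(s,a;\mathbf n)$. That step produces the $-d_0(s)\pi_e(a\mid s)^2W(s,a;\mathbf n)$ part of $J_3$, which the paper's route leaves inside the coefficient of $T_4$. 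As you note, this part would pick up covariance terms under random counts.
\end{itemize}
The paper's route is shorter, given the cited result.

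One small normalization slip: since $\operatorname{MSE}=\mathbb E[\operatorname{Var}(X\mid\mathcal F)]/N$, it is the first two terms divided by $N$ that make up $C_0$.
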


Although the OPE estimator remains unbiased, annotation bias affects its variance through $J_1-J_2$, alongside reward-model variance $J_3$.
Because $C_0$ is allocation-invariant, minimizing $J$ exactly minimizes fixed-profile MSE.
For reporting, we distinguish $J$ from scaled total MSE, $\mathcal V(\mathbf n):=NC_0+J(\mathbf n) =N\operatorname{MSE}(\mathbf n)$.
The unbiasedness result extends \citet{mandyam2024} to multiple sources; Appendix~\ref{sec:appendix_derivations} gives the proof and $C_0$.

Let $c_k>0$ be the cost of obtaining one annotation from source $k$, and let $\kappa>0$ be the total annotation budget.
To focus on how the budget should be distributed across sources and context-action pairs, we consider the setting where each factual sample receives at most one counterfactual annotation.
Let $N_s:=\sum_{a\in\mathcal A}n_f(s,a)$ denote the number of factual samples with context $s$.
The resulting budget-constrained allocation problem is

\begin{equation}
\label{eq:optimization_problem}
\begin{aligned}
\min_{\mathbf n}\quad
& J(\mathbf n) \\
\text{s.t.}\quad
& \sum_{s\in\mathcal S}\sum_{a\in\mathcal A}\sum_{k\in\mathcal K} c_k n_k(s,a) \le \kappa, \\
& \sum_{a\in\mathcal A}\sum_{k\in\mathcal K} n_k(s,a) \le N_s,
\quad \forall s\in\mathcal S, \\
& \sum_{k\in\mathcal K} n_k(s,a) \le N_s-n_f(s,a),
\quad \forall s\in\mathcal S,\;a\in\mathcal A, \\
& n_k(s,a)\in\mathbb Z_{\ge0},
\quad \forall s\in\mathcal S,\;a\in\mathcal A,\;k\in\mathcal K .
\end{aligned}
\end{equation}

Write $\mathcal F$ for the feasible set in~\eqref{eq:optimization_problem}.
The first constraint enforces the budget; the second and third encode the one-annotation protocol and can be relaxed if repeated annotations are allowed.

In practice, source profiles are estimated from an independent pilot, so we compare allocations selected under estimated and true profiles.
Let $\theta$ and $\hat\theta$ denote the true and estimated source profiles, and let $\mathbf n^\star\in\arg\min_{\mathbf n\in\mathcal F}J(\mathbf n;\theta)$.
For a feasible allocation $\widehat{\mathbf n}$ computed using $\hat\theta$, write $J(\widehat{\mathbf n};\hat\theta)\le\min_{\mathbf n\in\mathcal F}J(\mathbf n;\hat\theta)+\xi_{\mathrm{alg}}$, where $\xi_{\mathrm{alg}}\ge0$ is its optimization error.
Let $\delta_\varepsilon,\delta_\Delta$ denote the uniform errors in estimated biases and excess variances, $\beta_{\max}$ bound annotation and default-model bias magnitudes, and $A_1,A_2$ be the sensitivity constants defined in Appendix~\ref{app:profile_sensitivity}.
The following result translates profile-estimation and optimization errors into excess OPE MSE.

\begin{theorem}[Effect of source-profile estimation on MSE]
\label{thm:estimated-source-regret}
Under Theorem~\ref{thm:objective}'s assumptions, conditional on the independent pilot and with fixed factual reward moments and policy probabilities,
\begin{equation}
\label{eq:source-profile-mse-bound}
\operatorname{MSE}(\widehat{\mathbf n})
-
\operatorname{MSE}(\mathbf n^\star)
\le
\frac{
2A_1\delta_\varepsilon
(2\beta_{\max}+\delta_\varepsilon)
+
2A_2\delta_\Delta
+
\xi_{\mathrm{alg}}
}{N}.
\end{equation}
\end{theorem}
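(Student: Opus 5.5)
By Theorem~\ref{thm:objective}, $C_0$ does not depend on the allocation. I will also check that it does not depend on the annotation profile $\theta$: it involves only factual reward moments, $d_0$, and the policies. Granting this, the left side of~\eqref{eq:source-profile-mse-bound} equals $\bigl(J(\widehat{\mathbf n};\theta)-J(\mathbf n^\star;\theta)\bigr)/N$, so it suffices to bound the excess true objective by the bracketed numerator. The route is the standard plug-in regret decomposition:
\begin{align*}
J(\widehat{\mathbf n};\theta)-J(\mathbf n^\star;\theta)
&=\bigl[J(\widehat{\mathbf n};\theta)-J(\widehat{\mathbf n};\hat\theta)\bigr]
+\bigl[J(\widehat{\mathbf n};\hat\theta)-J(\mathbf n^\star;\hat\theta)\bigr]
+\bigl[J(\mathbf n^\star;\hat\theta)-J(\mathbf n^\star;\theta)\bigr].
\end{align*}
The middle bracket is at most $\xi_{\mathrm{alg}}$, because $\mathbf n^\star\in\mathcal F$ gives $J(\mathbf n^\star;\hat\theta)\ge\min_{\mathcal F}J(\cdot;\hat\theta)$. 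The two outer brackets are each at most $\sup_{\mathbf n\in\mathcal F}|J(\mathbf n;\theta)-J(\mathbf n;\hat\theta)|$. So the main work is a uniform perturbation bound
\[
\sup_{\mathbf n\in\mathcal F}\bigl|J(\mathbf n;\theta)-J(\mathbf n;\hat\theta)\bigr|\le A_1\delta_\varepsilon(2\beta_{\max}+\delta_\varepsilon)+A_2\delta_\Delta .
\]
Doubling this bound and adding $\xi_{\mathrm{alg}}$ gives exactly the stated numerator.

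\emph{Variance term $J_3$.} This is linear in the $\Delta_{g_k}$. The difference is $\sum_{s,a}\alpha_3(s,a)\sum_k n_k(s,a)(\Delta_{g_k}-\hat\Delta_{g_k})/N(s,a)^2$. Using $\sum_k n_k\le N(s,a)$, it is bounded by $\delta_\Delta\sum_{s,a}\alpha_3(s,a)/N(s,a)$ over pairs with $N(s,a)>0$, and further by $\delta_\Delta\sum_{s,a}\alpha_3(s,a)$ since $N(s,a)\ge1$. I would take this quantity, or whatever appendix definition of $A_2$ dominates it, as $A_2$. Pairs with $N(s,a)=0$ contribute nothing because $W=0$ there.

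\emph{Bias terms $J_1-J_2$.} These are quadratic in the biases. $B(s,a)$ is a convex combination (with weights $n_k/N(s,a)$ summing to at most one) of the $\varepsilon_{g_k}$ and zeros. Hence $|B|\le\beta_{\max}$ and $|B-\hat B|\le\delta_\varepsilon$. On empty pairs the default bias is profile-free, so $B=\hat B$ there. Then
\[
|B^2-\hat B^2|=|B-\hat B|\,|B+\hat B|\le\delta_\varepsilon(2\beta_{\max}+\delta_\varepsilon).
\]
The same bound holds for $\mathcal B_s$, since $\sum_a\pi_e(a|s)=1$ makes it a convex combination of the $B(s,a)$. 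Summing gives
\[
|J_1-\hat J_1|+|J_2-\hat J_2|\le\Bigl(\sum_{s,a}\alpha_1(s,a)+\sum_s d_0(s)\Bigr)\delta_\varepsilon(2\beta_{\max}+\delta_\varepsilon),
\]
which identifies $A_1$. One could instead handle $J_1-J_2$ jointly as a per-context quadratic form, possibly for a sharper constant, but the crude triangle inequality already suffices.

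The step I expect to need most care is the bias bound. I need to confirm that $|\hat B|\le\beta_{\max}+\delta_\varepsilon$ uses only that $\beta_{\max}$ bounds the true biases. I also need to confirm that the empty-pair default and the conditioning on the pilot make $\theta$-independence of $C_0$ and of the default bias exact. If $\beta_{\max}$ is meant to bound the estimated biases too, the constant only improves.
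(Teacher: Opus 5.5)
The overall structure matches the paper's. You reduce to $J(\widehat{\mathbf n};\theta)-J(\mathbf n^\star;\theta)$ using that $C_0$ depends only on factual moments and policies. You then telescope; your three-term split is a harmless shortcut for the paper's four-term split through $\widehat{\mathbf n}^{\,\star}$. Finally you double a uniform perturbation bound.

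\textbf{Bias term.} The gap is that your perturbation step does not give the constants the statement uses. These are the appendix constants $A_1=\sum_{s,a}\alpha_1(s,a)$ and $A_2=\sum_x\alpha_3(x)\gamma_x$, with $\gamma_x=\max_{0\le m\le M_x}m/(f_x+m)^2$. Bounding $J_1$ and $J_2$ separately gives the coefficient $\sum_{s,a}\alpha_1(s,a)+\sum_s d_0(s)=A_1+1$ on $\delta_\varepsilon(2\beta_{\max}+\delta_\varepsilon)$. So your claim that the crude triangle inequality ``already suffices'' is false. Since $J_2$ enters with a minus sign, separate bounds must add the two magnitudes, and nothing absorbs the $J_2$ perturbation.

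The missing idea is the joint treatment you set aside. Write $J_1-J_2=Q(\boldsymbol B):=\sum_s\boldsymbol B_s^\top M_s\boldsymbol B_s$, where $M_s=d_0(s)\bigl[\operatorname{diag}(\pi_e(a|s)^2/\pi_b(a|s))_a-\boldsymbol\pi_e(s)\boldsymbol\pi_e(s)^\top\bigr]$. Each $M_s$ is positive semidefinite, since $z^\top M_s z=d_0(s)\operatorname{Var}_{a\sim\pi_b(\cdot|s)}[\pi_e(a|s)z_a/\pi_b(a|s)]\ge0$. With $\boldsymbol U_s=\hat{\boldsymbol B}_s-\boldsymbol B_s$, Cauchy--Schwarz in this seminorm gives $|Q(\boldsymbol B+\boldsymbol U)-Q(\boldsymbol B)|\le2\sqrt{Q(\boldsymbol B)Q(\boldsymbol U)}+Q(\boldsymbol U)$. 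Dropping the nonnegative rank-one term yields $Q(\boldsymbol B)\le A_1\beta_{\max}^2$ and $Q(\boldsymbol U)\le A_1\delta_\varepsilon^2$. The result is $A_1\delta_\varepsilon(2\beta_{\max}+\delta_\varepsilon)$ with the stated $A_1$.

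\textbf{Variance term.} This has the same problem in milder form. You loosen $m_x/N_x^2\le1/N_x\le1$ and end with $\sum_x\alpha_3(x)$, which the appendix $A_2$ does not dominate. For $f_x\ge1$ one has $\gamma_x\le1/(4f_x)<1$. Even your intermediate quantity $1/N_x$ equals $1/f_x>\gamma_x$ at $m_x=0$. Instead, stop at $\delta_\Delta\sum_x\alpha_3(x)\,m_x(\mathbf n)/N_x(\mathbf n)^2$ and maximize each ratio over the feasible counts $0\le m_x\le M_x$; that is exactly $A_2\delta_\Delta$.

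With these two repairs your argument coincides with the paper's Lemma~\ref{lem:estimated-source-perturbation} and the proof that follows it. As written, it proves the inequality only with the larger constants $A_1+1$ and $\sum_x\alpha_3(x)$ in place of $A_1$ and $A_2$.
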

The guarantee allows estimated profiles to change which sources or pairs receive annotations while controlling the resulting MSE loss.
As profile errors vanish, the bound reduces to $\xi_{\mathrm{alg}}/N$, distinguishing the gains from better source-profile estimates from those obtained by improving the allocation solver.
We examine these two effects in \S\ref{sec:experiments}; Appendix~\ref{app:profile_sensitivity} gives the proof.

\section{Annotation Value: Thresholds, Regimes, and Multi-Source Benefits}
\label{sec:annotation_value}

Before solving problem~\eqref{eq:optimization_problem}, we examine when counterfactual annotations are statistically valuable.
We first study a single context-action pair and source to derive the first-annotation threshold (\S\ref{sec:first_annotation_threshold}) and four regimes of annotation value (\S\ref{sec:four_regimes}).
These results motivate source selection and bias cancellation (\S\ref{sec:multi_source_help}), followed by an analysis of cross-action coupling in the full allocation problem (\S\ref{sec:local_to_full}).

\subsection{Local Single-Source Objective and First-Annotation Threshold}
\label{sec:first_annotation_threshold}

To gain actionable insight, we first focus on a single context-action pair $(s,a)$ with $n_f(s,a)>0$ and $\alpha_1(s,a)>0$, and a single annotation source $g$.
The case $n_f(s,a)=0$ is discussed in Appendix~\ref{sec:nf_zero}.
For ease of notation, we suppress the dependence on $(s,a)$ throughout and write $n_f:=n_f(s,a)$, $\varepsilon:=\varepsilon_g(s,a)$, $\Delta:=\Delta_g(s,a)$, $\sigma^2:=\sigma_r^2(s,a)$, $\sigma_g^2:=\sigma^2+\Delta$, $\eta:=\eta(s,a)$, and $\alpha_i:=\alpha_i(s,a)$ for $i=1,3$.
For $n$ annotations, the $(s,a)$ terms of $J_1+J_3$ give the local objective
\begin{equation}
    \phi(n)
    :=
    \frac{
        \alpha_1 n^2 \varepsilon^2
        +
        \alpha_3\bigl(n_f\sigma^2+n\sigma_g^2\bigr)
    }
    {(n_f+n)^2}.
    \label{eq:phi_main}
\end{equation}
We omit $J_2$ here because it couples actions within the same context, and revisit its effect in \S\ref{sec:local_to_full}.
By isolating this local scenario, we derive a threshold condition under which adding the first annotation does not increase the local objective \eqref{eq:phi_main}.

\begin{theorem}[First Annotation Threshold]
\label{thm:first_threshold_main}
The first annotation, corresponding to the transition $n=0$ to $n=1$, does not increase the local objective \eqref{eq:phi_main} if and only if
\begin{equation}
    \varepsilon^2(s,a) \leq \eta(s,a)\!\left(\frac{(n_f(s,a)+1)\sigma^2(s,a)}{n_f(s,a)} - \Delta(s,a)\right), \label{eq:first_thresh_main}
\end{equation}
where $\eta(s,a)=1-\pi_b(a\mid s)$.
In particular, if $\Delta(s,a) \geq (n_f(s,a)+1)\sigma^2/n_f(s,a)$, then $\delta\phi(0)>0$ for every (s,a) with $\varepsilon(s,a)\ne 0$.
\end{theorem}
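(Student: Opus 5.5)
Define $\delta\phi(0):=\phi(1)-\phi(0)$ and compute both values directly from \eqref{eq:phi_main}. At $n=0$ the bias term vanishes and $\phi(0)=\alpha_3 n_f\sigma^2/n_f^2=\alpha_3\sigma^2/n_f$. At $n=1$,
\[
\phi(1)=\frac{\alpha_1\varepsilon^2+\alpha_3\bigl(n_f\sigma^2+\sigma^2+\Delta\bigr)}{(n_f+1)^2}.
\]
The condition $\delta\phi(0)\le 0$ is then equivalent, after multiplying through by the positive quantity $n_f(n_f+1)^2$, to
\[
n_f\alpha_1\varepsilon^2+\alpha_3 n_f\bigl((n_f+1)\sigma^2+\Delta\bigr)\le \alpha_3(n_f+1)^2\sigma^2 .
\]

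Next I simplify the right-hand side minus the $\sigma^2$ part of the left. This gives $\alpha_3(n_f+1)\sigma^2\bigl[(n_f+1)-n_f\bigr]=\alpha_3(n_f+1)\sigma^2$, so the inequality becomes
\[
n_f\alpha_1\varepsilon^2\le \alpha_3\bigl((n_f+1)\sigma^2-n_f\Delta\bigr).
\]
Substituting $\alpha_3=\eta\alpha_1$ and dividing by $n_f\alpha_1>0$ (using the standing assumptions $n_f>0$ and $\alpha_1>0$) yields exactly \eqref{eq:first_thresh_main}. Every step is an equivalence because the multipliers are strictly positive, which gives the ``if and only if.''

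For the ``in particular'' clause, suppose $\Delta\ge (n_f+1)\sigma^2/n_f$. Since $\eta\ge0$, the right side of \eqref{eq:first_thresh_main} is then $\le 0$. If $\varepsilon\neq0$, the left side is strictly positive, so the inequality fails and $\delta\phi(0)>0$. In the edge case $\eta=0$ the right side is $0$ while $\varepsilon^2>0$, so the conclusion still holds.

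The only delicate points are bookkeeping. First, positivity of $\alpha_1$ and $n_f$ must be invoked so the divisions preserve the direction of the inequality. Second, the form of the threshold in \eqref{eq:first_thresh_main} must be matched by factoring $\eta$ out rather than leaving $\alpha_3/\alpha_1$. No other obstacle is expected, since the whole argument is a direct algebraic comparison of $\phi(1)$ with $\phi(0)$.
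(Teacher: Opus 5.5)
Your proof is correct and follows essentially the paper's argument. Both arguments reduce $\delta\phi(0)\le 0$ to $n_f\alpha_1\varepsilon^2\le\alpha_3\bigl[(n_f+1)\sigma^2-n_f\Delta\bigr]$ and then divide by $n_f\alpha_1>0$ using $\alpha_3=\eta\alpha_1$; the only difference is that the paper gets this numerator by setting $n=0$ in its general marginal-change formula (Lemma~\ref{lem:marginal}), whereas you compute $\phi(0)$ and $\phi(1)$ directly.
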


The threshold explains why the same source can be useful at one pair but not another: its bias is judged against the variance reduction available at that pair.
Greater factual coverage lowers the factor $(n_f+1)/n_f$, while greater excess variance reduces the remaining variance benefit; both make the first annotation harder to justify.
Appendix~\ref{pf:t1} gives the proof.

\subsection{Four Regimes of Annotation Value}
\label{sec:four_regimes}

The first-annotation threshold compares only $\phi(1)$ with $\phi(0)$.
To understand how annotation value evolves as more annotations are added, we now characterize the behavior of the local objective $\phi(n)$ over all annotation counts.
Although $n$ is integer-valued in the allocation problem, treating it as continuous reveals the marginal structure of $\phi(n)$.
Differentiating \eqref{eq:phi_main} gives $\phi'(n) = (\mathfrak{a}\,n + \mathfrak{b})/(n_f+n)^3$, where $\mathfrak{a} := 2\alpha_1\varepsilon^2 n_f - \alpha_3\sigma_g^2$ and $\mathfrak{b} := \alpha_3 n_f(\Delta - \sigma^2)$.
Thus, the sign of $\phi'(n)$ is determined by the affine term $\mathfrak a n+\mathfrak b$, yielding four possible regimes.

\begin{theorem}[Four Regimes of Annotation Value]
\label{thm:four_regimes_main}
In all cases, $\phi(n)\to\alpha_1\varepsilon^2$ as $n\to\infty$.
When $\mathfrak a=\mathfrak b=0$, $\phi$ is constant; otherwise, it falls into four regimes:
\begin{enumerate}[
    label=\textbf{(\Roman*)},
    leftmargin=2.5em,
    labelsep=0.5em,
    itemsep=2pt,
    topsep=2pt
]
    \item If $\mathfrak a\geq 0$ and $\mathfrak b\geq 0$, then $\phi$ is increasing, so annotations are locally harmful.
    
    \item If $\mathfrak a>0$ and $\mathfrak b<0$, then $\phi$ decreases and then increases, with unique continuous minimizer $n^* = \eta n_f(\sigma^2-\Delta) / (2\varepsilon^2 n_f-\eta\sigma_g^2)$.
    
    \item If $\mathfrak a\leq 0$ and $\mathfrak b\leq 0$, then $\phi$ is decreasing, so each annotation locally reduces the objective.

    \item If $\mathfrak a<0$ and $\mathfrak b>0$, then $\phi$ first increases and then decreases, with unique continuous maximizer $\bar n = \eta n_f(\Delta-\sigma^2) / (\eta\sigma_g^2-2\varepsilon^2 n_f)$.
    In this regime, a strict improvement over no annotation is possible if and only if $\varepsilon^2<\eta\sigma^2/n_f$.
\end{enumerate}
\end{theorem}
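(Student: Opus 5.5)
The proof works directly from the derivative formula $\phi'(n)=(\mathfrak a n+\mathfrak b)/(n_f+n)^3$ stated before the theorem; the derivative should be re-verified by the quotient rule. Since $n_f>0$, the denominator is positive on $n\ge 0$, so the sign of $\phi'$ is the sign of the affine function $L(n)=\mathfrak a n+\mathfrak b$ on $[0,\infty)$. The limit is immediate: dividing numerator and denominator of \eqref{eq:phi_main} by $n^2$, the $\alpha_3$ terms are $O(1/n)$ and the quotient tends to $\alpha_1\varepsilon^2$. If $\mathfrak a=\mathfrak b=0$, then $L\equiv0$ and $\phi$ is constant.

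The four regimes follow from a case analysis on the signs of $(\mathfrak a,\mathfrak b)$, excluding $\mathfrak a=\mathfrak b=0$.
\begin{enumerate}[label=(\Roman*),leftmargin=2.5em]
\item If $\mathfrak a,\mathfrak b\ge0$, then $L(n)\ge0$, and $L$ vanishes at most at the single point $n=0$. Hence $\phi$ is strictly increasing on $[0,\infty)$.
\item If $\mathfrak a\le0$ and $\mathfrak b\le0$, the symmetric argument shows $\phi$ is strictly decreasing.
\item If $\mathfrak a>0>\mathfrak b$, then $L$ has a unique root $n^*=-\mathfrak b/\mathfrak a>0$, is negative before it and positive after it. So $n^*$ is the unique minimizer.
\item If $\mathfrak a<0<\mathfrak b$, the root $\bar n=-\mathfrak b/\mathfrak a>0$ is the unique maximizer.
\end{enumerate}
For the explicit formulas, substitute $\alpha_3=\eta\alpha_1$ and cancel $\alpha_1>0$:
\[
-\mathfrak b/\mathfrak a=\eta n_f(\sigma^2-\Delta)/(2\varepsilon^2 n_f-\eta\sigma_g^2).
\]
Rearranging signs gives the stated expression for $\bar n$.

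The only nontrivial step is the final claim in regime (IV). There $\phi$ rises from $\phi(0)$ to $\phi(\bar n)$ and then decreases monotonically toward $\alpha_1\varepsilon^2$. So some $n>0$ satisfies $\phi(n)<\phi(0)$ if and only if $\lim_{n\to\infty}\phi(n)<\phi(0)$. Sufficiency holds because a strict limit inequality forces $\phi(n)<\phi(0)$ for large $n$. Necessity holds because on $[\bar n,\infty)$ the function decreases to its limit, so $\phi>\alpha_1\varepsilon^2\ge\phi(0)$ there, while on $(0,\bar n]$ we have $\phi>\phi(0)$ by strict increase. Now $\phi(0)=\alpha_3\sigma^2/n_f=\eta\alpha_1\sigma^2/n_f$, so the condition $\alpha_1\varepsilon^2<\phi(0)$ is exactly $\varepsilon^2<\eta\sigma^2/n_f$.

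The claim concerns real $n$, since $n$ is treated as continuous in this section. For integers, the same monotonicity yields an integer $n$ with improvement, because beyond $\bar n$ the values $\phi(n)$ decrease to the limit. The main care point is the strict versus non-strict endpoints, so strictness of monotonicity should be stated carefully in each case.
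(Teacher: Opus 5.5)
Your proposal is correct and follows essentially the same route as the paper: reduce to the sign of the affine numerator $\mathfrak a n+\mathfrak b$, split on the signs of $(\mathfrak a,\mathfrak b)$, and settle Regime~(IV) by comparing the limit $\alpha_1\varepsilon^2$ with $\phi(0)=\alpha_3\sigma^2/n_f$, which is exactly the paper's (IVa)/(IVb) sub-case argument. The only issues are cosmetic: your labels (II) and (III) are swapped relative to the theorem's numbering, and the paper's separate treatment of $\eta=0$ is not needed in your sign-based split, since then $\mathfrak b=0\le\mathfrak a$ and you are already in the constant case or Regime~(I).
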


Regime~(II) favors a finite count, so spending the entire available budget on one pair can be counterproductive.
In Regime~(IV), a sufficiently large batch improves on no annotation only if the limiting bias contribution satisfies $\alpha_1\varepsilon^2<\phi(0)$ (IVa); otherwise, $n=0$ remains best (IVb).
Table~\ref{tab:regimes} translates these distinctions into local allocation decisions, subject to the available budget and capacities; Appendix~\ref{pf:t2} gives the proof.

\begin{table}[!htbp]
\centering
\caption{Conditions and local allocation decisions for $\eta>0$, excluding the constant-objective case.}
\label{tab:regimes}
\begin{tabular}{clll}
\toprule
\textbf{Regime} & \textbf{Excess variance} & \textbf{Bias condition} & \textbf{Best local action} \\
\midrule
I
& $\Delta \geq \sigma^2$
& $\varepsilon^2 \geq \frac{\eta\sigma_g^2}{2n_f}$
& no annotations $(n=0)$ \\[4pt]

II
& $\Delta < \sigma^2$
& $\varepsilon^2 > \frac{\eta\sigma_g^2}{2n_f}$
& compare $\lfloor n^*\rfloor,\lceil n^*\rceil$, capped by feasibility \\[4pt]

III
& $\Delta \leq \sigma^2$
& $\varepsilon^2 \leq \frac{\eta\sigma_g^2}{2n_f}$
& add as many as feasible \\[4pt]

\multirow{2}{*}{IV}
& $\Delta > \sigma^2$
& a: $\varepsilon^2 < \frac{\eta\sigma^2}{n_f}$
& large batch needed to improve over $n=0$ \\[4pt]

& $\Delta > \sigma^2$
& b: $\frac{\eta\sigma^2}{n_f} \leq \varepsilon^2 < \frac{\eta\sigma_g^2}{2n_f}$
& no annotations $(n=0)$ \\
\bottomrule
\end{tabular}
\end{table}

\subsection{Source Selection and Bias Cancellation}
\label{sec:multi_source_help}

The local regimes reveal how access to multiple sources can improve allocation.
Because every single-source plan remains feasible under the same budget and capacities, the multi-source optimum is no worse than the best single-source optimum.
To identify where source selection can help, we apply the first-annotation threshold~\eqref{eq:first_thresh_main} to source $k$:
\begin{equation}
    \varepsilon_{g_k}(s,a)^2 \;\leq\; \eta(s,a)\!\left(\frac{(n_f(s,a)+1)\,\sigma_r^2(s,a)}{n_f(s,a)} - \Delta_{g_k}(s,a)\right).
    \label{eq:source_selection}
\end{equation}
Because factual coverage and source quality vary across pairs, condition~\eqref{eq:source_selection} can hold for different sources at different pairs.

Sources can also complement one another within a pair.
At a fixed total annotation count, mixing two sources with opposite-signed biases can reduce the magnitude of the reward-model bias $B(s,a;\mathbf n)=\sum_{k}n_k(s,a)\varepsilon_{g_k}(s,a)/N(s,a)$ below that obtained from either source alone.
The allocation must balance this reduction against source variances and costs.
When one source performs well across pairs and mixing offers little bias cancellation, gains over the best single source can be modest.

\subsection{From Local Thresholds to the Full Allocation Problem}
\label{sec:local_to_full}

The benefits of source selection and mixing must be evaluated through the full objective, which includes cross-action coupling through $J_2$.
For $\alpha_3>0$ and zero aggregate reward-model bias in context $s$, including $J_2$ changes the first-annotation threshold to $\varepsilon^2 \leq (n_f+1)\sigma^2/n_f-\Delta$, removing the factor $\eta$ from Theorem~\ref{thm:first_threshold_main} and permitting larger admissible bias when the right-hand side is positive.
More generally, the marginal effect of $J_2$ depends on the effective biases of all actions in the same context, while source mixing can reduce $B(s,a;\mathbf n)$ via bias cancellation but reshapes $\mathcal B_s(\mathbf n)$.
Together with the shared budget, this coupling motivates the joint allocation method in \S\ref{sec:multi_source}.

\section{Budget-Constrained Multi-Source Allocation}
\label{sec:multi_source}

We now solve the multi-source allocation problem~\eqref{eq:optimization_problem}.
Proposition~\ref{prop:objective_structure_main} formalizes why the local rules from \S\ref{sec:annotation_value} cannot be applied independently: the problem has cross-action coupling, nonconvexity, and integer constraints.

\begin{proposition}[Structure of the allocation objective]
\label{prop:objective_structure_main}
The objective $J(\mathbf n)$ in problem~\eqref{eq:optimization_problem} is nonnegative and bounded over all feasible allocations.
Moreover, there exist valid instances for which:
\begin{enumerate}[label=(\Roman*),leftmargin=2em,itemsep=1pt,topsep=2pt]
    \item $J$ is non-separable across actions within the same context.
    \item The continuous relaxation of $J$ is neither convex nor concave.
    \item $J$ is neither submodular nor supermodular over annotation allocations.
\end{enumerate}
\end{proposition}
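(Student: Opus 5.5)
Nonnegativity and boundedness come first, and both follow from Theorem~\ref{thm:objective}. Within each context, Jensen's inequality (or Cauchy--Schwarz) with weights $\pi_e(a\mid s)$ gives
\[
\mathcal B_s(\mathbf n)^2=\Bigl(\sum_a \pi_e(a\mid s)B(s,a;\mathbf n)\Bigr)^2\le \sum_a \frac{\pi_e(a\mid s)^2}{\pi_b(a\mid s)}B(s,a;\mathbf n)^2 .
\]
The inequality holds because $\sum_a\pi_b(a\mid s)=1$ and $\pi_b>0$ on the support of $\pi_e$. Multiplying by $d_0(s)$ and summing over $s$ yields $J_2\le J_1$, so $J_1-J_2\ge0$; since also $J_3\ge0$, we get $J\ge0$. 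This matches the interpretation of $J/N$ as the allocation-dependent part of a variance.

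For boundedness, $|B(s,a;\mathbf n)|\le\max\bigl(\max_k|\varepsilon_{g_k}(s,a)|,\,|r_0-\mu_r|\bigr)$, because $B$ is a convex combination of the biases (with factual weight zero). Likewise $W(s,a;\mathbf n)\le\max_k(\sigma_r^2+\Delta_{g_k})/N(s,a)\le \max_k(\sigma_r^2+\Delta_{g_k})$ whenever $N(s,a)\ge1$. The sums over $s,a$ are finite, so $J$ is bounded; the feasible set is in any case finite.

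For (I)--(III) I would exhibit small explicit instances, each with one context, two actions, and one or two sources. For \textbf{(I) non-separability}, $J_2$ contains the cross term $-2d_0(s)\pi_e(a_1\mid s)\pi_e(a_2\mid s)B(s,a_1)B(s,a_2)$. If both actions have $\pi_e>0$ and nonzero biases, the mixed difference $J(1,1)-J(1,0)-J(0,1)+J(0,0)$ is nonzero. Because $J_1$ and $J_3$ separate across actions, this difference equals just the $J_2$ cross contribution, and it is easy to evaluate in closed form, for example with $n_f=1$ and $\varepsilon_1=\varepsilon_2=1$.

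For \textbf{(II)}, the relaxation fails to be convex and fails to be concave already in the one-dimensional local function $\phi$ of \eqref{eq:phi_main}, which is what $J$ reduces to when there is a single action with $\pi_e=1$ and $\eta>0$. In that case $J_2$ involves $\pi_e^2 B^2$ with $d_0$ weight, and I would recompute it to confirm it only rescales the coefficients. By Theorem~\ref{thm:four_regimes_main}, $\phi$ tends to a finite limit. In regime~(III) $\phi$ decreases to that limit, so it must be convex for large $n$. I would then pick parameters in regime~(IV) so that $\phi$ has an interior maximum, which forces concavity near $\bar n$. Convexity and concavity are therefore both violated. A cleaner option is to compute $\phi''$ directly: it is a quadratic over $(n_f+n)^4$, and I would choose parameters that make it change sign.

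For \textbf{(III)}, I would compare marginal gains of the single-pair function of one variable under the lattice definition. A univariate integer function is submodular iff its differences $\phi(n+1)-\phi(n)$ are nonincreasing, and supermodular iff they are nondecreasing. In regime~(II) the differences go from negative to positive, which is increasing near $n^*$. The limit behaviour forces them to become decreasing eventually, because they tend to $0$ from above. So $\phi$ is neither. To state the result on set-valued allocations, I would also give the two-action cross-difference from (I), with signs chosen via the bias signs. Opposite biases make the cross term push one way, and equal biases push the other, which gives violations of both inequalities.

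The main obstacle is pinning down which notion of submodularity is intended and producing clean numeric witnesses. My plan is to commit to the integer-lattice definition and verify each example by direct evaluation at small counts.
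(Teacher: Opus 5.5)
Your nonnegativity, boundedness and (I) arguments match the paper's. One correction for nonnegativity: the Cauchy--Schwarz/Jensen weights must be $\pi_b(a\mid s)$, applied to $(\pi_e/\pi_b)B$, not $\pi_e$. For (I), the paper's witness is exactly your $n_f=1$, $\varepsilon_1=\varepsilon_2=1$ mixed difference, which equals $-1/8$.

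\textbf{The genuine gap is in (III).} You commit to the lattice definition $J(\mathbf x)+J(\mathbf y)\ge J(\mathbf x\wedge\mathbf y)+J(\mathbf x\vee\mathbf y)$. Under it, a function of a single integer coordinate is automatically both submodular and supermodular: any two points of a chain are comparable, so the inequality is an identity. Your criterion ``submodular iff the differences $\phi(n+1)-\phi(n)$ are nonincreasing'' is discrete concavity (DR-submodularity), not lattice submodularity. So your regime-(II) argument is really a discrete version of (II) and proves nothing about (III).

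Lattice violations can only come from mixed differences $J(\mathbf n+\mathbf e_i+\mathbf e_j)-J(\mathbf n+\mathbf e_i)-J(\mathbf n+\mathbf e_j)+J(\mathbf n)$ over distinct coordinates $i\ne j$. For coordinates at different actions only $-J_2$ contributes, and the difference equals $-2d_0(s)\pi_e(a_i\mid s)\pi_e(a_j\mid s)\,\Delta B_i\,\Delta B_j$. Here $\Delta B_i$ is the one-step increment of $B(s,a_i)$; with one source and $n_f>0$ it is $n_f\varepsilon_i/(N(N+1))$, which has the sign of $\varepsilon_i$ at every base point.

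Hence a two-action, one-source instance violates only one of the two inequalities. Your ``equal biases push one way, opposite biases the other'' therefore yields two different instances, whereas the statement needs a single instance failing both. The paper puts both sign patterns into one context with three actions and biases $(1,1,-1)$: $\pi_b=\pi_e=1/3$, $n_f=1$, $\sigma_r^2=1$, $\Delta=0$, $\kappa=2$, so $\mathbf e_i+\mathbf e_j$ is feasible. It obtains $J$-mixed differences $-1/18$ for actions $(1,2)$ and $+1/18$ for $(1,3)$ at $\mathbf 0$. Two sources with opposite biases at one action would also work.

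\textbf{Part (II) has the same single-instance slip.} As written, non-concavity comes from a regime-(III) instance and non-convexity from a regime-(IV) instance. You can fix this in either of two ways:
\begin{itemize}
\item Stay within one regime. In regime (IV), $\phi''<0$ at $\bar n$ and $\phi''>0$ beyond $n_f/2+3\mathfrak b/(2|\mathfrak a|)$.
\item Use your direct route. Note that the numerator of $\phi''$ is affine, $\mathfrak a(n_f-2n)-3\mathfrak b$, not quadratic.
\end{itemize}
Either way, the sign change must lie inside the \emph{feasible} interval, because capacities bound $n$; limits as $n\to\infty$ are not enough on their own.

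Also, a literal single action forces $\pi_b=1$, so $\eta=0$ and the capacity $N_s-n_f$ is zero. The paper instead uses two actions with $\pi_b=1/2$ each, $\pi_e(a_1\mid s)=1$, $\pi_e(a_2\mid s)=0$ and $n_f(s,a_2)=2n_f$. Then $J=\alpha_3(B^2+W)$ along $n_1(s,a_1)\in[0,2n_f]$. Writing $E,D$ for the source's bias and excess variance at $(s,a_1)$, it shows $J''(0)>0>J''(n_f)$ whenever $0\le D\le\sigma^2/2$ and $E^2>(2\sigma^2-D)/n_f$.
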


An annotation's full value depends on the other allocations in its context, so independent pairwise decisions need not produce a good joint allocation.
We use majorization-minimization (MM)~\citep{hunter2004tutorial} to replace the cross-action coupling by a locally tight upper surrogate while retaining the exact per-pair bias-variance structure.
Feasible surrogate-improving updates then guarantee monotone descent.
Appendices~\ref{sec:appendix_properties} and~\ref{app: mm} give the structural proofs and descent analysis, respectively.

\subsection{Majorization-Minimization Surrogate}
\label{sec:mm_surrogate}

We optimize the objective $J$ using an iterative MM scheme.
Starting from a feasible allocation $\mathbf n^{(0)}$ (zero or a greedy allocation), each MM iteration constructs an upper bound around $\mathbf n^{(t)}$ and seeks a feasible update that improves this surrogate.
The term $-J_2$ is the only part of $J$ that couples actions within the same context through $\mathcal B_s(\mathbf n)^2$.
We majorize this coupling term, turning it into an affine function of $\mathcal B_s(\mathbf n)$ while leaving $J_1$ and $J_3$ unchanged.
The following proposition shows that this replacement yields an upper bound that is exact at the current allocation.

\begin{proposition}[Tight majorization of $J$]
\label{prop:mm_majorant_main}
Fix a feasible allocation $\mathbf n^{(t)}$ at iteration $t$ and let $\mathcal B_s^{(t)}:=\mathcal B_s(\mathbf n^{(t)})$.
Define $ \tilde J_{\mathrm{full}}(\mathbf n;\mathbf n^{(t)}) = J_1(\mathbf n)+J_3(\mathbf n) + \sum_{s\in\mathcal S} d_0(s) \bigl[ (\mathcal B_s^{(t)})^2 - 2\mathcal B_s^{(t)}\mathcal B_s(\mathbf n) \bigr]. $
Then $\tilde J_{\mathrm{full}}$ is a tight upper bound on $J$: $ \tilde J_{\mathrm{full}}(\mathbf n;\mathbf n^{(t)})-J(\mathbf n) = \sum_{s\in\mathcal S} d_0(s) \bigl( \mathcal B_s(\mathbf n)-\mathcal B_s^{(t)} \bigr)^2 \ge 0. $
In particular, $\tilde J_{\mathrm{full}}(\mathbf n^{(t)};\mathbf n^{(t)}) =J(\mathbf n^{(t)})$.
\end{proposition}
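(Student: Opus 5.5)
The proof is a direct algebraic comparison, since $\tilde J_{\mathrm{full}}$ and $J$ share the terms $J_1(\mathbf n)+J_3(\mathbf n)$ exactly. By the definition in Theorem~\ref{thm:objective}, $J(\mathbf n)=J_1(\mathbf n)-J_2(\mathbf n)+J_3(\mathbf n)$ with $J_2(\mathbf n)=\sum_s d_0(s)\mathcal B_s(\mathbf n)^2$. Subtracting, the $J_1$ and $J_3$ contributions cancel identically for every $\mathbf n$, leaving
\[
\tilde J_{\mathrm{full}}(\mathbf n;\mathbf n^{(t)})-J(\mathbf n)
=\sum_{s\in\mathcal S} d_0(s)\Bigl[(\mathcal B_s^{(t)})^2-2\mathcal B_s^{(t)}\mathcal B_s(\mathbf n)+\mathcal B_s(\mathbf n)^2\Bigr].
\]
The bracket is the perfect square $\bigl(\mathcal B_s(\mathbf n)-\mathcal B_s^{(t)}\bigr)^2$, which gives the stated identity.

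Nonnegativity then follows because $d_0(s)\ge 0$ is a probability mass and each summand is a square. Tightness follows by substituting $\mathbf n=\mathbf n^{(t)}$: every square vanishes because $\mathcal B_s(\mathbf n^{(t)})=\mathcal B_s^{(t)}$ by definition. This is the usual tangent-line majorization of the concave map $x\mapsto -x^2$, applied coordinatewise in $x=\mathcal B_s$.

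The only point needing care, and the closest thing to an obstacle, is that everything be well defined for all allocations, including pairs with $N(s,a)=0$. There $B(s,a;\mathbf n)$ is set to the fixed value $r_0(s,a)-\mu_r(s,a)$, so each $\mathcal B_s(\mathbf n)$ is still a finite real number. The identity is purely pointwise in the real values $\mathcal B_s(\mathbf n)$ and $\mathcal B_s^{(t)}$, so it does not depend on how $B$ varies with $\mathbf n$. No convexity, feasibility, or integrality is used. The claim therefore holds for every $\mathbf n$, and in particular on $\mathcal F$, which is the property needed for the monotone-descent argument of the MM scheme.
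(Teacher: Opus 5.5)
Your proposal is correct and matches the paper's argument: the paper also cancels $J_1+J_3$ and completes the square to get the gap identity $\sum_s d_0(s)\bigl(\mathcal B_s(\mathbf n)-\mathcal B_s^{(t)}\bigr)^2$, with tightness at $\mathbf n=\mathbf n^{(t)}$. The only difference is ordering: the paper first derives the upper bound from the tangent inequality $-x^2\le x_0^2-2x_0x$ for the concave map and then computes the gap, whereas you read nonnegativity directly off the perfect square.
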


Tightness at the current allocation ensures that a feasible update improving the surrogate cannot increase $J$.
For computation, we drop the allocation-independent constant $\sum_s d_0(s)(\mathcal B_s^{(t)})^2$, obtaining the equivalent surrogate
\begin{align}
\tilde J(\mathbf n;\mathbf n^{(t)})
=
J_1(\mathbf n)+J_3(\mathbf n)
-
2\sum_{s\in\mathcal S} d_0(s)
\mathcal B_s^{(t)}\mathcal B_s(\mathbf n).
\label{eq:surrogate_def}
\end{align}
Substituting $\mathcal B_s(\mathbf n)=\sum_a\pi_e(a|s)B(s,a;\mathbf n)$ into \eqref{eq:surrogate_def} shows that $\tilde J$ is additively separable in $(s,a)$.
Each iteration therefore searches for an improving allocation using these separable costs under the original constraints, preserving descent in $J$.
Appendix~\ref{app: mm} gives the majorization proof.

\subsection{Solving the Surrogate Allocation Problem}
\label{sec:surrogate_solver}

We now describe the computational step at a fixed MM iteration $t$.
We write $\tilde J(\mathbf n;\mathbf n^{(t)}) =\sum_{s,a}\tilde J_{s,a}^{(t)}(\mathbf n_{s,a})$, where $\tilde J_{s,a}^{(t)}$ denotes the summand of $\tilde{J}$ specific to the context-action pair $(s,a)$, and $\mathbf n_{s,a}:=(n_1(s,a),\ldots,n_K(s,a))$ is the source-count vector.
The remaining challenge is to handle the shared budget and capacity constraints.

Let $C(\mathbf n):=\sum_{s,a,k}c_k n_k(s,a)$ denote the total annotation cost.
For a fixed multiplier $\lambda\ge0$, we minimize the penalized surrogate $\tilde J(\mathbf n;\mathbf n^{(t)})+\lambda C(\mathbf n)$, while retaining the per-context and context-action capacity constraints.
Thus, the penalized problem decomposes by context.
For each context-action pair $(s,a)$ and each feasible annotation count $m\in\{0,\ldots,N_s-n_f(s,a)\}$, define $ H_{s,a}^{(t,\lambda)}(m) := \min_{\mathbf z\in\mathbb Z_+^K:\,\mathbf 1^\top \mathbf z=m} \bigl\{ \tilde J_{s,a}^{(t)}(\mathbf z) + \lambda\sum_{k=1}^K c_k z_k \bigr\}. $
Here, $\mathbf z=(z_1,\ldots,z_K)$ specifies how the $m$ annotations for $(s,a)$ are split across sources, and the local surrogate is evaluated with $N(s,a)=n_f(s,a)+m$.
Thus, $H_{s,a}^{(t,\lambda)}(m)$ is the best penalized surrogate value among all source mixtures that assign exactly $m$ annotations to $(s,a)$.
Exact enumeration over all source-count vectors $\mathbf z$ yields a certified solution for $H_{s,a}^{(t,\lambda)}(m)$, but the cost grows combinatorially with $K$.
We instead obtain approximate values $H^\sharp_{s,a}(m,\lambda)$ by pairwise-exchange local search, which reallocates one annotation between sources at a time to reduce the penalized surrogate while keeping $m$ fixed.
Proposition~\ref{prop:greedy_opt} in Appendix~\ref{sec:C2} bounds the resulting approximation error.

Given $H_{s,a}^{(t,\lambda)}(m)$, it remains to choose how many annotations each context-action pair receives within context $s$.
Let $m_{s,a}:=\sum_{k=1}^K n_k(s,a)$ denote the total number of annotations assigned to $(s,a)$, aggregated over sources.
For each context, the remaining problem is
\begin{align*}
\min_{\{m_{s,a}\}_{a\in\mathcal A}}
\sum_{a\in\mathcal A}H_{s,a}^{(t,\lambda)}(m_{s,a})
\quad
\text{s.t.}\quad
\sum_{a\in\mathcal A}m_{s,a}\le N_s,\quad
m_{s,a}\in\{0,\ldots,N_s-n_f(s,a)\}, ~\forall a\in\mathcal{A}.
\end{align*}
We solve the aggregation by dynamic programming (DP), using $H^\sharp$ in place of $H$ and allowing unused context capacity.
DP is exact for these tables; the source mixtures remain approximate.

We search over $\lambda$ by bisection.
Each trial value of $\lambda$ returns a candidate allocation from the penalized problem above, computed by local source search followed by DP across actions within each context.
Among the candidates satisfying $C(\mathbf n)\le\kappa$, together with the current allocation $\mathbf n^{(t)}$, we select the one with the smallest surrogate value $\tilde J(\cdot;\mathbf n^{(t)})$.
Because $\mathbf n^{(t)}$ is included as a fallback, the selected update $\mathbf n^{(t+1)}$ satisfies $\tilde J(\mathbf n^{(t+1)};\mathbf n^{(t)}) \le \tilde J(\mathbf n^{(t)};\mathbf n^{(t)})$ and hence $J(\mathbf n^{(t+1)})\le J(\mathbf n^{(t)})$.
We assess allocation quality by benchmarking against exact solvers.
The evaluation of the empirical optimality gaps is detailed in \S\ref{sec:experiments}.
We next bound the computational cost of an MM update.

\begin{proposition}[Runtime per MM update]
\label{prop:complexity_main}
Let $T_{\mathrm{Bis}}$ denote the total number of multiplier trials and let $T_{\mathrm{swap}}\ge1$ bound the number of annotation reassignments per local search.
One MM update costs \(\mathcal O\!\bigl(T_{\mathrm{Bis}}\,|\mathcal{S}|\,|\mathcal{A}|\,(K^2 N_{\max}\,T_{\mathrm{swap}}+N_{\max}^{2})\bigr)\), where $N_{\max}:=\max_s N_s\ge1$.
\end{proposition}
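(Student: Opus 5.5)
The bound follows by charging the cost of each of the three stages of one MM update separately and adding them: building the surrogate tables $H^\sharp$, running the within-context DP, and repeating both across multiplier trials. The key structural input is Proposition~\ref{prop:mm_majorant_main} together with the separability of $\tilde J$ in \eqref{eq:surrogate_def}. Once the scalars $\mathcal B_s^{(t)}$ are fixed, $\tilde J_{s,a}^{(t)}(\mathbf z)$ depends only on $\mathbf z$, $n_f(s,a)$, and the profile parameters at $(s,a)$. It can therefore be evaluated, or updated after a one-unit change of $\mathbf z$, in $\mathcal O(K)$ time, or $\mathcal O(1)$ if running sums $\sum_k z_k\varepsilon_{g_k}$ and $\sum_k z_k\Delta_{g_k}$ are maintained. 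Computing all $\mathcal B_s^{(t)}$ at the start of the update costs $\mathcal O(|\mathcal S||\mathcal A|K)$, which is dominated by the final bound.

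\textbf{Step 1: the source tables.} Fix $\lambda$ and a pair $(s,a)$. The table $H^\sharp_{s,a}(m,\lambda)$ must be filled for every $m\in\{0,\ldots,N_s-n_f(s,a)\}$, which is at most $N_{\max}+1$ values. For each $m$, the pairwise-exchange local search performs at most $T_{\mathrm{swap}}$ reassignments. Each reassignment scans the $\mathcal O(K^2)$ ordered source pairs $(k,k')$ and evaluates the change in penalized surrogate from moving one annotation from $k$ to $k'$; with the running sums this change costs $\mathcal O(1)$. The initialization (for example, putting all $m$ annotations on the cheapest source, or warm-starting from the solution at $m-1$) costs at most $\mathcal O(K)$, which is absorbed since $T_{\mathrm{swap}}\ge 1$. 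This gives $\mathcal O(K^2T_{\mathrm{swap}})$ per entry, $\mathcal O(K^2N_{\max}T_{\mathrm{swap}})$ per pair, and $\mathcal O(|\mathcal S||\mathcal A|K^2N_{\max}T_{\mathrm{swap}})$ over all pairs.

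\textbf{Step 2: the DP.} For each context $s$, the DP processes actions one at a time. Its state is the capacity used so far, $c\in\{0,\ldots,N_s\}$, and each transition chooses $m_{s,a}$ in the allowed range. That is $\mathcal O(N_{\max}^2)$ work per action, hence $\mathcal O(|\mathcal A|N_{\max}^2)$ per context. Recovering the minimum over $c\le N_s$ and backtracking are linear in $|\mathcal A|$ and $N_{\max}$. Summing over contexts gives $\mathcal O(|\mathcal S||\mathcal A|N_{\max}^2)$. Assembling the candidate allocation, checking $C(\mathbf n)\le\kappa$, and evaluating its surrogate value each take $\mathcal O(|\mathcal S||\mathcal A|K)$, which is again dominated.

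\textbf{Step 3: the multiplier trials.} Steps~1 and~2 are repeated for each of the $T_{\mathrm{Bis}}$ multiplier trials. The final selection among at most $T_{\mathrm{Bis}}+1$ candidates, including the fallback $\mathbf n^{(t)}$, costs $\mathcal O(T_{\mathrm{Bis}}|\mathcal S||\mathcal A|K)$. Adding everything yields the stated bound.

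The main obstacle is the $\mathcal O(1)$ incremental evaluation in Step~1; if each surrogate evaluation instead cost $\Theta(K)$, the bound would acquire an extra factor of $K$. The fix is to note that, for fixed $m$, the quantities $N(s,a)$, $B$, $W$, and the cost term depend on $\mathbf z$ only through the sums $\sum_k z_k\varepsilon_{g_k}$, $\sum_k z_k\Delta_{g_k}$, and $\sum_k z_k c_k$. Each of these changes by a closed-form amount under a unit swap, so every swap candidate can be scored in constant time.
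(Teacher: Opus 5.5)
Your proposal is correct and follows essentially the same argument as the paper. Its steps are: $\mathcal O(K^2T_{\mathrm{swap}})$ per pair and count for the local source search, using the maintained sums $\sum_k z_k\varepsilon_{g_k}$, $\sum_k z_k\Delta_{g_k}$, $\sum_k c_kz_k$; at most $N_{\max}+1$ counts per pair; $\mathcal O(|\mathcal A|N_{\max}^2)$ per context for the DP; a factor $T_{\mathrm{Bis}}$ for the multiplier trials; and dominated $\mathcal O(|\mathcal S||\mathcal A|K)$ candidate comparisons. The only detail you skip is that Algorithm~\ref{alg:subproblem} also records the residual swap gain $\delta_{s,a}(m,\lambda)$, which needs one $\mathcal O(K^2)$ scan beyond the accepted swaps; since $T_{\mathrm{swap}}\ge1$, this is absorbed and the bound is unchanged.
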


The terms $K^2 N_{\max}T_{\mathrm{swap}}$ and $N_{\max}^{2}$ correspond to local source search and DP aggregation, respectively.
With the other parameters fixed, the runtime bound grows linearly with the number of context-action pairs.
Appendices~\ref{app:complexity} and~\ref{sec: pseudo} give the proof and pseudocode.

\section{Numerical Experiments}
\label{sec:experiments}

We evaluate \textbf{1)} how source quality affects annotation value, \textbf{2)} how MM compares with greedy and solver benchmarks, and \textbf{3)} how profile-estimation error affects final OPE performance.
For comparisons with fixed source profiles, we report scaled total MSE of the estimator~\eqref{eq:multi-source_OPE_estimator}, $\mathcal V(\mathbf n)=N\,\mathrm{MSE}(\mathbf n)=NC_0+J(\mathbf n)$.
Full experimental details are deferred to Appendix~\ref{sec:appendix_assist}.

\subsection{Annotation Value and Synthetic Validation}
\label{sec:exp_synthetic}

We construct a synthetic clinical bandit with $100$ contexts, $5$ actions, $M=1{,}000$ fitting records, and budget $\kappa=10{,}000$.
Detailed environment settings are in Appendix~\ref{app:syn_exp}.

To illustrate the four regimes in Theorem~\ref{thm:four_regimes_main}, we construct four sources combining low or high bias with low or high excess variance (Table~\ref{tab:annotators}).
For each source $k$ and pair $(s,a)$, we independently draw bias and excess variance from normal distributions parameterized by $\varepsilon_k$ and $\Delta_k$, respectively.

\begin{table}[!htbp]
\centering
\small
\caption{Configuration of annotation sources.}
\label{tab:annotators}
\begin{tabular}{clccc}
\toprule
\textbf{Source ID} & \textbf{Type} & \textbf{Bias ($\varepsilon_k$)} & \textbf{Excess Variance ($\Delta_k$)} & \textbf{Cost ($c_k$)} \\
\midrule
1 & Low Bias Low Variance & 2.0 & 6.0 & 20.0 \\
2 & Low Bias High Variance & 2.0 & 48.0 & 5.0 \\
3 & High Bias Low Variance & 5.0 & 3.0 & 10.0 \\
4 & High Bias High Variance & 5.0 & 48.0 & 1.0 \\
\bottomrule
\end{tabular}
\end{table}

We activate one source at a time and randomly allocate increasing numbers of its annotations to examine how scaled MSE changes.
We average over $100$ independently sampled environments.

Figure~\ref{fig:four_regimes} matches the predicted regimes: Source~1 decreases monotonically (Regime~III), Source~2 first increases and then decreases (Regime~IV), Source~3 decreases and then increases (Regime~II), and Source~4 increases monotonically (Regime~I).
The decomposition shows how variance reduction ($J_3$) competes with bias accumulation ($J_1$) and correction ($J_2$).

\begin{figure}[!htbp]
\centering
\includegraphics[width=\linewidth]{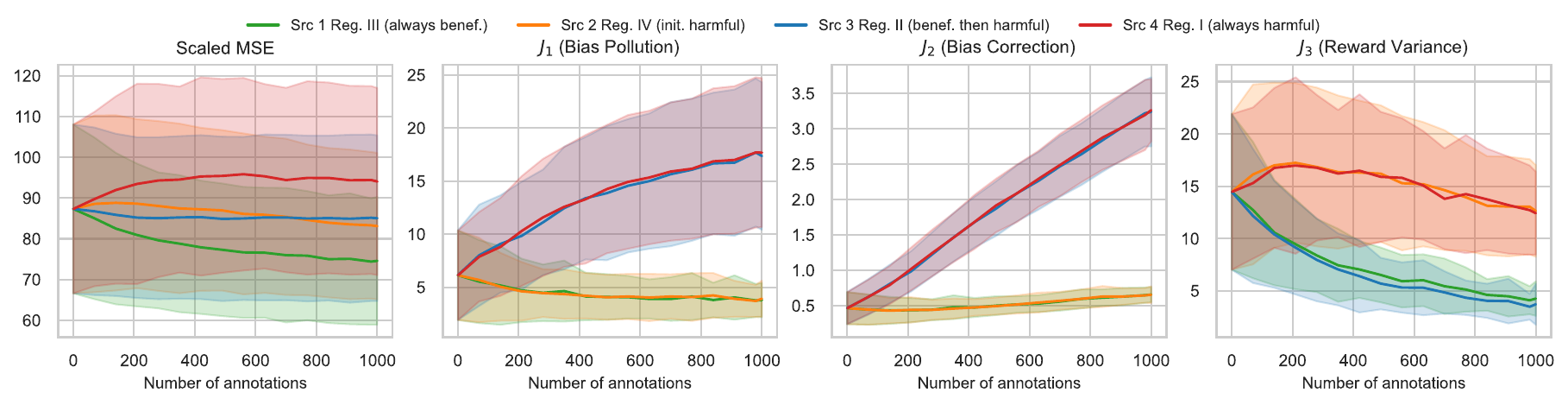}
\caption{Scaled mean squared error (MSE) $\mathcal V$ and the allocation-dependent components $J_1$, $J_2$, and $J_3$ versus the number of annotations for the four annotation sources in Table~\ref{tab:annotators}.
Shaded bands show $\pm 1$ standard deviation over $100$ environments.}
\label{fig:four_regimes}
\end{figure}

We next examine how the set of available annotation sources affects allocation quality.
We evaluate all $2^K-1=15$ nonempty source subsets $\mathcal{K}_{\mathrm{sub}}\subseteq\{1,2,3,4\}$ from Table~\ref{tab:annotators}.
For multi-source subsets, we run the proposed MM algorithm in \S\ref{sec:multi_source} restricted to the active sources.
For single-source subsets, we use a uniformly random allocation (assigning annotations uniformly across eligible logged samples and counterfactual actions until the budget or per-sample capacity is reached) as a non-optimized baseline; the corresponding optimized single-source results are reported in Appendix~\ref{app:syn_optimized}.
All runs use the same environment, factual data, and budget $\kappa=10{,}000$.

Figure~\ref{fig:source_combos} shows that the four-source setting achieves the lowest scaled MSE among the evaluated subsets, illustrating the potential benefit of source selection and mixing discussed in \S\ref{sec:multi_source_help}.
With fixed source profiles, the four-source MM allocation reduces scaled MSE from $\mathcal V(\mathbf 0)=84.23$ to $66.90$, corresponding to a $20.58\%$ MSE reduction relative to no annotation.
Full convergence traces are provided in Appendix~\ref{app:syn_convergence}.

\begin{figure}[!htbp]
\centering
\includegraphics[width=\linewidth]{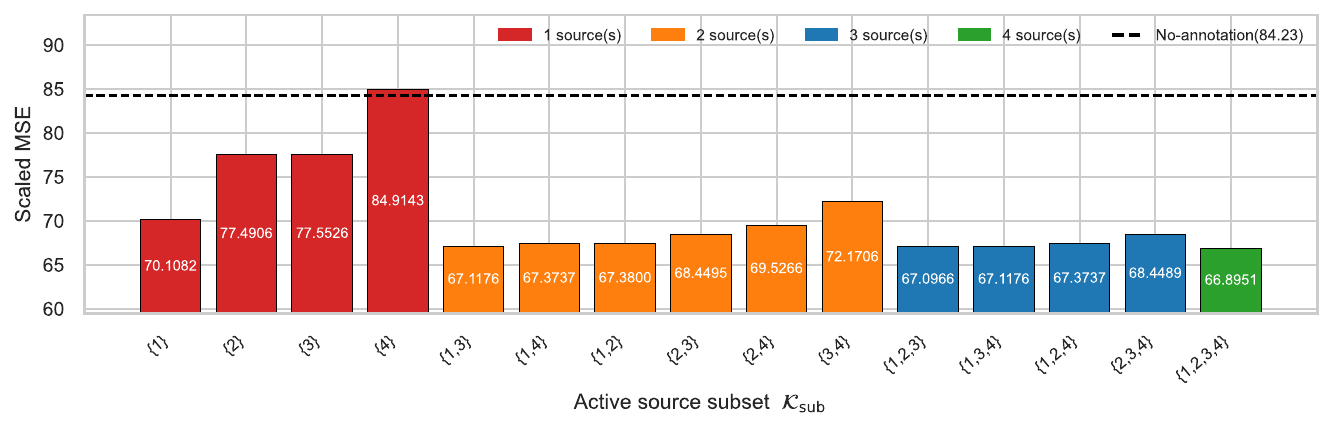}
\caption{Scaled mean squared error (MSE) $\mathcal V(\mathbf n)$ for each active source subset.
Bars are grouped by the number of available sources; the dashed line is the no-annotation baseline $\mathcal V(\mathbf 0)=84.23$.}
\label{fig:source_combos}
\end{figure}

\subsection{Baselines and Solver Benchmarks}
\label{sec:assist_experiment}

To compare allocation methods with realistic annotation sources, we also construct a semi-synthetic education bandit from $6.1$ million ASSISTments student--problem interactions~\citep{feng2009assistments}.
Contexts are $|\mathcal S|=100$ K-means clusters of student answer history and affect features, including past correct rate, hint usage, and detector confidences for frustration, confusion, concentration, and boredom.
Actions are the $|\mathcal A|=10$ \text{problem\_type}$\times$\text{original} combinations.

Following the feature-prediction approach of APRIL~\citep{mandyam2025april}, each LLM predicts the student's first-response time $t$, which is converted to reward through $R(s,a,t)=R_{\max}\cdot P_{\mathrm{IRT}}(s,a)\cdot\mathrm{TimeEff}(t,\tilde t_a)$.
Here, $R_{\max}=30$ and $P_{\mathrm{IRT}}(s,a)=\sigma(\theta_s-b_a)$ is a one-parameter logistic item response theory (IRT) model~\citep{IRT}, with context-level ability $\theta_s$ and action difficulty $b_a$.
The factor \(\mathrm{TimeEff}(t,\tilde t_a) = \max\{0.3,\,1-0.4|\log_{10}(t/\tilde t_a)|\}\) discounts responses that are much faster or slower than action $a$'s empirical median response time $\tilde t_a$.

We use Gemini-2.5-Flash, GPT-4o-mini, and Claude-Haiku-4.5 as annotation sources, collecting $20$ predictions per source and $(s,a)$ to estimate $\varepsilon_{g_k}(s,a)$ and $\Delta_{g_k}(s,a)$.
Appendix~\ref{app:assist_exp} details the data processing, reward construction, LLM prompts, and MM algorithm results.

For the baselines, Greedy (max-gain) repeatedly selects the feasible annotation with largest positive marginal gain, while Greedy (cost-aware) uses gain per unit cost.
We compare these methods with zero-initialized MM and MM initialized from the better greedy allocation on the ASSISTments, original synthetic, and I/IV-focused synthetic environments.
The I/IV-focused environment tests allocation with locally harmful or initially harmful sources, for which single-annotation gains can be uninformative.
Within each environment, all methods use the same budget, capacities, source costs, and source profiles.

\begin{table}[!htbp]
\begin{center}
\small
\setlength{\tabcolsep}{3pt}
\caption{Scaled MSE of allocations from MM and greedy baselines.}
\label{tab:greedy_stress}
\begin{tabular}{l|ccccc}
\hline
Setting & No ann. & Greedy (cost-aware) & Greedy (max-gain) & MM & MM (greedy init)\\
\hline
ASSISTments & 88.608 & 79.198 & 79.707 & \textbf{79.062} & \textbf{79.062}\\
Original synthetic & 84.234 & 69.275 & 67.078 & 66.901 & \textbf{66.8075}\\
I/IV-focused synthetic & 84.234 & 72.644 & 71.718 & \textbf{70.663} & 70.720\\
\hline
\end{tabular}
\end{center}
\end{table}

Table~\ref{tab:greedy_stress} shows that MM or greedy-initialized MM achieves the lowest MSE in each environment, with reductions of $10.77\%$, $20.68\%$, and $16.11\%$ relative to no annotation.
On the I/IV-focused instance, MM improves on both greedy baselines, illustrating the value of jointly allocating annotations beyond single-step gains.

To assess how close these allocations are to a global optimum, we construct $30$ small instances with $5$ contexts, $3$ actions, and $25$ logged samples.
For each instance, we enumerate the feasible context-level choices and use Gurobi to solve the resulting multiple-choice knapsack mixed-integer linear program to certified global optimality.
Table~\ref{tab:opt gap} shows that greedy-initialized MM has the lowest mean optimality gap.
On the full Synthetic and ASSISTments instances, it reaches within $0.2807\%$ and $0.0050\%$ of the Gurobi lower bounds, respectively (Appendix~\ref{app:solver_benchmarks}).

\begin{table}[!htbp]
\begin{center}
\small
\caption{Relative MSE gap to the certified small-instance optima.}
\label{tab:opt gap}
\begin{tabular}{l|cccc}
\hline
Method
& Mean gap (\%)
& Std. gap (\%)
& Median gap (\%)
& Max gap (\%)\\
\hline
Greedy (max-gain)
& 0.30 & 0.30 & 0.23 & 0.98\\
Greedy (cost-aware)
& 3.67 & 3.12 & 3.37 & 17.38\\
MM
& 0.31 & 0.27 & 0.26 & 1.08\\
MM (greedy init.)
& \textbf{0.05} & \textbf{0.07} & \textbf{0.02} & \textbf{0.26}\\
\hline
\end{tabular}
\end{center}
\end{table}

\subsection{Robustness to Finite-Pilot Profile Estimation}
\label{sec:exp_robustness}

The preceding comparisons use fixed source profiles; we now test whether the gains persist when profiles are estimated from finite pilots.
Across $30$ pilot replications, factual moments are estimated from $20$ pilot rewards per pair and annotation moments from $20$ pilot draws per source--pair.
The pilot, reward-model fitting data, acquired annotations, and OPE evaluation data are mutually independent.
After selecting an allocation, we evaluate DM$^{+}$-IS using $50$ independent final estimates per pilot replication.
Greedy-initialized MM performs best, reducing empirical OPE MSE by $19.13\%$ on Synthetic and $8.47\%$ on ASSISTments relative to no annotation.

As a complementary sensitivity check, we perform $30$ within-pair bootstraps of the $20$ observed LLM predictions and rerun MM after re-estimating every source profile.
On ASSISTments, the resulting mean reference-profile scaled-MSE reduction is $10.765\%$, compared with $10.774\%$ using the original profiles, with a standard deviation of only $0.0038$ percentage points.
The bootstrap indicates stable allocation quality under resampling of the observed LLM outputs, complementing the gains in final OPE MSE measured with independent evaluation data.
Full protocols and baseline results are reported in Appendices~\ref{app:end_to_end} and~\ref{app:bootsrap}.

\section{Conclusion}

We formulated budgeted counterfactual annotation for contextual-bandit off-policy evaluation as an integer allocation problem minimizing the MSE of the $\mathrm{DM}^+\text{-}\mathrm{IS}$ estimator.
The objective separates effective annotation bias, cross-action bias correction, and reward/annotation variance, explaining why annotation sources are not interchangeable.
For a single context-action pair and source, we derived a first-annotation threshold and four annotation-value regimes.
For multiple sources, we established cross-action coupling, nonconvexity, and lack of submodularity, and developed a majorization-minimization algorithm with dynamic-programming subroutines that monotonically improves the objective.
These results guide the purchase and allocation of counterfactual annotations under a limited budget to improve policy evaluation.

\textbf{Limitations and future work.}
Our framework is limited to contextual bandits and uses known or estimated source profiles $(\varepsilon_{g_k},\Delta_{g_k})$.
The ASSISTments experiment uses real LLM annotations in a semi-synthetic environment.
Future work includes sequential Markov decision processes, online estimation of source profiles, and human annotations in fully real-world experiments.

\begin{ack}
The authors thank Aishwarya Mandyam for introducing the original problem that motivated this line of work.
This work was supported in part by the Johns Hopkins Carey Business School.
This work was supported by NSF IIS-2403240, NSF IIS-2552007, NIH R01HL184139, and Schmidt Sciences AI2050 Fellowship.
The authors declare no competing interests.
\end{ack}


\newpage
\appendix

\section{Derivations for the Objective Function}
\label{sec:appendix_derivations}

This appendix provides the full, step-by-step derivations for the explicit forms of the variance decomposition described in the main text.
Throughout the appendices, we interpret $\pi_e(a|s)/\pi_b(a|s)$ and $\pi_e(a|s)^2/\pi_b(a|s)$ as zero whenever $\pi_e(a|s)=0$, consistent with the coefficient convention preceding~\eqref{eq:final_objective}.

Recall that variance of the $\text{DM}^+\text{-IS}$ estimator is decomposed as:
\begin{align*}
N \operatorname{Var} \left( \hat{v}^{\text{DM}^+\text{-IS}}(\pi_e) \right) &= \operatorname{Var}_{s \sim d_0} (v(\pi_e | s)) \\
&\quad + \mathbb{E}_{s \sim d_0} \left[ \mathbb{E}_{a \sim \pi_b(\cdot|s)} \left[ \rho(s, a)^2 \sigma_r^2(s, a) \right] \right] \\
&\quad + \mathbb{E}_{s \sim d_0} \left[ \mathbb{E}_{a \sim \pi_b(\cdot|s)} \left[ \rho(s, a)^2 \hat{\varepsilon}_{\hat{r}^+}(s, a)^2 \right] - \hat{\varepsilon}_{\hat{r}^+}^{\pi_e}(s)^2 \right] \\
&\quad + \mathbb{E}_{s \sim d_0} \Bigg[ \mathbb{E}_{a \sim \pi_b(\cdot|s)} \Bigg[ \left( \rho(s, a)^2 - \frac{\pi_e(a|s)^2}{\pi_b(a | s)} \right) \times \operatorname{Var}_{\mathcal{D}} \left( \hat{r}^+(s, a) \right) \Bigg] \Bigg]
\end{align*}
The total variance is decomposed as follows:
\begin{equation*}
N\operatorname{Var} \left( \hat{v}^{\text{DM}^+\text{-IS}}(\pi_e) \right) = T_1 + T_2 + T_3 + T_4
\end{equation*}
where $T_1$ and $T_2$ can be seen as constant with respect to $\mathbf{n}$.
We focus on deriving the explicit forms of $T_3$ and $T_4$.

In addition, as proved in ~\citep{mandyam2024}, $\hat{v}^{\text{DM}^+\text{-IS}}(\pi_e)$ remain unbiased even the annotations are biased.
Thus,
\[
\operatorname{MSE}(\mathbf n)= 0 +  \operatorname{Var} \left( \hat{v}^{\text{DM}^+\text{-IS}}(\pi_e) \right) = C_0+\frac{J(\mathbf n)}{N},
\]

where $C_0 = \frac{\operatorname{Var}_{s \sim d_0} (v(\pi_e | s)) + \mathbb{E}_{s \sim d_0} \left[ \mathbb{E}_{a \sim \pi_b(\cdot|s)} \left[ \rho(s, a)^2 \sigma_r^2(s, a) \right] \right]}{N}$.

\subsection{Term 3}

The third term is given by:
\begin{equation}
  T_3 = \mathbb{E}_{s \sim d_0} \left[ \mathbb{E}_{a \sim \pi_b(\cdot|s)} \left[ \rho(s, a)^2 \hat{\varepsilon}_{\hat{r}^+}(s, a)^2 \right] - \hat{\varepsilon}_{\hat{r}^+}^{\pi_e}(s)^2 \right]
  \label{eq:t3_origin}
\end{equation}

First, we derive the bias $\hat{\varepsilon}_{\hat{r}^+}(s, a)$ for a pair with $N(s,a)>0$.
We begin by calculating the expectation $\mathbb{E}_{\mathcal{D}}[\hat{r}^+(s, a)]$:
\begin{align*}
    \mathbb{E}_{\mathcal{D}}[\hat{r}^+(s, a)] &= \frac{1}{N(s, a)} \left( \sum_{i=1}^{n_f(s,a)} \mathbb{E}[r_i] + \sum_{k=1}^{K} \sum_{j=1}^{n_k(s,a)} \mathbb{E}[g_{k,j}] \right) \\
    &= \frac{1}{N(s, a)} \left( n_f(s, a)\mu_r(s, a) + \sum_{k=1}^{K} n_k(s, a)(\mu_r(s, a) + \varepsilon_{g_k}(s, a)) \right) \\
    &= \frac{1}{N(s, a)} \left( n_f(s, a)\mu_r(s, a) + \sum_{k=1}^{K} n_k(s, a)\mu_r(s, a) + \sum_{k=1}^{K} n_k(s, a)\varepsilon_{g_k}(s, a) \right) \\
    &= \frac{1}{N(s, a)} \left( \left(n_f(s, a) + \sum_{k=1}^{K} n_k(s, a)\right)\mu_r(s, a) + \sum_{k=1}^{K} n_k(s, a)\varepsilon_{g_k}(s, a) \right) \\
    &= \frac{1}{N(s, a)} \left( N(s,a)\mu_r(s, a) + \sum_{k=1}^{K} n_k(s, a)\varepsilon_{g_k}(s, a) \right) \\
    &= \mu_r(s, a) + \frac{\sum_{k=1}^{K} n_k(s, a)\varepsilon_{g_k}(s, a)}{N(s, a)}.
\end{align*}
The bias term $\hat{\varepsilon}_{\hat{r}^+}(s, a)$ is defined as:
\begin{align}
    \hat{\varepsilon}_{\hat{r}^+}(s, a) &:= \mathbb{E}_{\mathcal{D}}[\hat{r}^+(s, a)] - \mu_r(s, a) \notag\\
    &= \left(\mu_r(s, a) + \frac{\sum_{k=1}^{K} n_k(s, a)\varepsilon_{g_k}(s, a)}{N(s, a)} \right) - \mu_r(s, a) \notag\\
    &= \frac{\sum_{k=1}^{K} n_k(s, a)\varepsilon_{g_k}(s, a)}{N(s, a)} 
    \label{eq:epsilon_r}
\end{align}
If $N(s,a)=0$, the fixed default instead gives $\hat{\varepsilon}_{\hat r^+}(s,a)=r_0(s,a)-\mu_r(s,a)$.
Thus $\hat{\varepsilon}_{\hat r^+}(s,a)=B(s,a;\mathbf n)$ in both cases, using the definition preceding~\eqref{eq:final_objective}.
Next, we derive $\hat{\varepsilon}_{\hat{r}^+}^{\pi_e}(s)$ by expanding the expectation over the target policy $\pi_e$:
\begin{align}
    \hat{\varepsilon}_{\hat{r}^+}^{\pi_e}(s) &:= \mathbb{E}_{a \sim \pi_e(\cdot|s)} [\hat{\varepsilon}_{\hat{r}^+}(s, a)] \notag\\
    &= \sum_{a \in \mathcal{A}} \pi_e(a|s) \hat{\varepsilon}_{\hat{r}^+}(s, a) \notag\\
    &= \sum_{a \in \mathcal{A}} \pi_e(a|s) B(s,a;\mathbf n)
    \label{eq:epsilon_r_pi_e}
\end{align}
Substituting these bias expressions into \eqref{eq:t3_origin} gives
\begin{align}
    T_3 &= \sum_{s \in \mathcal{S}} d_0(s) \left( \mathbb{E}_{a \sim \pi_b(\cdot|s)} \left[ \rho(s, a)^2 \hat{\varepsilon}_{\hat{r}^+}(s, a)^2 \right] - \left( \hat{\varepsilon}_{\hat{r}^+}^{\pi_e}(s) \right)^2 \right) \notag\\
    &= \sum_{s \in \mathcal{S}} d_0(s) \left( \sum_{a \in \mathcal{A}} \pi_b(a|s) \rho(s, a)^2 B(s,a;\mathbf n)^2 - \left( \sum_{a \in \mathcal{A}} \pi_e(a|s) B(s,a;\mathbf n) \right)^2 \right)
    \label{eq:t3}
\end{align}
We simplify the importance sampling weight term $\pi_b(a|s) \rho(s, a)^2$:
\begin{align*}
    \pi_b(a|s) \rho(s, a)^2 &= \pi_b(a|s) \left( \frac{\pi_e(a|s)}{\pi_b(a|s)} \right)^2= \frac{\pi_e(a|s)^2}{\pi_b(a|s)}
\end{align*}
Substituting this back into \eqref{eq:t3}:
\begin{align}
    T_3 &= \sum_{s \in \mathcal{S}} d_0(s) \Bigg[ \sum_{a \in \mathcal{A}} \frac{\pi_e(a|s)^2}{\pi_b(a|s)} B(s,a;\mathbf n)^2 \nonumber \notag\\
    & \quad - \left( \sum_{a \in \mathcal{A}} \pi_e(a|s) B(s,a;\mathbf n) \right)^2 \Bigg]
    \label{eq:t3_final}
\end{align}

\subsection{ Term 4}

The fourth term is given by:
\begin{equation}
T_4 = \mathbb{E}_{s \sim d_0} \Bigg[ \mathbb{E}_{a \sim \pi_b(\cdot|s)} \Bigg[ \left( \rho(s, a)^2 - \frac{\pi_e(a|s)^2}{\pi_b(a | s)} \right) \operatorname{Var}_{\mathcal{D}} \left( \hat{r}^+(s, a) \right) \Bigg] \Bigg]
\label{eq:t4_origin}
\end{equation}
First, for $N(s,a)>0$, we calculate the reward-model variance $\operatorname{Var}_{\mathcal{D}} \left( \hat{r}^+(s, a) \right)$.
Using the independence of samples:
\begingroup
\allowdisplaybreaks[1]
\begin{align}
    \operatorname{Var}_{\mathcal{D}} \left( \hat{r}^+(s, a) \right) &= \operatorname{Var}_{\mathcal{D}} \left( \frac{1}{N(s, a)} \left( \sum_{i=1}^{n_f(s,a)} r_i + \sum_{k=1}^{K} \sum_{j=1}^{n_k(s,a)}g_{k,j} \right) \right) \notag\\
    &= \frac{1}{N(s, a)^2} \left( \sum_{i=1}^{n_f(s,a)} \operatorname{Var}(r_i) + \sum_{k=1}^{K} \sum_{j=1}^{n_k(s,a)} \operatorname{Var}(g_{k,j})  \right) \notag\\
    &= \frac{1}{N(s, a)^2} \left( n_f(s, a)\sigma_r^2(s, a) + \sum_{k=1}^{K}n_k(s, a)(\sigma_r^2(s, a) + \Delta_{g_k}(s, a)) \right) \notag\\
    &= \frac{1}{N(s, a)^2} \left( n_f(s, a)\sigma_r^2(s, a) + \sum_{k=1}^{K}n_k(s, a)\sigma_r^2(s, a) + \sum_{k=1}^{K}n_k(s, a)\Delta_{g_k}(s, a) \right) \notag\\
    &= \frac{1}{N(s, a)^2} \left( \left(n_f(s, a) + \sum_{k=1}^{K}n_k(s, a)\right)\sigma_r^2(s, a) + \sum_{k=1}^{K}n_k(s, a)\Delta_{g_k}(s, a) \right) \notag\\
    &= \frac{1}{N(s, a)^2} \left( N(s, a)\sigma_r^2(s, a) +\sum_{k=1}^{K}n_k(s, a)\Delta_{g_k}(s, a) \right) \notag\\
    &= \frac{\sigma_r^2(s, a)}{N(s, a)} + \frac{\sum_{k=1}^{K}n_k(s, a)\Delta_{g_k}(s, a)}{N(s, a)^2} \label{eq:var_rhat}
\end{align}
\endgroup

For $N(s,a)=0$, the fixed default has variance zero, so $\operatorname{Var}_{\mathcal D}(\hat r^+(s,a))=W(s,a;\mathbf n)$ in both cases.
Next, we substitute this variance into \eqref{eq:t4_origin} and expand the expectations, evaluating divisions by $\pi_b(a|s)$ only on its positive support:
\begin{align}
    T_4 &= \sum_{s \in \mathcal{S}} d_0(s) \sum_{a:\pi_b(a|s)>0} \pi_b(a|s) \left[ \left( \rho(s, a)^2 - \frac{\pi_e(a|s)^2}{\pi_b(a | s)} \right) \operatorname{Var}_{\mathcal{D}} \left( \hat{r}^+(s, a) \right) \right] \notag\\
    &= \sum_{s \in \mathcal{S}} d_0(s) \sum_{a:\pi_b(a|s)>0} \left[ \left( \pi_b(a|s) \rho(s, a)^2 - \frac{\pi_b(a|s)}{\pi_b(a | s)}\pi_e(a|s)^2 \right) \operatorname{Var}_{\mathcal{D}} \left( \hat{r}^+(s, a) \right) \right] \notag\\
    &= \sum_{s \in \mathcal{S}} d_0(s) \sum_{a:\pi_b(a|s)>0} \left[ \left( \pi_b(a|s) \frac{\pi_e(a|s)^2}{\pi_b(a|s)^2} - \pi_e(a|s)^2 \right) \operatorname{Var}_{\mathcal{D}} \left( \hat{r}^+(s, a) \right) \right] \notag\\
    &= \sum_{s \in \mathcal{S}} d_0(s) \sum_{a \in \mathcal{A}} \left( \frac{\pi_e(a|s)^2}{\pi_b(a|s)} - \pi_e(a|s)^2 \right)W(s,a;\mathbf n)
    \label{eq:t4_final}
\end{align}

\subsection{Objective Function}
Let $\mathbf{n} = \{n_j(s, a)\}_{s \in \mathcal{S}, a \in \mathcal{A}, j \in \mathcal{K}}$ denotes the set of decision variables.
Combining \eqref{eq:t3_final} and \eqref{eq:t4_final}, the objective function is:
\begin{align*}
    J(\mathbf{n}) &= \sum_{s \in \mathcal{S}} d_0(s) \Bigg[ \sum_{a \in \mathcal{A}} \frac{\pi_e(a|s)^2}{\pi_b(a|s)} B(s,a;\mathbf n)^2 \\
    &\quad - \left( \sum_{a \in \mathcal{A}} \pi_e(a|s) B(s,a;\mathbf n) \right)^2 \\
    &\quad + \sum_{a \in \mathcal{A}} \left( \frac{\pi_e(a|s)^2}{\pi_b(a|s)} - \pi_e(a|s)^2 \right) W(s,a;\mathbf n) \Bigg].
\end{align*}

\subsection{Theoretical bound for source-profile estimation error}
\label{app:profile_sensitivity}

The MSE decomposition above depends on source biases $\varepsilon_{g_k}$ and excess variances $\Delta_{g_k}$, which must often be estimated from pilot annotations.
We first bound the resulting perturbation of $J$ uniformly over feasible allocations, then translate this bound into the MSE guarantee in Theorem~\ref{thm:estimated-source-regret}.
Appendices~\ref{app:end_to_end} and~\ref{app:bootsrap} complement this analysis with finite-pilot evaluation and within-pair resampling.

Write \(x=(s,a)\in\mathcal X:=\mathcal S\times\mathcal A\), and define
\[
f_x:=n_f(s,a),\qquad
m_x(\mathbf n):=\sum_{k=1}^K n_k(s,a),\qquad
N_x(\mathbf n):=f_x+m_x(\mathbf n).
\]
Let
\[
\theta
:=
\left\{
\bigl(
\varepsilon_{g_k}(x),\Delta_{g_k}(x)
\bigr)
\right\}_{x,k}
\]
and
\[
\hat{\theta}
:=
\left\{
\bigl(
\widehat\varepsilon_{g_k}(x),
\widehat\Delta_{g_k}(x)
\bigr)
\right\}_{x,k}
\]
denote the true and estimated source profiles, respectively.
Define the uniform estimation errors
\[
\delta_\varepsilon
:=
\max_{x,k}
\left|
\widehat\varepsilon_{g_k}(x)
-
\varepsilon_{g_k}(x)
\right|,
\qquad
\delta_\Delta
:=
\max_{x,k}
\left|
\widehat\Delta_{g_k}(x)
-
\Delta_{g_k}(x)
\right|.
\]

To include the zero-factual-coverage case of Appendix~\ref{sec:nf_zero}, define
\[
B_x(\mathbf n;\theta)
:=
\begin{cases}
\displaystyle
\frac{
\sum_{k=1}^K n_k(x)\varepsilon_{g_k}(x)
}{
N_x(\mathbf n)
},
&
N_x(\mathbf n)>0,
\\[12pt]
r_0(x)-\mu_r(x),
&
N_x(\mathbf n)=0.
\end{cases}
\]
The same default \(r_0(x)\) is used under \(\theta\) and \(\hat{\theta}\).
Let
\[
\beta_{\max}
:=
\max
\left\{
\max_{x,k}|\varepsilon_{g_k}(x)|,\,
\max_{x:f_x=0}|r_0(x)-\mu_r(x)|
\right\}.
\]
If factual rewards, source annotations, and the default value all lie in \([0,R]\), then \(\beta_{\max}\le R\).

We next define two sensitivity constants.
First,
\[
A_1
:=
\sum_{s\in\mathcal S}
\sum_{a\in\mathcal A}
\alpha_1(s,a).
\]

For the excess-variance term, let \(M_x\) denote the maximum number of annotations that can feasibly be assigned to pair \(x\).
Define
\begin{equation}
\label{eq:profile-gamma}
\gamma_x
:=
\max_{\substack{0\le m\le M_x\\m\in\mathbb Z}}
\begin{cases}
\displaystyle
\frac{m}{(f_x+m)^2},
&
f_x+m>0,
\\[8pt]
0,
&
f_x=m=0.
\end{cases}
\end{equation}
Define
\[
A_2
:=
\sum_{x\in\mathcal X}
\alpha_3(x)\gamma_x.
\]

\begin{lemma}[Uniform perturbation of the allocation objective]
\label{lem:estimated-source-perturbation}
For every feasible allocation \(\mathbf n\in\mathcal F\),
\begin{equation}
\label{eq:uniform-profile-perturbation}
\left|
J(\mathbf n;\hat{\theta})
-
J(\mathbf n;\theta)
\right|
\le
\Phi(\delta_\varepsilon,\delta_\Delta),
\end{equation}
where
\[
\Phi(\delta_\varepsilon,\delta_\Delta)
:=
A_1\delta_\varepsilon
\left(
2\beta_{\max}+\delta_\varepsilon
\right)
+
A_2\delta_\Delta.
\]
\end{lemma}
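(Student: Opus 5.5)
The plan is to split $J=(J_1-J_2)+J_3$ and bound the two pieces separately. The bias piece $J_1-J_2$ depends on $\theta$ only through the $\varepsilon_{g_k}$. The variance piece $J_3$ depends on $\theta$ only through the $\Delta_{g_k}$, since $\sigma_r^2$, $n_f$, the policies and $d_0$ are the same under $\theta$ and $\hat\theta$.

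\textbf{Variance piece.} For a fixed $\mathbf n\in\mathcal F$ and a pair $x$ with $N_x(\mathbf n)>0$, the factual and $\sigma_r^2$ parts of $W$ cancel, leaving
\[
W(x;\mathbf n,\hat\theta)-W(x;\mathbf n,\theta)=\sum_k n_k(x)\bigl(\widehat\Delta_{g_k}(x)-\Delta_{g_k}(x)\bigr)/N_x(\mathbf n)^2 .
\]
Its absolute value is at most $\delta_\Delta\, m_x(\mathbf n)/(f_x+m_x(\mathbf n))^2$. Feasibility gives $m_x(\mathbf n)\le M_x$, so this is at most $\delta_\Delta\gamma_x$ by \eqref{eq:profile-gamma}. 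When $N_x=0$ both $W$ values are $0$, which is also covered by $\gamma_x$. Multiplying by $\alpha_3(x)\ge0$ and summing over $x$ gives $|J_3(\mathbf n;\hat\theta)-J_3(\mathbf n;\theta)|\le A_2\delta_\Delta$.

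\textbf{Bias piece.} Bounding $J_1$ and $J_2$ separately would produce an extra constant for $J_2$ that does not appear in $\Phi$, so I will instead treat $J_1-J_2$ per context as a quadratic form. Fix $s$, and let $D_s=\mathrm{diag}\bigl(\pi_e(a|s)^2/\pi_b(a|s)\bigr)_a$ and $p_s=(\pi_e(a|s))_a$. Then
\[
J_1-J_2=\sum_s d_0(s)\,Q_s(B_s),\qquad Q_s(b):=b^\top(D_s-p_sp_s^\top)b .
\]
By Cauchy--Schwarz,
\[
\Bigl(\sum_a\pi_e b_a\Bigr)^2\le\Bigl(\sum_a \tfrac{\pi_e^2}{\pi_b}b_a^2\Bigr)\Bigl(\sum_a\pi_b\Bigr)\le b^\top D_s b ,
\]
using common support and the zero convention, so $M_s:=D_s-p_sp_s^\top$ is positive semidefinite and $Q_s(b)\le b^\top D_s b$. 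Write $\hat B=B(\cdot;\mathbf n,\hat\theta)$ and $B=B(\cdot;\mathbf n,\theta)$. Then
\[
Q_s(\hat B)-Q_s(B)=(\hat B-B)^\top M_s(\hat B+B) .
\]
Cauchy--Schwarz for the PSD form $M_s$ bounds its absolute value by $\sqrt{Q_s(u)Q_s(v)}\le\sqrt{(u^\top D_su)(v^\top D_sv)}$, where $u=\hat B-B$ and $v=\hat B+B$.

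\textbf{Entrywise bounds.} If $N_x>0$, then $\hat B_x-B_x=\sum_k w_k(\widehat\varepsilon_{g_k}-\varepsilon_{g_k})$ with $w_k=n_k/N_x\ge0$ and $\sum_k w_k\le1$. Hence $|u_x|\le\delta_\varepsilon$, and $|v_x|\le 2|B_x|+|u_x|\le 2\beta_{\max}+\delta_\varepsilon$. If $N_x=0$, the same default $r_0$ is used under both profiles, so $u_x=0$ and $|v_x|\le2\beta_{\max}$. Therefore
\[
u^\top D_su\le\delta_\varepsilon^2\,\mathrm{tr}D_s,\qquad v^\top D_sv\le(2\beta_{\max}+\delta_\varepsilon)^2\,\mathrm{tr}D_s,
\]
and $|Q_s(\hat B)-Q_s(B)|\le\delta_\varepsilon(2\beta_{\max}+\delta_\varepsilon)\,\mathrm{tr}D_s$. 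Weighting by $d_0(s)$ and summing gives exactly $A_1\delta_\varepsilon(2\beta_{\max}+\delta_\varepsilon)$. Adding this to the variance bound via the triangle inequality gives \eqref{eq:uniform-profile-perturbation}.

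\textbf{Main obstacle.} The crux is the bias piece, specifically showing that the $-J_2$ coupling adds nothing beyond $A_1$. This rests on the PSD property of $D_s-p_sp_s^\top$, which comes from $\sum_a\pi_b(a|s)\le1$. Everything else is convex-combination bookkeeping, plus checking that the $N_x=0$ default case stays within both entrywise bounds.
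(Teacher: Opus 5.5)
Your proof is correct and follows essentially the paper's route. It uses the same split into $J_1-J_2$ and $J_3$, the same PSD form $D_s-p_sp_s^\top$ with Cauchy--Schwarz in its seminorm, drops the rank-one term to reduce to $\alpha_1$-weighted entrywise bounds, and uses the same $\gamma_x$ argument for the variance piece. The only cosmetic difference is that you factor $Q_s(\hat B)-Q_s(B)=(\hat B-B)^\top M_s(\hat B+B)$ context by context, whereas the paper writes $\hat B=B+U$ and expands in the seminorm summed over contexts to get $2\|B\|\|U\|+\|U\|^2$; both yield exactly $A_1\delta_\varepsilon(2\beta_{\max}+\delta_\varepsilon)$.
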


\begin{proof}
Fix an allocation \(\mathbf n\), and define
\[
U_x(\mathbf n)
:=
B_x(\mathbf n;\hat{\theta})
-
B_x(\mathbf n;\theta).
\]
If \(N_x(\mathbf n)>0\), then
\[
U_x(\mathbf n)
=
\frac{
\sum_k n_k(x)
\left(
\widehat\varepsilon_{g_k}(x)
-
\varepsilon_{g_k}(x)
\right)
}{
N_x(\mathbf n)
},
\]
and hence
\begin{equation}
\label{eq:profile-bias-error}
|U_x(\mathbf n)|
\le
\frac{m_x(\mathbf n)}{N_x(\mathbf n)}
\delta_\varepsilon
\le
\delta_\varepsilon.
\end{equation}
If \(N_x(\mathbf n)=0\), the same default is used under both parameter sets, so \(U_x(\mathbf n)=0\).
Moreover,
\[
|B_x(\mathbf n;\theta)|\le\beta_{\max}.
\]

For each context \(s\), define
\[
M_s
:=
d_0(s)
\left[
\operatorname{diag}
\left(
\frac{\pi_e(a\mid s)^2}{\pi_b(a\mid s)}
\right)_{a\in\mathcal A}
-
\boldsymbol\pi_e(s)\boldsymbol\pi_e(s)^\top
\right],
\]
where \(\boldsymbol\pi_e(s):=(\pi_e(a\mid s))_{a\in\mathcal A}\).
For any \(z\in\mathbb R^{|\mathcal A|}\),
\[
\begin{aligned}
z^\top M_s z
&=
d_0(s)
\left[
\sum_a
\frac{\pi_e(a\mid s)^2}{\pi_b(a\mid s)}z_a^2
-
\left(
\sum_a\pi_e(a\mid s)z_a
\right)^2
\right]
\\
&=
d_0(s)
\operatorname{Var}_{a\sim\pi_b(\cdot\mid s)}
\left[
\frac{\pi_e(a\mid s)}{\pi_b(a\mid s)}z_a
\right]
\ge0.
\end{aligned}
\]
Therefore, \(M_s\succeq0\).

Let
\[
\boldsymbol B_s
:=
\bigl(B_{s,a}(\mathbf n;\theta)\bigr)_{a\in\mathcal A},
\qquad
\boldsymbol U_s
:=
\bigl(U_{s,a}(\mathbf n)\bigr)_{a\in\mathcal A}.
\]
The combined bias term satisfies
\[
J_1(\mathbf n;\theta)-J_2(\mathbf n;\theta)
=
\sum_s\boldsymbol B_s^\top M_s\boldsymbol B_s.
\]
Using the seminorm induced by \(\{M_s\}_s\),
\begin{equation}
\label{eq:bias-seminorm-perturbation}
\begin{aligned}
&
\left|
\bigl[J_1(\mathbf n;\hat{\theta})
      -J_2(\mathbf n;\hat{\theta})\bigr]
-
\bigl[J_1(\mathbf n;\theta)
      -J_2(\mathbf n;\theta)\bigr]
\right|
\\
&\le
2
\left(
\sum_s\boldsymbol B_s^\top M_s\boldsymbol B_s
\right)^{1/2}
\left(
\sum_s\boldsymbol U_s^\top M_s\boldsymbol U_s
\right)^{1/2}
+
\sum_s\boldsymbol U_s^\top M_s\boldsymbol U_s.
\end{aligned}
\end{equation}
Because the rank-one term subtracted in \(M_s\) is nonnegative,
\begin{equation}
\label{eq:true-bias-seminorm-bound}
\sum_s\boldsymbol B_s^\top M_s\boldsymbol B_s
\le
\sum_{s,a}\alpha_1(s,a)B_{s,a}^2
\le
A_1\beta_{\max}^2,
\end{equation}
and, using \eqref{eq:profile-bias-error},
\begin{equation}
\label{eq:estimated-bias-seminorm-bound}
\sum_s\boldsymbol U_s^\top M_s\boldsymbol U_s
\le
\sum_{s,a}\alpha_1(s,a)U_{s,a}^2
\le
A_1\delta_\varepsilon^2.
\end{equation}
Substituting \eqref{eq:true-bias-seminorm-bound}--\eqref{eq:estimated-bias-seminorm-bound} into \eqref{eq:bias-seminorm-perturbation} gives
\begin{equation}
\label{eq:bias-profile-perturbation}
\left|
\bigl[J_1(\mathbf n;\hat{\theta})
      -J_2(\mathbf n;\hat{\theta})\bigr]
-
\bigl[J_1(\mathbf n;\theta)
      -J_2(\mathbf n;\theta)\bigr]
\right|
\le
A_1\delta_\varepsilon
\left(
2\beta_{\max}+\delta_\varepsilon
\right).
\end{equation}

For the excess-variance term,
\begin{equation}
\label{eq:variance-profile-perturbation}
\begin{aligned}
\left|
J_3(\mathbf n;\hat{\theta})
-
J_3(\mathbf n;\theta)
\right|
&\le
\sum_x
\alpha_3(x)
\frac{
\sum_k n_k(x)
\left|
\widehat\Delta_{g_k}(x)-\Delta_{g_k}(x)
\right|
}{
N_x(\mathbf n)^2
}
\\
&\le
\delta_\Delta
\sum_x
\alpha_3(x)
\frac{m_x(\mathbf n)}{(f_x+m_x(\mathbf n))^2}
\\
&\le
A_2\delta_\Delta,
\end{aligned}
\end{equation}
where the ratio is defined as zero when \(f_x=m_x=0\), and the last inequality follows from \eqref{eq:profile-gamma}.
Combining \eqref{eq:bias-profile-perturbation} and \eqref{eq:variance-profile-perturbation} proves \eqref{eq:uniform-profile-perturbation}.
\end{proof}

Let
\[
\mathbf n^\star
\in
\arg\min_{\mathbf n\in\mathcal F}
J(\mathbf n;\theta)
\]
be an optimal allocation under the population profiles.
Let \(\widehat{\mathbf n}\) be the allocation returned by Algorithm~\ref{alg:main} using \(\hat\theta\), and suppose that
\begin{equation}
\label{eq:estimated-objective-gap}
J(\widehat{\mathbf n};\hat{\theta})
\le
\min_{\mathbf n\in\mathcal F}
J(\mathbf n;\hat{\theta})
+
\xi_{\mathrm{alg}},
\end{equation}
where \(\xi_{\mathrm{alg}}\ge0\) denotes the algorithmic suboptimality on the estimated objective.

\begin{restatethm}[Effect of source-profile estimation on MSE]{thm:estimated-source-regret}
Under the assumptions of Theorem~\ref{thm:objective}, assume that the pilot/allocation sample, the reward-model and annotation sample, and the OPE evaluation sample are independent.
Conditional on the pilot, with factual counts, factual reward moments, and policy probabilities fixed, source-profile estimation introduces no additional OPE bias, and
\begin{equation*}
\operatorname{MSE}(\widehat{\mathbf n})
-
\operatorname{MSE}(\mathbf n^\star)
\le
\frac{
2A_1\delta_\varepsilon
(2\beta_{\max}+\delta_\varepsilon)
+
2A_2\delta_\Delta
+
\xi_{\mathrm{alg}}
}{N}.
\end{equation*}
\end{restatethm}

\begin{proof}
Let
\[
\widehat{\mathbf n}^{\,\star}
\in
\arg\min_{\mathbf n\in\mathcal F}
J(\mathbf n;\hat{\theta}).
\]
By adding and subtracting the estimated-objective values,
\[
\begin{aligned}
&J(\widehat{\mathbf n};\theta)
-J(\mathbf n^\star;\theta)
\\
&=
\left[
J(\widehat{\mathbf n};\theta)
-J(\widehat{\mathbf n};\hat{\theta})
\right]
+
\left[
J(\widehat{\mathbf n};\hat{\theta})
-J(\widehat{\mathbf n}^{\,\star};\hat{\theta})
\right]
\\
&\quad+
\left[
J(\widehat{\mathbf n}^{\,\star};\hat{\theta})
-J(\mathbf n^\star;\hat{\theta})
\right]
+
\left[
J(\mathbf n^\star;\hat{\theta})
-J(\mathbf n^\star;\theta)
\right].
\end{aligned}
\]
By Lemma~\ref{lem:estimated-source-perturbation}, the first and fourth terms are each at most \(\Phi(\delta_\varepsilon,\delta_\Delta)\).
By \eqref{eq:estimated-objective-gap}, the second term is at most \(\xi_{\mathrm{alg}}\), and the third term is nonpositive by the optimality of \(\widehat{\mathbf n}^{\,\star}\) for the estimated objective.
Hence,
\begin{equation}
\label{eq:source-profile-regret}
J(\widehat{\mathbf n};\theta)
-
J(\mathbf n^\star;\theta)
\le
2\Phi(\delta_\varepsilon,\delta_\Delta)
+
\xi_{\mathrm{alg}}.
\end{equation}
Using
\[
\Phi(\delta_\varepsilon,\delta_\Delta)
=
A_1\delta_\varepsilon
\left(2\beta_{\max}+\delta_\varepsilon\right)
+
A_2\delta_\Delta,
\]
this becomes
\begin{equation}
\label{eq:source-profile-regret-expanded}
J(\widehat{\mathbf n};\theta)
-
J(\mathbf n^\star;\theta)
\le
2A_1\delta_\varepsilon
\left(2\beta_{\max}+\delta_\varepsilon\right)
+
2A_2\delta_\Delta
+
\xi_{\mathrm{alg}}.
\end{equation}

Under the stated data separation, DM\(^{+}\)-IS is unbiased conditional on the pilot-selected allocation.
Theorem~\ref{thm:objective} therefore gives
\[
\operatorname{MSE}(\mathbf n)=C_0+\frac{J(\mathbf n;\theta)}{N}.
\]
Subtracting the MSE at \(\mathbf n^\star\) and applying \eqref{eq:source-profile-regret-expanded} proves \eqref{eq:source-profile-mse-bound}.
\end{proof}

\section{Threshold for Valuable Annotation for a single annotation source}
\label{app:threshold}

In this appendix, we analyze, for a single annotation source, under what conditions adding counterfactual annotations improves or degrades the estimator variance.
We begin with the simplest case (within a single context-action pair), then extend to actions within a single context, and finally consider arbitrary context-action pairs.

\subsection{Setup and Notation}

We consider a single annotation source ($K=1$) and suppress the source index.
For a fixed context-action pair $(s, a)$, let $n_f := n_f(s, a)$, $\varepsilon := \varepsilon_g(s, a)$, $\Delta := \Delta_g(s, a)$, $\sigma^2 := \sigma_r^2(s, a)$, and $n$ denote the number of annotations.
We assume $n_f>0$ and $\alpha_1>0$ for the analysis below; the case $n_f=0$ is treated in Appendix~\ref{sec:nf_zero}.
The total sample size is $N := n_f + n$.
Define the total annotation variance as $\sigma_g^2 := \sigma^2 + \Delta$, and the policy-dependent ratio:
\begin{equation}
    \eta(s,a) := \frac{\alpha_3(s,a)}{\alpha_1(s,a)} = \frac{\frac{\pi_e(a|s)^2}{\pi_b(a|s)} - \pi_e(a|s)^2}{\frac{\pi_e(a|s)^2}{\pi_b(a|s)}} = 1 - \pi_b(a|s). \label{eq:eta_def}
\end{equation}
This ratio satisfies $0 \leq \eta(s,a) < 1$  under the common support assumption $\pi_b(a|s) > 0$, and measures how much the behavior policy under-covers action $a$: small $\pi_b(a|s)$ yields $\eta$ close to 1.

\subsection{Bias Threshold for a Single Context-Action Pair}

Ignoring the cross-action bias correction term ($J_2$) (since $J_2$ couples all actions together, we will analysis it later in \ref{sec: bias_in_states}), the contribution of a single $(s, a)$ to the objective is:
\begin{equation}
    \phi(n) := \alpha_1 \frac{n^2 \varepsilon^2}{N^2} + \alpha_3 \left( \frac{\sigma^2}{N} + \frac{n \Delta}{N^2} \right) = \frac{\alpha_1 n^2 \varepsilon^2 + \alpha_3 \left( n_f \sigma^2 + n \sigma_g^2 \right)}{N^2}. \label{eq:phi_def}
\end{equation}

\begin{lemma}[Marginal Change for a single (s,a)]
\label{lem:marginal}
The marginal change in the local objective when increasing the annotation count from $n$ to $n+1$ is:
\begin{equation}
\begin{aligned}
    \delta\phi(n) &:= \phi(n+1) - \phi(n) \\
    &= \frac{\alpha_1 \varepsilon^2 n_f \bigl( (2n+1)N + n \bigr) + \alpha_3 \sigma_g^2 \bigl( n_f^2 - n(n+1) \bigr) - \alpha_3 n_f \sigma^2 (2N+1)}{N^2 (N+1)^2},
\end{aligned}
    \label{eq:delta_phi}
\end{equation}
where $N = n_f + n$.
\end{lemma}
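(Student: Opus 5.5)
The plan is a direct computation: put $\phi(n+1)$ and $\phi(n)$ from \eqref{eq:phi_def} over the common denominator $N^2(N+1)^2$, where $N=n_f+n$ and $N+1=n_f+n+1$. Then I treat the numerator as a linear combination of three coefficient groups, $\alpha_1\varepsilon^2$, $\alpha_3 n_f\sigma^2$ and $\alpha_3\sigma_g^2$, and simplify each group on its own. Only the definition \eqref{eq:phi_def} is needed; no earlier result is invoked. Since $n_f>0$, every denominator is positive.

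Concretely, the difference of numerators is
\[
\alpha_1\varepsilon^2\bigl[(n+1)^2N^2-n^2(N+1)^2\bigr]
+\alpha_3 n_f\sigma^2\bigl[N^2-(N+1)^2\bigr]
+\alpha_3\sigma_g^2\bigl[(n+1)N^2-n(N+1)^2\bigr].
\]
The middle bracket is immediate: it equals $-(2N+1)$, which gives the term $-\alpha_3 n_f\sigma^2(2N+1)$.

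For the first bracket I factor it as a difference of squares:
\[
\bigl((n+1)N-n(N+1)\bigr)\bigl((n+1)N+n(N+1)\bigr).
\]
The first factor is $N-n=n_f$ and the second is $(2n+1)N+n$, so this group contributes $\alpha_1\varepsilon^2 n_f\bigl((2n+1)N+n\bigr)$.

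For the third bracket I expand to get $N^2-2nN-n$. Completing the square rewrites this as $(N-n)^2-n^2-n=n_f^2-n(n+1)$.

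Collecting the three pieces over $N^2(N+1)^2$ gives exactly \eqref{eq:delta_phi}. There is no real obstacle here. The only step needing care is choosing the factorizations above, which is what produces the clean $n_f$ and $n_f^2$ forms rather than an unsimplified polynomial in $n$ and $n_f$. As a sanity check, at $n=0$ the formula should reduce to an expression whose sign condition recovers the first-annotation threshold \eqref{eq:first_thresh_main} after dividing by $\alpha_1 n_f$ and using $\eta=\alpha_3/\alpha_1$.
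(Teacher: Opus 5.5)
Your proposal is correct and follows essentially the same route as the paper: a direct common-denominator computation that groups the numerator by the coefficients $\alpha_1\varepsilon^2$, $\alpha_3 n_f\sigma^2$ and $\alpha_3\sigma_g^2$, and arrives at the same simplified forms $n_f\bigl((2n+1)N+n\bigr)$ and $n_f^2-n(n+1)$. The only cosmetic difference is that the paper first writes $P(n+1)=P(n)+\alpha_1(2n+1)\varepsilon^2+\alpha_3\sigma_g^2$ before collecting terms, whereas you subtract the numerators directly and state the difference-of-squares factorization explicitly.
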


\begin{proof}
Write $\phi(n) = \frac{P(n)}{N^2}$ where $P(n) = \alpha_1 n^2 \varepsilon^2 + \alpha_3 n_f \sigma^2 + \alpha_3 n \sigma_g^2$.
Then:
\begin{equation}
    \delta\phi(n) = \frac{P(n+1) \cdot N^2 - P(n) \cdot (N+1)^2}{N^2(N+1)^2}. \label{eq:delta_raw}
\end{equation}
We decompose $P(n+1) = P(n) + \alpha_1(2n+1)\varepsilon^2 + \alpha_3\sigma_g^2$ and expand the numerator:
\begin{align}
    &P(n+1) N^2 - P(n)(N+1)^2 \nonumber \\
    &= \bigl[P(n) + \alpha_1(2n+1)\varepsilon^2 + \alpha_3\sigma_g^2\bigr] N^2 - P(n)(N+1)^2 \nonumber \\
    &= P(n)\bigl[N^2 - (N+1)^2\bigr] + \bigl[\alpha_1(2n+1)\varepsilon^2 + \alpha_3\sigma_g^2\bigr] N^2 \nonumber \\
    &= -P(n)(2N+1) + \bigl[\alpha_1(2n+1)\varepsilon^2 + \alpha_3\sigma_g^2\bigr] N^2. \label{eq:num_expand}
\end{align}
We evaluate each component in \eqref{eq:num_expand} separately.
For the first term:
\begin{equation}
    -P(n)(2N+1) = -\alpha_1 n^2 \varepsilon^2(2N+1) - \alpha_3 n_f\sigma^2(2N+1) - \alpha_3 n\sigma_g^2(2N+1). \label{eq:first_comp}
\end{equation}
For the second term:
\begin{equation}
    \bigl[\alpha_1(2n+1)\varepsilon^2 + \alpha_3\sigma_g^2\bigr] N^2 = \alpha_1(2n+1)\varepsilon^2 N^2 + \alpha_3\sigma_g^2 N^2. \label{eq:second_comp}
\end{equation}
Collecting $\alpha_1\varepsilon^2$ terms we get $\alpha_1\varepsilon^2\bigl[(2n+1)N^2 - n^2(2N+1)\bigr]$.

We use the factorization $(2n+1)N^2 - n^2(2N+1) = \bigl[(2n+1)N + n\bigr]\bigl[N - n\bigr] = n_f\bigl[(2n+1)N + n\bigr]$, which follows from the difference-of-products identity with $N - n = n_f$.
Collecting $\alpha_3\sigma_g^2$ terms:
\begin{equation}
    \alpha_3\sigma_g^2\bigl[N^2 - n(2N+1)\bigr] = \alpha_3\sigma_g^2\bigl[n_f^2 - n(n+1)\bigr], \label{eq:sigma_g_term}
\end{equation}
where we expanded $N^2 - n(2N+1) = (n_f+n)^2 - 2n(n_f+n) - n = n_f^2 + 2nn_f + n^2 - 2nn_f - 2n^2 - n = n_f^2 - n^2 - n$.
The remaining term is $-\alpha_3 n_f\sigma^2(2N+1)$.
Substitute \eqref{eq:first_comp} -- \eqref{eq:sigma_g_term} back into \eqref{eq:delta_raw} yields Eq.~\eqref{eq:delta_phi}.
\end{proof}

\subsection{Proof of Theorem~\ref{thm:first_threshold_main} (First Annotation Threshold)}\label{pf:t1}

\begin{restatethm}{thm:first_threshold_main}
With $\eta(s,a)$ in \eqref{eq:eta_def}, the first annotation ($n=0\to 1$) does not increase the local objective \eqref{eq:phi_def}, i.e., $\delta\phi(0)\le 0$, if and only if
\begin{equation}
    \varepsilon^2(s,a) \leq \eta(s,a)\!\left(\frac{(n_f(s,a)+1)\sigma^2(s,a)}{n_f(s,a)} - \Delta(s,a)\right). \label{eq:first_threshold}
\end{equation}
Equality in \eqref{eq:first_threshold} leaves the local objective unchanged, whereas strict inequality gives a strict decrease.
In particular, if $\Delta(s,a) \geq (n_f(s,a)+1)\sigma^2/n_f(s,a)$, then $\delta\phi(0)>0$ for every (s,a) with $\varepsilon(s,a)\ne 0$.
\end{restatethm}

\begin{proof}
Substituting $n = 0$ and $N = n_f$ into Eq.~\eqref{eq:delta_phi}:
\begin{align}
    \delta\phi(0) &= \frac{\alpha_1 \varepsilon^2 n_f \cdot n_f + \alpha_3\sigma_g^2 \cdot n_f^2 - \alpha_3 n_f\sigma^2(2n_f + 1)}{n_f^2(n_f+1)^2} \nonumber \\
    &= \frac{n_f\bigl[\alpha_1\varepsilon^2 n_f + \alpha_3 n_f\sigma_g^2 - \alpha_3(2n_f+1)\sigma^2\bigr]}{n_f^2(n_f+1)^2} \nonumber \\
    &= \frac{\alpha_1\varepsilon^2 n_f + \alpha_3\bigl[n_f(\sigma^2 + \Delta) - (2n_f+1)\sigma^2\bigr]}{n_f(n_f+1)^2} \nonumber \\
    &= \frac{\alpha_1\varepsilon^2 n_f + \alpha_3\bigl[n_f\Delta - (n_f+1)\sigma^2\bigr]}{n_f(n_f+1)^2}. \label{eq:delta0_simplified}
\end{align}
Because $n_f>0$ and $\alpha_1>0$, the condition $\delta\phi(0)\leq0$ is equivalent to
\begin{align*}
    \alpha_1\varepsilon^2 n_f &\leq \alpha_3\bigl[(n_f+1)\sigma^2 - n_f\Delta\bigr], \nonumber \\
    \varepsilon^2 &\leq \frac{\alpha_3}{\alpha_1}\left(\frac{(n_f+1)\sigma^2}{n_f} - \Delta\right) = \eta(s,a)\left(\frac{(n_f+1)\sigma^2}{n_f} - \Delta\right). \nonumber
\end{align*}
The same equivalence holds with either equality or strict inequality throughout.
When $\Delta\geq(n_f+1)\sigma^2/n_f$ and $\varepsilon\ne0$, the right-hand side is nonpositive while $\varepsilon^2>0$, so \eqref{eq:delta0_simplified} gives $\delta\phi(0)>0$.
\end{proof}

\subsection{Proof of Theorem~\ref{thm:four_regimes_main} (Four Regimes of Annotation Value)}\label{pf:t2}

\begin{lemma}[Continuous Critical Point $n^*$]
\label{lem:critical}
Treating $n$ as a continuous variable, the derivative of $\phi$ is:
\begin{equation}
    \phi'(n) = \frac{(2\alpha_1\varepsilon^2 n_f - \alpha_3\sigma_g^2)\, n + \alpha_3 n_f(\Delta - \sigma^2)}{(n_f + n)^3}. \label{eq:phi_prime}
\end{equation}
If $2\alpha_1\varepsilon^2 n_f\ne\alpha_3\sigma_g^2$, the only possible critical point is
\begin{equation}
    n^* = \frac{\alpha_3 n_f(\sigma^2 - \Delta)}{2\alpha_1\varepsilon^2 n_f - \alpha_3\sigma_g^2} = \frac{\eta \, n_f(\sigma^2 - \Delta)}{2\varepsilon^2 n_f - \eta\,\sigma_g^2}. \label{eq:n_star}
\end{equation}
It is an interior critical point exactly when $n^*>0$.
If $2\alpha_1\varepsilon^2 n_f=\alpha_3\sigma_g^2$, there is no interior critical point unless $\alpha_3(\Delta-\sigma^2)=0$, in which case $\phi$ is constant.
Furthermore, $\phi(n) \to \alpha_1\varepsilon^2$ as $n \to \infty$.
\end{lemma}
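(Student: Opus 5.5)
We will differentiate $\phi(n)=P(n)/(n_f+n)^3\cdot(n_f+n)$ directly by the quotient rule, writing $\phi(n)=P(n)/N^2$ with $P(n)=\alpha_1\varepsilon^2 n^2+\alpha_3(n_f\sigma^2+n\sigma_g^2)$ and $N=n_f+n$. Then
\[
\phi'(n)=\frac{P'(n)N-2P(n)}{N^3},\qquad P'(n)=2\alpha_1\varepsilon^2 n+\alpha_3\sigma_g^2 .
\]
We expand the numerator $P'(n)(n_f+n)-2P(n)$ and collect powers of $n$.

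The $n^2$ terms are $2\alpha_1\varepsilon^2n^2-2\alpha_1\varepsilon^2n^2=0$. This cancellation is the key simplification, and it is why the sign is governed by an affine function.

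The $n$ terms are $2\alpha_1\varepsilon^2 n_f n+\alpha_3\sigma_g^2 n-2\alpha_3\sigma_g^2 n=(2\alpha_1\varepsilon^2 n_f-\alpha_3\sigma_g^2)n$.

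The constant terms are $\alpha_3\sigma_g^2 n_f-2\alpha_3 n_f\sigma^2=\alpha_3n_f(\sigma_g^2-2\sigma^2)=\alpha_3 n_f(\Delta-\sigma^2)$, using $\sigma_g^2=\sigma^2+\Delta$.

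Together these give \eqref{eq:phi_prime}.

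Since $N^3>0$ for $n\ge0$ (recall $n_f>0$), critical points are the zeros of the affine numerator $\mathfrak a n+\mathfrak b$. If $\mathfrak a\neq0$, the unique zero is $n^*=-\mathfrak b/\mathfrak a=\alpha_3 n_f(\sigma^2-\Delta)/(2\alpha_1\varepsilon^2n_f-\alpha_3\sigma_g^2)$. Dividing numerator and denominator by $\alpha_1>0$ and using $\eta=\alpha_3/\alpha_1$ from \eqref{eq:eta_def} gives the second form of \eqref{eq:n_star}. It lies in the interior of the domain $(0,\infty)$ exactly when $n^*>0$.

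If $\mathfrak a=0$, the numerator is the constant $\mathfrak b=\alpha_3n_f(\Delta-\sigma^2)$. Because $n_f>0$, this is nonzero unless $\alpha_3(\Delta-\sigma^2)=0$. When it is nonzero there is no critical point. When it is zero, $\phi'\equiv0$, so $\phi$ is constant.

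For the limit, we divide the numerator and denominator of $\phi$ by $n^2$. The numerator becomes $\alpha_1\varepsilon^2+\alpha_3(n_f\sigma^2/n^2+\sigma_g^2/n)$, which tends to $\alpha_1\varepsilon^2$. The denominator becomes $(n_f/n+1)^2$, which tends to $1$. Hence $\phi(n)\to\alpha_1\varepsilon^2$.

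The only step needing care is the algebra in the constant term and the cancellation of the $n^2$ coefficient. Everything else is immediate.
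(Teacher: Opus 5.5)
Your proposal is correct and follows the paper's proof essentially step for step. Both use the quotient rule in the form $\phi'(n)=(P'(n)N-2P(n))/N^3$, cancel the $n^2$ terms, use $\sigma_g^2-2\sigma^2=\Delta-\sigma^2$ for the constant term, solve the affine numerator (including the degenerate constant-numerator case), and divide by $n^2$ for the limit; your opening expression $P(n)/(n_f+n)^3\cdot(n_f+n)$ is just an odd way of writing $P(n)/N^2$ and is harmless.
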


\begin{proof}
From Eq.~\eqref{eq:phi_def}, $\phi(n) = P(n)/N^2$ where $P(n) = \alpha_1 n^2\varepsilon^2 + \alpha_3 n_f\sigma^2 + \alpha_3 n\sigma_g^2$ and $N = n_f + n$.
By the quotient rule:
\begin{align}
    \phi'(n) &= \frac{P'(n) N^2 - P(n) \cdot 2N}{N^4} = \frac{P'(n)N - 2P(n)}{N^3}, \label{eq:phi_prime_raw}
\end{align}
where $P'(n) = 2\alpha_1 n\varepsilon^2 + \alpha_3\sigma_g^2$.
Expanding the numerator:
\begin{align}
    &(2\alpha_1 n\varepsilon^2 + \alpha_3\sigma_g^2)(n_f + n) - 2(\alpha_1 n^2\varepsilon^2 + \alpha_3 n_f\sigma^2 + \alpha_3 n\sigma_g^2) \nonumber \\
    &= 2\alpha_1 n\varepsilon^2 n_f + 2\alpha_1 n^2\varepsilon^2 + \alpha_3\sigma_g^2 n_f + \alpha_3\sigma_g^2 n - 2\alpha_1 n^2\varepsilon^2 - 2\alpha_3 n_f\sigma^2 - 2\alpha_3 n\sigma_g^2 \nonumber \\
    &= 2\alpha_1\varepsilon^2 n_f\, n + \alpha_3 n_f(\sigma_g^2 - 2\sigma^2) - \alpha_3\sigma_g^2 n \nonumber \\
    &= (2\alpha_1\varepsilon^2 n_f - \alpha_3\sigma_g^2)\,n + \alpha_3 n_f(\Delta - \sigma^2), \label{eq:num_phi_prime}
\end{align}
where we used $\sigma_g^2 - 2\sigma^2 = \Delta - \sigma^2$.
When its slope is nonzero, setting the numerator to zero yields Eq.~\eqref{eq:n_star}, with an interior solution exactly when $n^*>0$.
When the slope is zero, the numerator is the constant $\alpha_3 n_f(\Delta-\sigma^2)$, so the derivative either never vanishes or is identically zero.

For the limit, divide numerator and denominator of $\phi(n)$ by $n^2$:
\begin{equation*}
    \phi(n) = \frac{\alpha_1\varepsilon^2 + \alpha_3 n_f\sigma^2/n^2 + \alpha_3\sigma_g^2/n}{(n_f/n + 1)^2} \xrightarrow{n \to \infty} \alpha_1\varepsilon^2. \qedhere
\end{equation*}
\end{proof}

Define the slope and intercept of the numerator in Eq.~\eqref{eq:phi_prime} as:
\begin{equation}
    \mathfrak{a} := 2\alpha_1\varepsilon^2 n_f - \alpha_3\sigma_g^2, \qquad \mathfrak{b} := \alpha_3 n_f(\Delta - \sigma^2). \label{eq:ab_def}
\end{equation}
\begin{restatethm}[Four Regimes of Annotation Value]{thm:four_regimes_main}
In all cases, $\phi(n)\to\alpha_1\varepsilon^2$ as $n\to\infty$.
When $\mathfrak a=\mathfrak b=0$, $\phi$ is constant; otherwise, it falls into four regimes:
\begin{enumerate}[
    label=\textbf{(\Roman*)},
    leftmargin=2.5em,
    labelsep=0.5em,
    itemsep=2pt,
    topsep=2pt
]
    \item If $\mathfrak a\geq 0$ and $\mathfrak b\geq 0$, then $\phi$ is increasing, so annotations are locally harmful.
    
    \item If $\mathfrak a>0$ and $\mathfrak b<0$, then $\phi$ decreases and then increases, with unique continuous minimizer $n^* = \eta n_f(\sigma^2-\Delta) / (2\varepsilon^2 n_f-\eta\sigma_g^2)$.
    
    \item If $\mathfrak a\leq 0$ and $\mathfrak b\leq 0$, then $\phi$ is decreasing, so each additional annotation locally reduces the objective.

    \item If $\mathfrak a<0$ and $\mathfrak b>0$, then $\phi$ first increases and then decreases, with unique continuous maximizer $\bar n = \eta n_f(\Delta-\sigma^2) / (\eta\sigma_g^2-2\varepsilon^2 n_f)$.
    In this regime, a strict improvement over no annotation is possible if and only if $\varepsilon^2<\eta\sigma^2/n_f$.
\end{enumerate}
\end{restatethm}

\begin{proof}
The numerator of $\phi'(n)$ is $\mathfrak{a}n + \mathfrak{b}$, and the denominator $(n_f+n)^3 > 0$ for all $n \geq 0$.
Hence $\operatorname{sign}(\phi'(n)) = \operatorname{sign}(\mathfrak{a}n + \mathfrak{b})$ for $n > 0$.
If $\mathfrak a=\mathfrak b=0$, then $\phi'(n)=0$ for every $n\ge0$, so $\phi(n)=\phi(0)=\alpha_1\varepsilon^2$ throughout.
When $\eta=0$, we have $\alpha_3=0$ and $\phi(n)=\alpha_1\varepsilon^2n^2/(n_f+n)^2$; thus $\varepsilon\ne0$ gives Regime~(I), while $\varepsilon=0$ gives the constant zero objective, regardless of $\Delta$.
For $\eta>0$, equivalently $\alpha_3>0$, we translate the four sign patterns into the source conditions in Table~\ref{tab:regimes}, excluding the constant case.

\textbf{Regime (I):} $\mathfrak{a} \geq 0$ and $\mathfrak{b} \geq 0$ (not both zero).
Then $\mathfrak{a}n + \mathfrak{b} > 0$ for all $n > 0$, so $\phi'(n) > 0$.
The condition $\mathfrak{b} \geq 0$ is $\Delta \geq \sigma^2$.
The condition $\mathfrak{a} \geq 0$ is $2\alpha_1\varepsilon^2 n_f \geq \alpha_3\sigma_g^2$, which gives $\varepsilon^2 \geq \frac{\alpha_3\sigma_g^2}{2\alpha_1 n_f} = \frac{\eta\sigma_g^2}{2n_f}$.

\textbf{Regime (II):} $\mathfrak{a} > 0$ and $\mathfrak{b} < 0$.
The affine function $\mathfrak{a}n + \mathfrak{b}$ starts negative at $n = 0$ and increases, crossing zero at $n^* = -\mathfrak{b}/\mathfrak{a} > 0$.
For $n < n^*$, $\phi'(n) < 0$ (decreasing); for $n > n^*$, $\phi'(n) > 0$ (increasing).
The condition $\mathfrak{b} < 0$ is $\Delta < \sigma^2$.
The condition $\mathfrak{a} > 0$ gives $\varepsilon^2 > \frac{\eta\sigma_g^2}{2n_f}$.

\textbf{Regime (III):} $\mathfrak a\le 0,\mathfrak b\le 0$ (not both zero) gives $\mathfrak a n+\mathfrak b\le 0$ for all $n>0$, with strict inequality whenever at least one of $\mathfrak a,\mathfrak b$ is strictly negative.
The conditions are $\Delta\le\sigma^2$ and $\varepsilon^2\le\eta\sigma_g^2/(2n_f)$, with at least one inequality strict.
In particular, $\Delta=\sigma^2$ is included when $\varepsilon^2<\eta\sigma^2/n_f$; equality in both conditions gives the constant case already treated.

\textbf{Regime (IV):} The sign transition is reversed: $\mathfrak{b} > 0$ gives $\phi'(0^+) > 0$, and $\mathfrak{a} < 0$ forces a sign change at the maximizer $\bar n=-\mathfrak b/\mathfrak a>0$ stated in the theorem.
For the sub-case analysis, we examine $\phi(n) - \phi(0)$.
The function increases up to $\bar n$, then decreases toward $\alpha_1\varepsilon^2$.

\textbf{Sub-case (IVa):} $\alpha_1\varepsilon^2 < \phi(0)$.
Because $\lim_{n\to\infty}\phi(n)<\phi(0)$, there is a finite $T>\bar n$ with $\phi(T)<\phi(0)$.
The intermediate value theorem on $[\bar n,T]$, together with $\phi(\bar n)>\phi(0)$ and strict decrease after $\bar n$, gives a unique break-even count $n_{\mathrm{be}}>\bar n$ satisfying $\phi(n_{\mathrm{be}})=\phi(0)$.
Consequently, $\phi(n)<\phi(0)$ exactly when $n>n_{\mathrm{be}}$; under a finite capacity, this improvement is attainable only if such a count is feasible.

\textbf{Sub-case (IVb):} $\alpha_1\varepsilon^2 \geq \phi(0)$.
We show $\phi(n) \geq \phi(0)$ for all $n \geq 0$.
Since $\phi$ is increasing on $(0,\bar n)$, we have $\phi(n)>\phi(0)$ for $n\in(0,\bar n]$.
On $(\bar n,\infty)$, $\phi$ is strictly decreasing toward $\alpha_1\varepsilon^2\ge\phi(0)$.
Thus $\phi(n)>\alpha_1\varepsilon^2\ge\phi(0)$ for every $n>\bar n$, proving the claim, including the boundary case $\alpha_1\varepsilon^2=\phi(0)$.

The sub-case condition $\alpha_1\varepsilon^2 \lessgtr \phi(0) = \alpha_3\sigma^2/n_f$ translates to $\varepsilon^2 \lessgtr \eta\sigma^2/n_f$.
\end{proof}

\paragraph{Integer counts in Regime~(II).}
For the local problem with feasible counts $n\in\{0,\ldots,U\}$, where $U$ is a nonnegative integer, it suffices to compare $\phi$ at $\min\{\lfloor n^*\rfloor,U\}$ and $\min\{\lceil n^*\rceil,U\}$.
If $n^*\le U$, monotonicity on each side of $n^*$ implies that every feasible integer has an objective value at least as large as that at $\lfloor n^*\rfloor$ or $\lceil n^*\rceil$.
If $n^*>U$, $\phi$ decreases throughout $[0,U]$, so both capped candidates equal the optimum $U$.
When the candidates coincide, including $U=0$ or integer $n^*$, only one evaluation is needed; if their objective values tie, either is optimal.

\subsection{Asymptotic Behavior and Bias Floor}
\begin{corollary}[Diminishing Marginal Returns]
\label{cor:diminishing}
In all regimes, the marginal effect of annotation vanishes asymptotically:
\begin{equation}
    |\phi'(n)| = \frac{|\mathfrak{a}\,n + \mathfrak{b}|}{(n_f + n)^3} = O\!\left(\frac{1}{n^2}\right) \quad \text{as } n \to \infty. \label{eq:diminishing}
\end{equation}
In Regime~(II), annotations beyond $n^*$ increase $\phi$, with marginal cost of magnitude $O(1/n^2)$.
In Regime~(III), and in Regime~(IV) beyond the maximizer $\bar n$, annotations decrease $\phi$, with marginal benefit of magnitude $O(1/n^2)$.
The discrete marginal change $\delta\phi(n)=\phi(n+1)-\phi(n)$ also has magnitude $O(1/n^2)$.
\end{corollary}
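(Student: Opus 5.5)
We work directly from the closed form of the derivative in Lemma~\ref{lem:critical}, $\phi'(n)=(\mathfrak a n+\mathfrak b)/(n_f+n)^3$, with $\mathfrak a,\mathfrak b$ as in \eqref{eq:ab_def}. These are fixed constants that do not depend on $n$, so the numerator grows at most linearly in $n$ while the denominator grows cubically.

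\textbf{Continuous bound.} For $n\ge1$ we have $|\mathfrak a n+\mathfrak b|\le(|\mathfrak a|+|\mathfrak b|)\,n$ and $(n_f+n)^3\ge n^3$. Hence $|\phi'(n)|\le(|\mathfrak a|+|\mathfrak b|)/n^2$, which gives \eqref{eq:diminishing}.

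\textbf{Regime-specific sign statements.} These follow from the sign analysis in the proof of Theorem~\ref{thm:four_regimes_main}.
\begin{itemize}
\item In Regime~(II), $\mathfrak a n+\mathfrak b>0$ for $n>n^*$. So $\phi$ increases beyond $n^*$, and the rate of increase is bounded by the estimate above.
\item In Regime~(III), the numerator is nonpositive for all $n>0$, so $\phi$ is nonincreasing.
\item In Regime~(IV), the numerator is negative for $n>\bar n$, so $\phi$ decreases beyond the maximizer.
\end{itemize}
In each case the size of the marginal effect inherits the $O(1/n^2)$ bound.

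\textbf{Discrete increment.} By the mean value theorem, $\delta\phi(n)=\phi'(\xi_n)$ for some $\xi_n\in(n,n+1)$. Then
\[
|\delta\phi(n)|\le(|\mathfrak a|+|\mathfrak b|)/\xi_n^2\le(|\mathfrak a|+|\mathfrak b|)/n^2 .
\]
As a cross-check, Lemma~\ref{lem:marginal} gives the same conclusion directly: the numerator of \eqref{eq:delta_phi} is a polynomial of degree at most $2$ in $n$, since the $\alpha_1$ term is $O(nN)$ and the others are $O(n^2)$, while the denominator $N^2(N+1)^2$ is of order $n^4$.

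\textbf{Main obstacle.} There is no substantive obstacle. The only care needed is to restrict to $n\ge1$, or to absorb $n_f$ into the constant, so that the bound is uniform, and to note that the degenerate case $\mathfrak a=\mathfrak b=0$ gives $\phi'\equiv0$, which satisfies the bound trivially.
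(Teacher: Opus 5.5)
Your proposal is correct and follows essentially the same argument as the paper: the bound $|\phi'(n)|\le(|\mathfrak a|+|\mathfrak b|)/n^2$ for $n\ge1$, the regime-specific signs read off from Theorem~\ref{thm:four_regimes_main}, and the mean value theorem applied to $\delta\phi(n)$. One small sharpening: outside the constant case, the numerator in Regime~(III) is strictly negative for every $n>0$, so $\phi$ is strictly decreasing there rather than merely nonincreasing.
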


\begin{proof}
For $n\ge1$, $|\mathfrak a n+\mathfrak b|\le(|\mathfrak a|+|\mathfrak b|)n$ and $(n_f+n)^3\ge n^3$, so $|\phi'(n)|\le(|\mathfrak a|+|\mathfrak b|)/n^2$.
The signs follow from Theorem~\ref{thm:four_regimes_main}: $\phi'>0$ after the minimum in Regime~(II), whereas $\phi'<0$ throughout Regime~(III) and after the maximum in Regime~(IV).
For integer $n\ge1$, the mean value theorem gives $\delta\phi(n)=\phi'(\xi_n)$ for some $\xi_n\in(n,n+1)$, yielding the same bound on its magnitude.
\end{proof}

\begin{corollary}[Irreducible Bias Floor]
\label{cor:floor}
Regardless of regime, $\lim_{n \to \infty}\phi(n) = \alpha_1\varepsilon^2$.
The asymptotic reduction relative to no annotation is
\begin{equation}
    \phi(0) - \alpha_1\varepsilon^2 = \frac{\alpha_1}{n_f}\bigl[\eta\,\sigma^2 - n_f\varepsilon^2\bigr]. \label{eq:max_benefit}
\end{equation}
This quantity is positive if and only if $\varepsilon^2 < \frac{\eta\,\sigma^2}{n_f}$, which is exactly the condition separating sub-cases~(IVa) from~(IVb).
The condition holds throughout the nonconstant Regime~(III), including the boundary $\Delta=\sigma^2$.
In Regime~(II), the asymptotic floor may exceed $\phi(0)$ when $\varepsilon$ is large; however, the local optimum $\phi(n^*)$ always satisfies $\phi(n^*) < \phi(0)$ since $\phi'(0^+) < 0$ guarantees initial descent.
\end{corollary}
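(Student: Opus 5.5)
We have to prove Corollary (Irreducible Bias Floor). The limit statement is already established in Lemma~\ref{lem:critical}, so the plan is to reuse it directly and then verify the remaining three claims by explicit computation.

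\textbf{Step 1: the asymptotic reduction.} Evaluating \eqref{eq:phi_def} at $n=0$ gives $\phi(0)=\alpha_3 n_f\sigma^2/n_f^2=\alpha_3\sigma^2/n_f$. Substituting $\alpha_3=\eta\alpha_1$ from \eqref{eq:eta_def} and factoring out $\alpha_1/n_f$ yields
\[
\phi(0)-\alpha_1\varepsilon^2=\frac{\alpha_1}{n_f}\bigl[\eta\sigma^2-n_f\varepsilon^2\bigr].
\]
Since $\alpha_1>0$ and $n_f>0$, positivity holds if and only if $\varepsilon^2<\eta\sigma^2/n_f$. This matches the sub-case translation at the end of the proof of Theorem~\ref{thm:four_regimes_main}.

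\textbf{Step 2: Regime~(III).} Here the objective is nonconstant, with $\mathfrak a\le0$, $\mathfrak b\le0$, and not both zero. By Theorem~\ref{thm:four_regimes_main}, $\phi'(n)<0$ for all $n>0$, so $\phi$ is strictly decreasing on $[0,\infty)$. Consequently $\phi(0)>\phi(1)>\lim_{n\to\infty}\phi(n)=\alpha_1\varepsilon^2$. Step~1 then gives $\varepsilon^2<\eta\sigma^2/n_f$. This argument avoids algebra on the boundary case $\Delta=\sigma^2$, which is covered automatically.

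As a sanity check, the inequality can also be obtained algebraically when $\Delta<\sigma^2$: one has $\varepsilon^2\le\eta\sigma_g^2/(2n_f)<\eta\sigma^2/n_f$, because $\sigma_g^2<2\sigma^2$.

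\textbf{Step 3: Regime~(II).} Here $\mathfrak b<0$, so $\phi'(0^+)=\mathfrak b/n_f^3<0$. The proof of Theorem~\ref{thm:four_regimes_main} shows $\phi$ is strictly decreasing on $[0,n^*]$ with $n^*=-\mathfrak b/\mathfrak a>0$, hence $\phi(n^*)<\phi(0)$.

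To show the floor may exceed $\phi(0)$, it suffices to note that Regime~(II) only requires $\varepsilon^2>\eta\sigma_g^2/(2n_f)$. Choosing $\varepsilon^2\ge\eta\sigma^2/n_f$, which is larger because $\sigma_g^2<2\sigma^2$, gives $\alpha_1\varepsilon^2\ge\phi(0)$ by Step~1.

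\textbf{Main obstacle.} No step is genuinely hard. The only delicate point is Step~2 at the boundary $\Delta=\sigma^2$, and it is handled by the monotonicity argument rather than by direct algebra.
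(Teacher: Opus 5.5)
Your proof is correct. Steps 1 and 3 follow the paper's proof. For Regime~(II), your use of strict decrease on all of $[0,n^*]$ is a slightly more direct way to reach $\phi(n^*)<\phi(0)$ than the paper's chain $\phi(n^*)<\phi(\epsilon)<\phi(0)$.

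The genuine difference is in Regime~(III). The paper argues algebraically. It first notes that the nonconstant regime forces $\eta>0$, then splits into two cases. For $\Delta<\sigma^2$ it uses $\varepsilon^2\le\eta\sigma_g^2/(2n_f)<\eta\sigma^2/n_f$. For $\Delta=\sigma^2$, the fact $\mathfrak b=0$ forces $\mathfrak a<0$, which is the condition itself. You instead combine strict monotonicity with the limit: $\mathfrak a n+\mathfrak b<0$ for all $n>0$ once $\mathfrak a,\mathfrak b\le 0$ are not both zero. This gives $\phi(0)>\alpha_1\varepsilon^2$, and the condition then follows from Step~1.

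Your route is uniform, with no case split and no separate check that $\eta>0$. It also shows why the condition must hold: a strictly decreasing function lies strictly above its limit. The paper's route instead pins down which inequality produces strictness in each sub-case.

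You also give an explicit witness that the Regime~(II) floor can exceed $\phi(0)$, which the paper only asserts. Two small fixes here:
\begin{itemize}
\item To get strict exceedance, take $\varepsilon^2>\eta\sigma^2/n_f$ rather than $\ge$.
\item The strict comparison $\eta\sigma_g^2/(2n_f)<\eta\sigma^2/n_f$ that places this choice inside Regime~(II) needs $\eta>0$. This holds automatically, since $\mathfrak b<0$ forces $\alpha_3>0$.
\end{itemize}
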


\begin{proof}
The limit and the formula follow from Lemma~\ref{lem:critical}.
Regime~(III) requires $\eta>0$, because $\eta=0$ gives either Regime~(I) or the constant case.
If $\Delta<\sigma^2$, then $\varepsilon^2\le\eta\sigma_g^2/(2n_f)<\eta\sigma^2/n_f$.
If $\Delta=\sigma^2$, then $\mathfrak b=0$ and the nonconstant Regime~(III) requires $\mathfrak a<0$, which gives $\varepsilon^2<\eta\sigma^2/n_f$ directly.
In the excluded constant case, $\phi(0)=\alpha_1\varepsilon^2$, so the asymptotic reduction is zero.
For Regime~(II): $\mathfrak{b} < 0$ implies $\phi'(0^+) = \mathfrak{b}/n_f^3 < 0$, so $\phi$ is initially decreasing.
Since $n^*$ is the unique minimum of $\phi$ on $(0,\infty)$, continuity gives $\phi(n^*) < \phi(\epsilon) < \phi(0)$ for sufficiently small $\epsilon > 0$.
\end{proof}

\subsection{Cross-Action Coupling within a Context}\label{sec: bias_in_states}

When multiple actions are annotated within the same context, the marginal value of an annotation also depends on how it changes the aggregate bias in $J_2$.
We compute this correction to identify when it strengthens or weakens the local benefit derived above.

\begin{lemma}[Marginal change including $J_2$]
\label{lem:multi_action}
Consider adding one annotation (from $n$ to $n+1$) for action $a$ at context $s$.
The total marginal change in the objective, including the bias correction term $J_2$, is:
\begin{equation}
    \delta J(n) = \delta\phi(n) - \delta J_2(n), \label{eq:delta_total}
\end{equation}
where $\delta\phi(n)$ is the local change from Lemma~\ref{lem:marginal} and:
\begin{equation}
    \delta J_2(n) = d_0(s) \left[ 2\mathcal{B}_s \cdot \pi_e(a|s) \frac{n_f \varepsilon}{N(N+1)} + \left( \pi_e(a|s) \frac{n_f \varepsilon}{N(N+1)} \right)^2 \right], \label{eq:delta_J2}
\end{equation}
with $\mathcal{B}_s := \sum_{a' \in \mathcal{A}} \pi_e(a'|s) B(s,a';\mathbf n)$ being the current weighted bias accumulation, including default-model biases at pairs with $N(s,a')=0$.
\end{lemma}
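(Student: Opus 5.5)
Incrementing $n_g(s,a)$ from $n$ to $n+1$ changes only quantities indexed by $(s,a)$ in $J_1$ and $J_3$, and only $\mathcal B_s$ in $J_2$. So the argument splits into the two terms of \eqref{eq:delta_total}. For $J_1+J_3$, the only summand affected is the $(s,a)$ one, which is exactly $\phi(n)$ in \eqref{eq:phi_def}. Its increment is therefore $\delta\phi(n)$ from Lemma~\ref{lem:marginal}, and nothing new is needed there.

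For the $J_2$ part, write $J_2=\sum_{s'}d_0(s')\mathcal B_{s'}^2$. Only the context $s$ summand changes, and within it only the term $\pi_e(a|s)B(s,a;\mathbf n)$ of $\mathcal B_s$. Let $\Delta B:=B(s,a;n+1)-B(s,a;n)$. Then $\mathcal B_s$ shifts to $\mathcal B_s+\pi_e(a|s)\Delta B$, and the identity $(x+y)^2-x^2=2xy+y^2$ gives
\[
\delta J_2(n)=d_0(s)\bigl[2\mathcal B_s\pi_e(a|s)\Delta B+(\pi_e(a|s)\Delta B)^2\bigr].
\]
Here $\mathcal B_s$ is the current value, including default biases $r_0-\mu_r$ at pairs with $N(s,a')=0$. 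Those pairs are untouched, so their contributions simply sit inside $\mathcal B_s$.

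It remains to compute $\Delta B$. With a single source, $B=n\varepsilon/N$ where $N=n_f+n$, so
\[
\Delta B=\frac{(n+1)\varepsilon}{N+1}-\frac{n\varepsilon}{N}=\varepsilon\,\frac{(n+1)N-n(N+1)}{N(N+1)}=\frac{n_f\varepsilon}{N(N+1)}.
\]
Substituting this yields \eqref{eq:delta_J2}. Since $J=J_1+J_3-J_2$, the total change is $\delta\phi(n)-\delta J_2(n)$.

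The main point needing care is the standing assumption $n_f>0$. It guarantees $N\ge1$ at both $n$ and $n+1$, so the formula $B=n\varepsilon/N$ applies and no switch from the default value $r_0$ occurs. The case $n_f=0$, $n=0$ would involve the default and is deferred to Appendix~\ref{sec:nf_zero}, consistent with the setup. Otherwise the proof is routine algebra.
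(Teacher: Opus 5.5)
Your proof is correct and matches the paper's argument: both compute $\delta B = n_f\varepsilon/[N(N+1)]$, expand $d_0(s)\bigl[(\mathcal B_s+\pi_e(a|s)\,\delta B)^2-\mathcal B_s^2\bigr]$, and use the minus sign on $J_2$ to obtain $\delta J=\delta\phi-\delta J_2$. Two of your remarks are accurate details the paper leaves implicit: that the $(s,a)$ summand of $J_1+J_3$ is exactly $\phi(n)$, and that $n_f>0$ rules out a switch from the default value $r_0$.
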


\begin{proof}
The bias accumulation for context $s$ changes from $\mathcal{B}_s$ to $\mathcal{B}_s' = \mathcal{B}_s + \pi_e(a|s)\,\delta B(s,a)$, where:
\begin{align}
    \delta B(s,a) &:= \frac{(n+1)\varepsilon}{N+1} - \frac{n\varepsilon}{N} = \varepsilon \frac{(n+1)N - n(N+1)}{N(N+1)} = \frac{n_f \varepsilon}{N(N+1)}.  \label{eq:delta_B}
\end{align}
Therefore:
\begin{align}
    \delta J_2(n) &= d_0(s)\bigl[(\mathcal{B}_s')^2 - \mathcal{B}_s^2\bigr] = d_0(s)\bigl[2\mathcal{B}_s \cdot \pi_e(a|s)\,\delta B + \bigl(\pi_e(a|s)\,\delta B\bigr)^2\bigr]. \label{eq:dJ2_expand}
\end{align}
Since $-J_2$ appears in the objective, the total change is $\delta J = \delta\phi - \delta J_2$.
\end{proof}

\begin{remark}[Sign of $\delta J_2$]
The quadratic term in~\eqref{eq:delta_J2} is non-negative, but the linear term $2d_0(s)\mathcal{B}_s\,\pi_e(a|s)\,n_f\varepsilon/[N(N+1)]$ has the sign of $\mathcal{B}_s\cdot\varepsilon$.
Therefore $\delta J(n)$ may be either smaller or larger than $\delta\phi(n)$ depending on whether the new bias correlates with or opposes the existing aggregate bias $\mathcal{B}_s$.
This is exactly why allocation decisions cannot be made independently across actions.
\end{remark}

\begin{theorem}[Threshold including bias correction]
\label{thm:corrected_threshold}
Assume $\alpha_3>0$.
If the aggregate bias before the first annotation for action $a$ is $\mathcal{B}_s^{(0)}=0$, then this annotation does not increase the objective ($\delta J(0)\le 0$) if and only if
\begin{equation}
    \varepsilon^2 \;\leq\; \frac{(n_f+1)\sigma^2}{n_f} - \Delta. \label{eq:corrected_simple}
\end{equation}
Equality in \eqref{eq:corrected_simple} gives $\delta J(0)=0$, whereas strict inequality gives $\delta J(0)<0$.
When the right-hand side of~\eqref{eq:corrected_simple} is positive, including $J_2$ multiplies the allowable squared bias from Theorem~\ref{thm:first_threshold_main} by $1/\eta>1$.
For general $\mathcal{B}_s^{(0)}\neq 0$, the threshold becomes
\begin{equation}
    \alpha_3\,\varepsilon^2 \;\leq\; \frac{\alpha_3\bigl[(n_f+1)\sigma^2-n_f\Delta\bigr]}{n_f}
    + 2 d_0(s)\,\mathcal{B}_s^{(0)}\,\pi_e(a|s)\,(n_f+1)\,\varepsilon, \label{eq:corrected_thresh_general}
\end{equation}
which couples the allowable bias of action $a$ to the existing aggregate bias of context $s$.
\end{theorem}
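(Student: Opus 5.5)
The plan is to compute the exact marginal change $\delta J(0)$ from Lemma~\ref{lem:multi_action} and simplify it to a single expression whose sign gives both thresholds. By \eqref{eq:delta_total}, $\delta J(0)=\delta\phi(0)-\delta J_2(0)$. The first term is already available in closed form from \eqref{eq:delta0_simplified}:
\[
\delta\phi(0)=\frac{\alpha_1\varepsilon^2 n_f+\alpha_3\bigl[n_f\Delta-(n_f+1)\sigma^2\bigr]}{n_f(n_f+1)^2}.
\]
For the second term, I will substitute $n=0$ and $N=n_f$ into \eqref{eq:delta_B}, which gives $\delta B=\varepsilon/(n_f+1)$. Then \eqref{eq:delta_J2} becomes
\[
\delta J_2(0)=d_0(s)\Bigl[\tfrac{2\mathcal B_s^{(0)}\pi_e(a|s)\varepsilon}{n_f+1}+\tfrac{\pi_e(a|s)^2\varepsilon^2}{(n_f+1)^2}\Bigr].
\]
Throughout I assume $n_f>0$, as in the setup of this appendix. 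The hypothesis $\alpha_3>0$ forces $\pi_e(a|s)>0$ and hence $\alpha_1>0$.

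The key algebraic step is to rewrite the quadratic coefficient $d_0(s)\pi_e(a|s)^2$ in terms of the objective weights. From the definitions, $\alpha_1=d_0\pi_e^2/\pi_b$ and $\alpha_3=(1-\pi_b)\alpha_1=\alpha_1-d_0\pi_e^2$, so $d_0(s)\pi_e(a|s)^2=\alpha_1-\alpha_3$. I then multiply $\delta J(0)$ by the positive factor $(n_f+1)^2$. The $\alpha_1\varepsilon^2$ contributions cancel, and $\alpha_1-\alpha_3$ leaves exactly $\alpha_3\varepsilon^2$:
\[
(n_f+1)^2\,\delta J(0)=\alpha_3\varepsilon^2-\frac{\alpha_3\bigl[(n_f+1)\sigma^2-n_f\Delta\bigr]}{n_f}-2d_0(s)\mathcal B_s^{(0)}\pi_e(a|s)(n_f+1)\varepsilon.
\]
Requiring this to be $\le 0$ is precisely \eqref{eq:corrected_thresh_general}. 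Equality and strict inequality carry over unchanged, since we only multiplied by positive constants.

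For the special case $\mathcal B_s^{(0)}=0$, the last term vanishes. Dividing by $\alpha_3>0$ gives \eqref{eq:corrected_simple}, again with the equality and strict cases preserved. To compare with Theorem~\ref{thm:first_threshold_main}, note that when the right-hand side $R:=(n_f+1)\sigma^2/n_f-\Delta$ is positive, the old threshold is $\varepsilon^2\le\eta R$ and the new one is $\varepsilon^2\le R=(1/\eta)\cdot\eta R$. Here $0<\eta<1$, because $\alpha_3>0$ and $\pi_b(a|s)>0$, so $1/\eta>1$.

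The main obstacle, though it is minor, is the cancellation step. It hinges on recognizing the identity $d_0\pi_e^2=\alpha_1-\alpha_3$, which turns the quadratic part of $\delta J_2$ into something that cancels the $\alpha_1\varepsilon^2$ term of $\delta\phi(0)$. Without this identity the expression does not visibly reduce to an $\eta$-free form. I will also check the bookkeeping of the factor $n_f$ between $\delta\phi(0)$, with denominator $n_f(n_f+1)^2$, and $\delta J_2(0)$, with denominator $(n_f+1)^2$. This check is what produces the $(n_f+1)$ factor on the cross term in \eqref{eq:corrected_thresh_general}.
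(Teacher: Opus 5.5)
Your proposal is correct and follows the paper's proof essentially step for step. Like the paper, you substitute $\delta\phi(0)$ from \eqref{eq:delta0_simplified} and $\delta J_2(0)$ with $\delta B=\varepsilon/(n_f+1)$, multiply by $(n_f+1)^2$, use the identity $\alpha_1-d_0(s)\pi_e(a|s)^2=\alpha_3$ to obtain \eqref{eq:corrected_thresh_general}, and then set $\mathcal B_s^{(0)}=0$ and divide by $\alpha_3$; your explicit check that $0<\eta<1$ under $\alpha_3>0$ and common support is a small detail the paper leaves implicit.
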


\begin{proof}
The condition $\delta J(0) \leq 0$ is equivalent to $\delta\phi(0) \leq \delta J_2(0)$.
At $n=0$, $N = n_f$ and the increment $\delta B(s,a) = \varepsilon/(n_f+1)$ (Eq.~\ref{eq:delta_B}).
Substituting $\delta\phi(0)$ from Eq.~\eqref{eq:delta0_simplified} and $\delta J_2(0)$ from Eq.~\eqref{eq:dJ2_expand}:
\begin{equation}
    \frac{\alpha_1\varepsilon^2 n_f + \alpha_3(n_f\Delta - (n_f+1)\sigma^2)}{n_f(n_f+1)^2} \leq \frac{d_0(s)\,2\mathcal{B}_s^{(0)}\pi_e(a|s)\,\varepsilon}{n_f+1} + \frac{d_0(s)\,\pi_e(a|s)^2\,\varepsilon^2}{(n_f+1)^2}. \label{eq:thresh_start}
\end{equation}
Multiplying both sides by $(n_f+1)^2$ and rearranging, we collect $\varepsilon^2$ terms:
\begin{equation}
    \bigl(\alpha_1 - d_0(s)\pi_e(a|s)^2\bigr)\varepsilon^2 \leq \frac{\alpha_3\bigl((n_f+1)\sigma^2 - n_f\Delta\bigr)}{n_f} + 2d_0(s)\,\mathcal{B}_s^{(0)}\,\pi_e(a|s)\,(n_f+1)\,\varepsilon. \label{eq:thresh_collected}
\end{equation}
The key identity is:
\begin{equation}
    \alpha_1 - d_0(s)\pi_e(a|s)^2 = d_0(s)\pi_e(a|s)^2\!\left(\frac{1}{\pi_b(a|s)} - 1\right) = \alpha_3. \label{eq:key_identity}
\end{equation}
Substitute Eq.~\eqref{eq:key_identity} into LHS of Eq.~\eqref{eq:thresh_collected} yields Eq.~\eqref{eq:corrected_thresh_general}.
Setting $\mathcal{B}_s^{(0)}=0$ and dividing by $\alpha_3>0$ gives Eq.~\eqref{eq:corrected_simple}.
All transformations preserve equality and strict inequality, proving the corresponding statements for $\delta J(0)=0$ and $\delta J(0)<0$.
\end{proof}

\subsection{Extension to Arbitrary Context-Action Pairs}

In~\eqref{eq:final_objective}, each aggregate bias $\mathcal B_s$ depends only on allocations within context $s$.
Thus the objective is additive across contexts: $J(\mathbf n)=\sum_{s\in\mathcal S}J_s(\mathbf n_s)$, where $\mathbf n_s:=\{n_k(s,a):a\in\mathcal A,\ k\in\mathcal K\}$ and $J_s$ contains that context's terms.\label{eq:state_sep}
The shared budget in~\eqref{eq:optimization_problem} nevertheless couples the feasible choices across contexts.

The four regimes in Theorem~\ref{thm:four_regimes_main} and the diminishing marginal effects in Corollary~\ref{cor:diminishing} describe the local objective $\phi$, which omits $J_2$.
They do not directly characterize the coupled context objective $J_s$.
For a single annotation with other allocations held fixed, Theorem~\ref{thm:corrected_threshold} gives the full first-annotation condition through the current aggregate bias.
Joint allocation must therefore account for both within-context bias coupling and the shared budget, as in \S\ref{sec:multi_source}.

\subsection{The Case \texorpdfstring{$n_f(s,a) = 0$}{n\_f(s,a) = 0}}
\label{sec:nf_zero}
When $n_f(s,a)=0$, the fitting data contain no observations for the pair $(s,a)$.
This can occur even when $\pi_b(a|s)>0$, so an empty fitting cell does not by itself violate common support.
The formulas in Theorem~\ref{thm:four_regimes_main} require $n_f > 0$.
We treat this case separately.

\subsubsection{\texorpdfstring{$\pi_b(a|s) > 0$ but $n_f(s,a) = 0$}{pi\_b(a|s) > 0 but n\_f(s,a) = 0}}

We retain the single-source setting and consider pairs with $\alpha_1>0$; if $\alpha_1=0$, then $\alpha_3=0$ and the local objective is identically zero.
When no annotations have been acquired, $N(s,a)=0$ and we use the fixed default $\hat r^+(s,a)=r_0(s,a)$.
Then
\begin{equation}
    \hat{\varepsilon}_{\hat{r}^+}(s,a)\big|_{N=0} = r_0(s,a) - \mu_r(s,a),
    \qquad
    \operatorname{Var}_{\mathcal{D}}(\hat{r}^+(s,a))\big|_{N=0} = 0.
    \label{eq:default_model}
\end{equation}
The baseline local objective is:
\begin{equation}
    \phi_0 := \alpha_1 \bigl(\mu_r(s,a) - r_0(s,a)\bigr)^2.
    \label{eq:phi_baseline}
\end{equation}
For $n\ge1$, $N=n$ and the local objective (Eq.~\ref{eq:phi_def}) becomes:
\begin{equation}
    \phi(n) = \frac{\alpha_1 n^2 \varepsilon^2 + \alpha_3 n \sigma_g^2}{n^2}
            = \alpha_1\varepsilon^2 + \frac{\alpha_3\sigma_g^2}{n}, \qquad n \geq 1.
    \label{eq:phi_nf0}
\end{equation}
Substituting $n_f=0$ into Eq.~\eqref{eq:ab_def} gives $\mathfrak a=-\alpha_3\sigma_g^2\le0$ and $\mathfrak b=0$.
The derivative $\phi'(n)=-\alpha_3\sigma_g^2/n^2\le0$ shows that the local objective is nonincreasing for $n\ge1$, with strict decrease exactly when $\alpha_3\sigma_g^2>0$.

\begin{proposition}[Properties When $n_f = 0$ and $\pi_b(a|s) > 0$]
\label{prop:nf_zero_properties}
For $n_f(s,a)=0$ and $\alpha_1>0$, the local objective has the following properties:
\begin{enumerate}
     \item \textbf{First Annotation Threshold} The transition from $n = 0$ to $n = 1$ satisfies:
    \begin{equation}
        \delta\phi(0 \to 1) := \phi(1) - \phi_0
        = \alpha_1\bigl[\varepsilon^2 - (\mu_r - r_0)^2\bigr]
          + \alpha_3\sigma_g^2.
        \label{eq:delta_01}
    \end{equation}
    This is positive (i.e., the first annotation is locally harmful) if and only if
    \begin{equation}
        \varepsilon^2 > (\mu_r - r_0)^2 - \eta\,\sigma_g^2.
        \label{eq:first_harmful_nf0}
    \end{equation}
    Equality in \eqref{eq:first_harmful_nf0} leaves the local objective unchanged, and reversing the strict inequality gives a strict decrease.
    For an exact default $r_0=\mu_r$, the first annotation is locally harmful if and only if $\varepsilon^2+\eta\sigma_g^2>0$.
    Conversely, if the default is poor ($(\mu_r - r_0)^2$ large), the first annotation can be beneficial even with high bias.
    
    \item \textbf{Nonincreasing objective for $n\ge1$:} The marginal benefit of the $(n+1)$-th annotation is
    \begin{equation}
        -\delta\phi(n) = \frac{\alpha_3\sigma_g^2}{n(n+1)}
                       = O\!\left(\frac{1}{n^2}\right).
        \label{eq:marginal_nf0}
    \end{equation}
    The decrease is strict if $\alpha_3\sigma_g^2>0$; otherwise, $\phi(n)=\alpha_1\varepsilon^2$ for every $n\ge1$.

    \item \textbf{Bias is never diluted:} The bias of the reward estimator is $\hat{\varepsilon}_{\hat{r}^+}(s,a) = \varepsilon$ for all $n \geq 1$, independent of the annotation count.
    Without factual data, there is no unbiased component to attenuate the annotation bias.

    \item \textbf{Irreducible floor:} For $n\ge1$, the local objective is bounded below by $\alpha_1\varepsilon^2$, which cannot be reduced by increasing $n$.
    A net improvement over the baseline ($\phi(n) < \phi_0$ for some $n \geq 1$) requires:
    \begin{equation}
        \varepsilon^2 < (\mu_r - r_0)^2.
        \label{eq:net_improvement_nf0}
    \end{equation}
    That is, the annotation bias must be strictly smaller in magnitude than the default model's bias.
\end{enumerate}
\end{proposition}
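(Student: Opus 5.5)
The plan is to work directly from the explicit formulas \eqref{eq:phi_baseline} and \eqref{eq:phi_nf0}. These give $\phi_0=\alpha_1(\mu_r-r_0)^2$ at $n=0$ and $\phi(n)=\alpha_1\varepsilon^2+\alpha_3\sigma_g^2/n$ for $n\ge1$. All four items then reduce to elementary algebra on these two expressions, together with the identity $\alpha_3=\eta\alpha_1$ from \eqref{eq:eta_def} and the standing assumption $\alpha_1>0$.

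\textbf{Item 1.} Evaluate $\phi(1)=\alpha_1\varepsilon^2+\alpha_3\sigma_g^2$ and subtract $\phi_0$; this gives \eqref{eq:delta_01} immediately. Dividing by $\alpha_1>0$ and substituting $\alpha_3/\alpha_1=\eta$, the sign of $\delta\phi(0\to1)$ equals the sign of $\varepsilon^2-(\mu_r-r_0)^2+\eta\sigma_g^2$. This yields the ``if and only if'' in \eqref{eq:first_harmful_nf0}, and the equality and reversed-inequality cases follow because every step is an equivalence. Setting $r_0=\mu_r$ gives the stated criterion $\varepsilon^2+\eta\sigma_g^2>0$. The remark about a poor default is read off because a large $(\mu_r-r_0)^2$ makes the right-hand side large.

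\textbf{Item 2.} For $n\ge1$, compute $\phi(n)-\phi(n+1)=\alpha_3\sigma_g^2\bigl(1/n-1/(n+1)\bigr)=\alpha_3\sigma_g^2/(n(n+1))$, which is $O(1/n^2)$. Its sign is strictly positive exactly when $\alpha_3\sigma_g^2>0$. Otherwise $\alpha_3\sigma_g^2=0$, and \eqref{eq:phi_nf0} is constant at $\alpha_1\varepsilon^2$.

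\textbf{Item 3.} Take $n_f=0$ in \eqref{eq:epsilon_r}. With a single source, $B=n\varepsilon/n=\varepsilon$ for every $n\ge1$.

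\textbf{Item 4.} Since $\alpha_3\sigma_g^2\ge0$, we have $\phi(n)\ge\alpha_1\varepsilon^2$ for $n\ge1$. This is the floor, and it is approached as $n\to\infty$.
\begin{itemize}
\item For necessity of \eqref{eq:net_improvement_nf0}: if $\phi(n)<\phi_0$ for some $n\ge1$, then $\alpha_1\varepsilon^2\le\phi(n)<\alpha_1(\mu_r-r_0)^2$, and dividing by $\alpha_1$ gives $\varepsilon^2<(\mu_r-r_0)^2$.
\item For sufficiency (which I would include to justify ``requires'' as the sharp condition): if $\varepsilon^2<(\mu_r-r_0)^2$, choose $n>\alpha_3\sigma_g^2/\bigl(\alpha_1[(\mu_r-r_0)^2-\varepsilon^2]\bigr)$; then $\phi(n)<\phi_0$.
\end{itemize}
This sufficiency direction ignores capacity, consistent with the local analysis.

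There is no real obstacle here. The only care point is bookkeeping of the strict versus nonstrict cases when $\alpha_3\sigma_g^2=0$, which occurs when $\eta=0$ or when $\sigma_g^2=0$.
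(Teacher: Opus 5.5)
Your proposal is correct and follows essentially the same route as the paper: it reads everything off $\phi_0=\alpha_1(\mu_r-r_0)^2$ and $\phi(n)=\alpha_1\varepsilon^2+\alpha_3\sigma_g^2/n$, divides by $\alpha_1>0$ with $\alpha_3=\eta\alpha_1$ for item 1, telescopes $1/n-1/(n+1)$ for item 2, sets $n_f=0$ in the bias formula for item 3, and uses the floor $\phi(n)\ge\alpha_1\varepsilon^2$ to prove necessity in item 4. The only difference is your added sufficiency direction in item 4 (taking $n$ large and ignoring capacity); it is correct, but the paper proves only that \eqref{eq:net_improvement_nf0} is necessary.
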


\begin{proof}
\textit{(1)}  From Eqs.~\eqref{eq:phi_nf0} and~\eqref{eq:phi_baseline}:
\begin{align}
    \delta\phi(0\to 1) 
    &= \phi(1) - \phi_0 \nonumber\\
    &= \bigl[\alpha_1\varepsilon^2 + \alpha_3\sigma_g^2\bigr]
       - \alpha_1(\mu_r - r_0)^2 \nonumber \\
    &= \alpha_1\bigl[\varepsilon^2 - (\mu_r - r_0)^2\bigr]
       + \alpha_3\sigma_g^2. \nonumber
\end{align}
Because $\alpha_1>0$, the condition $\delta\phi(0\to1)>0$ is equivalent to Eq.~\eqref{eq:first_harmful_nf0}, with equality or the reversed strict inequality corresponding to zero or negative change, respectively.
Substituting $r_0=\mu_r$ gives the exact-default statement.

\textit{(2)} From Eq.~\eqref{eq:phi_nf0}, $\delta\phi(n)=\alpha_3\sigma_g^2(1/(n+1)-1/n)=-\alpha_3\sigma_g^2/(n(n+1))\le0$.
The denominator is positive for $n\ge1$, so the decrease is strict exactly when $\alpha_3\sigma_g^2>0$, and its magnitude is at most $\alpha_3\sigma_g^2/n^2$.
If $\alpha_3\sigma_g^2=0$, Eq.~\eqref{eq:phi_nf0} gives the stated constant value.

\textit{(3)} When $n_f = 0$, $B(s,a) = \frac{n\varepsilon}{n} = \varepsilon$ for all $n \geq 1$.

\textit{(4)} Equation~\eqref{eq:phi_nf0} gives $\phi(n)\ge\alpha_1\varepsilon^2$ for every $n\ge1$, with $\lim_{n\to\infty}\phi(n)=\alpha_1\varepsilon^2$.
If $\phi(n)<\phi_0$ for a feasible $n\ge1$, then $\alpha_1\varepsilon^2<\phi_0=\alpha_1(\mu_r-r_0)^2$; dividing by $\alpha_1>0$ proves the necessary condition~\eqref{eq:net_improvement_nf0}.

\end{proof}

\subsubsection{\texorpdfstring{$\pi_b(a|s) = 0$ and $n_f(s,a) = 0$}{pi\_b(a|s) = 0 and n\_f(s,a) = 0}}
When $\pi_b(a|s)=0$, neither the fitting sample nor the evaluation sample contains observations of action $a$ at context $s$.
If also $\pi_e(a|s)=0$, the pair contributes nothing to the estimator or $J$, and the zero-weight convention gives $\alpha_1(s,a)=\alpha_3(s,a)=0$.
If $\pi_e(a|s)>0$, common support fails; at a context with $d_0(s)>0$, the importance-weighted correction cannot remove the reward-model error for that action.
The unbiasedness guarantee and MSE decomposition in Theorem~\ref{thm:objective} therefore do not generally apply.
This does not imply infinite variance: with one context, $\pi_b(a_1|s)=1$, and $\pi_e(a_2|s)=1$ for $a_1\ne a_2$, the estimator in~\eqref{eq:multi-source_OPE_estimator} reduces to $\hat r^+(s,a_2)$, which has finite variance under the stated moment assumptions but may be biased.

\section{Properties of the Objective Function}
\label{sec:appendix_properties}

This appendix establishes the four structural properties of $J(\mathbf{n})$ claimed in Proposition~\ref{prop:objective_structure_main} of the main text.
Nonnegativity and boundedness hold for every valid instance; the remaining properties are established by counterexamples.
We organize the proof into four parts, summarized below:
\begin{enumerate}[label=(\Roman*),leftmargin=2em]
    \item $J$ is non-negative and bounded over $\mathcal{F}$ \hfill(Proposition~\ref{prop:nonneg});
    \item $J$ is non-separable across actions within the same context \hfill(Proposition~\ref{prop:nonsep});
    \item the continuous relaxation of $J$ is neither convex nor concave \hfill(Proposition~\ref{prop:nonconvex});
    \item $J$ is neither submodular nor supermodular over annotation allocations \hfill(Corollary~\ref{cor:nonsubmod}).
\end{enumerate}

\subsection{Decomposition and Notation}
\label{sec:B1_decomp}

We use $B(s,a):=B(s,a;\mathbf n)$ and $W(s,a):=W(s,a;\mathbf n)$ as defined before~\eqref{eq:final_objective}, including the default-model case $N(s,a)=0$.
The objective decomposes as
\begin{equation*}
    J(\mathbf{n}) \;=\; J_1(\mathbf{n}) \;-\; J_2(\mathbf{n}) \;+\; J_3(\mathbf{n}),
\end{equation*}
where the three components are:
\begin{align*}
    J_1(\mathbf{n}) &= \sum_{s} \sum_{a} \alpha_1(s,a)\, B(s,a)^2, \\
    J_2(\mathbf{n}) &= \sum_{s} d_0(s) \,\mathcal{B}_s(\mathbf{n})^2, \\
    J_3(\mathbf{n}) &= \sum_{s} \sum_{a} \alpha_3(s,a) W(s,a).
\end{align*}

For a fixed context-action pair $(s,a)$, we define:
\begin{alignat*}{3}
    &N \;:=\; N(s,a) = n_f + \textstyle\sum_{k} n_k, &\qquad& n_f \;:=\; n_f(s,a), \\
    &P \;:=\; \textstyle\sum_{k=1}^K n_k\, \varepsilon_{g_k}(s,a), &\qquad& Q_D \;:=\; \textstyle\sum_{k=1}^K n_k\, \Delta_{g_k}(s,a), \\
    &B(s,a) = P / N \quad (N>0), &\qquad& \sigma^2 \;:=\; \sigma_r^2(s,a), \\
    &\alpha_1(s,a) \;:=\; d_0(s)\,\frac{\pi_e(a|s)^2}{\pi_b(a|s)}, &\qquad& \alpha_3(s,a) = \bigl(1-\pi_b(a|s)\bigr)\alpha_1(s,a).
\end{alignat*}
For $N>0$, $W(s,a)=\sigma^2/N+Q_D/N^2$; for $N=0$, we use $B(s,a)=r_0(s,a)-\mu_r(s,a)$ and $W(s,a)=0$.
The coefficients use the zero-weight convention $\alpha_1=\alpha_3=0$ when $\pi_e(a|s)=0$.
The context-level bias accumulation is
\[
    \mathcal{B}_s(\mathbf{n}) \;:=\; \sum_{a \in \mathcal{A}} \pi_e(a|s)\, B(s,a).
\]
We write $E_k := \varepsilon_{g_k}(s,a)$ and $D_k := \Delta_{g_k}(s,a)$ when $(s,a)$ is clear from context.
The shorthand $n_j$ denotes the annotation count $n_j(s,a)$ from source~$j$ for the pair under consideration.

We note the following elementary sign properties:
\begin{itemize}
    \item $\alpha_1(s,a)\ge0$, with strict positivity exactly when $d_0(s)>0$ and $\pi_e(a|s)>0$ under common support;
    \item $\alpha_3(s,a)=\alpha_1(s,a)-d_0(s)\pi_e(a|s)^2=\bigl(1-\pi_b(a|s)\bigr)\alpha_1(s,a)\ge0$, with equality exactly when $\alpha_1(s,a)=0$ or $\pi_b(a|s)=1$.
\end{itemize}

\subsection{First and Second-Order Derivatives}
\label{sec:B2_derivatives}

We compute all partial derivatives with respect to the annotation counts $n_k := n_k(s,a)$.
Since $J_1$ and $J_3$ decompose over individual $(s,a)$ pairs while $J_2$ decomposes over contexts, the analysis is organized accordingly.
At $N(s,a)=0$, finite annotation changes are evaluated using the default definitions rather than the rational derivative formulas below.

\begin{lemma}[First-order partial derivatives]
\label{lem:gradients}
For a fixed $(s,a)$ pair with $N(s,a)>0$ and annotation source $k\in\mathcal K$:
\begin{align}
    \frac{\partial J_1^{(s,a)}}{\partial n_k} &= \frac{2\alpha_1\, P\,(E_k N - P)}{N^3}, \label{eq:grad_J1}\\[4pt]
    \frac{\partial J_2^{(s)}}{\partial n_k(s,a)} &= 2\,d_0(s)\,\mathcal{B}_s\,\pi_e(a|s)\,\frac{E_k N - P}{N^2}, \label{eq:grad_J2}\\[4pt]
    \frac{\partial J_3^{(s,a)}}{\partial n_k} &= \frac{\alpha_3}{N^3}\Big((D_k - \sigma^2)\,N - 2\,Q_D\Big). \label{eq:grad_J3}
\end{align}
\end{lemma}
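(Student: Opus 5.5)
We treat each of the three formulas as a direct differentiation of the rational functions in $n_k$, regarding $n_k$ as continuous and holding all other counts fixed. For the fixed pair $(s,a)$ with $N>0$, the basic partial derivatives are $\partial N/\partial n_k=1$, $\partial P/\partial n_k=E_k$, and $\partial Q_D/\partial n_k=D_k$. These are all we need, because $B=P/N$ and $W=\sigma^2/N+Q_D/N^2$ are built only from $N$, $P$, and $Q_D$.

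\textbf{The $J_1$ term.} We write $J_1^{(s,a)}=\alpha_1 P^2/N^2$. By the quotient rule,
\[
\frac{\partial}{\partial n_k}\frac{P^2}{N^2}=\frac{2PE_kN^2-2P^2N}{N^4}=\frac{2P(E_kN-P)}{N^3},
\]
which gives \eqref{eq:grad_J1}.

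\textbf{The $J_2$ term.} Only the summand of $\mathcal B_s$ indexed by action $a$ depends on $n_k(s,a)$. Hence
\[
\partial\mathcal B_s/\partial n_k=\pi_e(a|s)\,\partial B/\partial n_k,
\qquad
\partial B/\partial n_k=(E_kN-P)/N^2 .
\]
Differentiating $J_2^{(s)}=d_0(s)\mathcal B_s^2$ with the chain rule then yields \eqref{eq:grad_J2}. The point to state explicitly is that the other actions $a'$ contribute to $\mathcal B_s$ but not to its derivative. Their terms may include defaults with $N(s,a')=0$; these are constants in $n_k(s,a)$.

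\textbf{The $J_3$ term.} We have
\[
\frac{\partial}{\partial n_k}\Bigl(\frac{\sigma^2}{N}+\frac{Q_D}{N^2}\Bigr)=-\frac{\sigma^2}{N^2}+\frac{D_kN^2-2Q_DN}{N^4}=\frac{(D_k-\sigma^2)N-2Q_D}{N^3}.
\]
Multiplying by $\alpha_3$ gives \eqref{eq:grad_J3}.

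The only potential subtlety is the boundary $N(s,a)=0$, which is excluded by hypothesis. Since $N>0$ on the domain considered, every expression is a smooth rational function there, so there is no real obstacle beyond bookkeeping the $J_3$ simplification.
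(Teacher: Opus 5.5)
Your proposal is correct and follows essentially the same route as the paper: direct differentiation using $\partial P/\partial n_k=E_k$, $\partial N/\partial n_k=1$, $\partial Q_D/\partial n_k=D_k$. The only difference is that you apply the quotient rule to $P^2/N^2$ for $J_1$, whereas the paper applies the chain rule through $u=P/N$. This is purely cosmetic, and your remark that default terms at $N(s,a')=0$ are constants in $n_k(s,a)$ is a welcome explicit check.
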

\begin{proof}
\textit{Eq.~\eqref{eq:grad_J1}.}
Write $J_1^{(s,a)} = \alpha_1\,(P/N)^2$.
Noting $\frac{\partial P}{\partial n_k}=E_k$ and $\frac{\partial N}{\partial n_k}=1$, the chain rule gives
\[
    \frac{\partial}{\partial n_k}\!\left(\frac{P}{N}\right) = \frac{E_k\,N - P}{N^2},
\]
hence $\frac{\partial J_1^{(s,a)}}{\partial n_k} = 2\alpha_1 \cdot \frac{P}{N} \cdot \frac{E_k N - P}{N^2}$, yielding \eqref{eq:grad_J1}.

\textit{Eq.~\eqref{eq:grad_J2}.}
Since $\mathcal{B}_s = \sum_{a'}\pi_e(a'|s)\,B(s,a')$ and only $B(s,a)$ depends on $n_k(s,a)$:
\[
    \frac{\partial \mathcal{B}_s}{\partial n_k(s,a)} = \pi_e(a|s)\,\frac{E_k N - P}{N^2}.
\]
Then $\frac{\partial J_2^{(s)}}{\partial n_k(s,a)} = 2\,d_0(s)\,\mathcal{B}_s \cdot \frac{\partial \mathcal{B}_s}{\partial n_k(s,a)}$ gives \eqref{eq:grad_J2}.

\textit{Eq.~\eqref{eq:grad_J3}.}
Writing $J_3^{(s,a)} = \alpha_3\,\sigma^2 N^{-1} + \alpha_3\,Q_D\,N^{-2}$ and differentiating:
\[
    \frac{\partial J_3^{(s,a)}}{\partial n_k} = \alpha_3\!\left(-\frac{\sigma^2}{N^2} + \frac{D_k\,N - 2\,Q_D}{N^3}\right) = \frac{\alpha_3}{N^3}\Big((D_k - \sigma^2)N - 2Q_D\Big). \qedhere
\]
\end{proof}

\begin{lemma}[Hessian entries]
\label{lem:hessian}
For a fixed context $s$, the following second-order partial derivatives hold when every pair being differentiated has positive total count.

\paragraph{(i) $J_1^{(s,a)}$: same pair, sources $k,l$.}
\begin{equation}
\label{eq:hess_J1}
    \frac{\partial^2 J_1^{(s,a)}}{\partial n_k\,\partial n_l} = \frac{2\alpha_1}{N^4}\Big(E_k E_l\,N^2 - 2(E_k+E_l)\,P\,N + 3P^2\Big).
\end{equation}

\paragraph{(ii) $J_2^{(s)}$: same action $a$, sources $k,l$.}
\begin{equation}
\label{eq:hess_J2_same}
    \frac{\partial^2 J_2^{(s)}}{\partial n_k(s,a)\,\partial n_l(s,a)} = 2\,d_0(s)\!\left[\frac{\pi_e(a|s)^2\,(E_k N{-}P)(E_l N{-}P)}{N^4} + \frac{\mathcal{B}_s\,\pi_e(a|s)\,(2P{-}(E_k{+}E_l)N)}{N^3}\right]\!.
\end{equation}

\paragraph{(iii) $J_2^{(s)}$: different actions $a \neq a'$, sources $k,l$.}
\begin{equation}
\label{eq:hess_J2_cross}
    \frac{\partial^2 J_2^{(s)}}{\partial n_k(s,a)\,\partial n_l(s,a')} = \frac{2\,d_0(s)\,\pi_e(a|s)\,\pi_e(a'|s)\,(E_k N_a - P_a)(E_l N_{a'} - P_{a'})}{N_a^2\,N_{a'}^2},
\end{equation}
where $N_a := N(s,a)$, $P_a := P(s,a)$, etc.

\paragraph{(iv) $J_3^{(s,a)}$: same pair, sources $k,l$.}
\begin{equation}
\label{eq:hess_J3}
    \frac{\partial^2 J_3^{(s,a)}}{\partial n_k\,\partial n_l} = \frac{2\alpha_3}{N^4}\Big((\sigma^2 - D_k - D_l)\,N + 3\,Q_D\Big).
\end{equation}
\end{lemma}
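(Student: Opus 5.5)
The plan is to differentiate the first-order formulas of Lemma~\ref{lem:gradients} once more, treating $P$, $N$, $Q_D$, and $\mathcal B_s$ as functions of the counts. We use $\partial P/\partial n_l=E_l$, $\partial N/\partial n_l=1$, $\partial Q_D/\partial n_l=D_l$, and $\partial\mathcal B_s/\partial n_l(s,a')=\pi_e(a'|s)(E_lN_{a'}-P_{a'})/N_{a'}^2$. Since every pair involved has $N>0$, all expressions are rational functions that are smooth in a neighborhood, so the quotient and product rules apply directly. The four parts are then handled one at a time.

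For (i), I will write $\partial J_1^{(s,a)}/\partial n_k=2\alpha_1(E_kPN-P^2)N^{-3}$ and differentiate in $n_l$. The numerator's derivative is $E_kE_lN+E_kP-2PE_l$. The $N^{-3}$ factor contributes $-3(E_kPN-P^2)N^{-4}$. Putting everything over $N^4$ gives $E_kE_lN^2+E_kPN-2E_lPN-3E_kPN+3P^2$, which equals $E_kE_lN^2-2(E_k+E_l)PN+3P^2$, as claimed.

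For (iv), I will differentiate $\alpha_3N^{-3}((D_k-\sigma^2)N-2Q_D)$. The numerator's derivative is $(D_k-\sigma^2)-2D_l$. The power contributes $-3N^{-4}((D_k-\sigma^2)N-2Q_D)$. Over $N^4$ this collects to $(D_k-\sigma^2-2D_l)N-3(D_k-\sigma^2)N+6Q_D$, which equals $2[(\sigma^2-D_k-D_l)N+3Q_D]$.

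For (ii) and (iii), I will apply the product rule to $2d_0(s)\mathcal B_s\,\partial\mathcal B_s/\partial n_k(s,a)$. The derivative of the $\mathcal B_s$ factor produces the rank-one term $\pi_e(a|s)\pi_e(a'|s)(E_kN_a-P_a)(E_lN_{a'}-P_{a'})/(N_a^2N_{a'}^2)$. When $a'\neq a$, the second factor $\partial\mathcal B_s/\partial n_k(s,a)$ does not depend on $n_l(s,a')$, so only this term survives; that is (iii). When $a'=a$, we additionally differentiate $(E_kN-P)N^{-2}$ in $n_l$. This gives $(E_k-E_l)N^{-2}-2(E_kN-P)N^{-3}$, which equals $(2P-(E_k+E_l)N)N^{-3}$. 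Multiplying by $\mathcal B_s\pi_e(a|s)$ gives the second term of (ii).

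The only real obstacle is the algebraic bookkeeping in (i) and (iv), namely collecting terms correctly so they match the stated symmetric forms. As a check, I will confirm that both results are symmetric in $k$ and $l$, as they must be, even though the intermediate expressions are not.
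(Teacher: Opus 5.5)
Your proposal is correct and follows essentially the same route as the paper, differentiating the gradients of Lemma~\ref{lem:gradients} once more with the quotient and product rules; parts (ii)--(iv) are computed identically. The only cosmetic difference is in (i): the paper writes $J_1^{(s,a)}=\alpha_1u^2$ with $u=P/N$ and uses $2\alpha_1(\partial_k u\,\partial_l u+u\,\partial_{kl}u)$, whereas you differentiate the numerator $E_kPN-P^2$ directly, and both give the same expression.
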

\begin{proof}
All four identities follow from applying the quotient rule to the first-order expressions in Lemma~\ref{lem:gradients}.
We detail each computation below.

\textit{Part (i).}
Let $u := P/N$ so that $J_1^{(s,a)} = \alpha_1 u^2$.
From Lemma~\ref{lem:gradients}, $\frac{\partial u}{\partial n_k} = (E_k - u)/N$.
Differentiating:
\begin{align*}
    \frac{\partial^2 (\alpha_1 u^2)}{\partial n_k\,\partial n_l} &= 2\alpha_1\!\left(\frac{\partial u}{\partial n_k}\,\frac{\partial u}{\partial n_l} + u\,\frac{\partial^2 u}{\partial n_k\,\partial n_l}\right).
\end{align*}
We compute
\[
    \frac{\partial^2 u}{\partial n_k\,\partial n_l} = \frac{\partial}{\partial n_l}\!\left(\frac{E_k - u}{N}\right) = \frac{-\frac{\partial u}{\partial n_l}\,N - (E_k-u)}{N^2} = \frac{2u - E_k - E_l}{N^2}.
\]
Substituting and collecting terms with $u = P/N$:
\begin{align*}
    \frac{\partial^2 J_1^{(s,a)}}{\partial n_k\,\partial n_l} &= \frac{2\alpha_1}{N^2}\!\left[(E_k-u)(E_l-u) + u(2u - E_k - E_l)\right] \\
    &= \frac{2\alpha_1}{N^2}\!\left[E_kE_l - 2(E_k+E_l)u + 3u^2\right],
\end{align*}
which yields \eqref{eq:hess_J1} after substituting $u = P/N$.

\textit{Part (ii).}
Differentiating \eqref{eq:grad_J2} with respect to $n_l(s,a)$:
\[
    \frac{\partial^2 J_2^{(s)}}{\partial n_k(s,a)\,\partial n_l(s,a)} = 2\,d_0(s)\!\left[\frac{\partial\mathcal{B}_s}{\partial n_l}\!\cdot\!\pi_e(a|s)\frac{E_kN{-}P}{N^2} + \mathcal{B}_s\!\cdot\!\pi_e(a|s)\frac{\partial}{\partial n_l}\!\left(\frac{E_kN{-}P}{N^2}\right)\right].
\]
Using $\frac{\partial \mathcal{B}_s}{\partial n_l(s,a)} = \pi_e(a|s)\frac{E_lN-P}{N^2}$ from Lemma~\ref{lem:gradients} and
\[
    \frac{\partial}{\partial n_l}\!\left(\frac{E_kN - P}{N^2}\right) = \frac{2P - (E_k+E_l)N}{N^3},
\]
(computed via the quotient rule) yields \eqref{eq:hess_J2_same}.

\textit{Part (iii).}
For $a \neq a'$, since $B(s,a)$ depends only on $\{n_k(s,a)\}_k$ and $B(s,a')$ depends only on $\{n_k(s,a')\}_k$, the term $\mathcal{B}_s\cdot\frac{\partial}{\partial n_l(s,a')}(\cdots)$ does not contribute a cross-action second derivative from the same factor.
Only the product of first derivatives remains:
\[
    \frac{\partial^2 J_2^{(s)}}{\partial n_k(s,a)\,\partial n_l(s,a')} = 2\,d_0(s)\,\frac{\partial \mathcal{B}_s}{\partial n_k(s,a)}\,\frac{\partial \mathcal{B}_s}{\partial n_l(s,a')},
\]
giving \eqref{eq:hess_J2_cross}.

\textit{Part (iv).}
Let $R_k := (D_k - \sigma^2)N - 2Q_D$ so that $\frac{\partial J_3^{(s,a)}}{\partial n_k} = \alpha_3 R_k / N^3$.
Then $\frac{\partial R_k}{\partial n_l} = (D_k - \sigma^2) - 2D_l$, and the quotient rule gives:
\begin{align*}
    \frac{\partial}{\partial n_l}\!\left(\frac{R_k}{N^3}\right) &= \frac{(D_k - \sigma^2 - 2D_l)\,N - 3R_k}{N^4}.
\end{align*}
Expanding $3R_k = 3(D_k-\sigma^2)N - 6Q_D$ and simplifying:
\[
    (D_k-\sigma^2-2D_l)N - 3(D_k-\sigma^2)N + 6Q_D = 2(\sigma^2 - D_k - D_l)N + 6Q_D. \qedhere
\]
\end{proof}

\paragraph{Hessian matrices in compact form.}
For a fixed $(s,a)$, define the vectors $\mathbf{e} = (E_1,\ldots,E_K)^\top$, $\mathbf{d} = (D_1,\ldots,D_K)^\top$, and $\mathbf{1} = (1,\ldots,1)^\top \in \mathbb{R}^K$.
The Hessian matrices $\mathbf{H}_1, \mathbf{H}_3 \in \mathbb{R}^{K\times K}$ with respect to $(n_1,\ldots,n_K)$ for the given $(s,a)$ pair are:
\begin{align*}
    \mathbf{H}_1 &= \frac{2\alpha_1}{N^4}\!\left(N^2\,\mathbf{e}\mathbf{e}^\top - 2NP\,(\mathbf{e}\mathbf{1}^\top + \mathbf{1}\mathbf{e}^\top) + 3P^2\,\mathbf{1}\mathbf{1}^\top\right),\\
    \mathbf{H}_3 &= \frac{2\alpha_3}{N^4}\!\left((\sigma^2 N + 3Q_D)\,\mathbf{1}\mathbf{1}^\top - N\,(\mathbf{d}\mathbf{1}^\top + \mathbf{1}\mathbf{d}^\top)\right).
\end{align*}
Both are sums of rank-one and rank-two matrices; their spectral properties are parameter-dependent and analyzed in \S\ref{sec:B3_properties}.

\subsection{Structural Properties}
\label{sec:B3_properties}

We now prove the structural claims in Proposition~\ref{prop:objective_structure_main}.

\begin{proposition}[Non-negativity and boundedness]
\label{prop:nonneg}
For every feasible allocation $\mathbf{n} \in \mathcal{F}$,
\[
    0 \;\leq\; J(\mathbf{n}) \;\leq\; J_{\max} < \infty.
\]
\end{proposition}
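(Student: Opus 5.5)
Nonnegativity and boundedness are handled separately, both context by context, using the decomposition $J(\mathbf n)=\sum_s J_s(\mathbf n_s)$ from the extension to arbitrary pairs.

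\textbf{Nonnegativity.} I would reuse the matrix $M_s$ from the proof of Lemma~\ref{lem:estimated-source-perturbation}. There it is shown that $J_1-J_2=\sum_s \boldsymbol B_s^\top M_s \boldsymbol B_s$, where
\[
z^\top M_s z = d_0(s)\operatorname{Var}_{a\sim\pi_b(\cdot\mid s)}\bigl[\pi_e(a\mid s)z_a/\pi_b(a\mid s)\bigr]\ge 0 .
\]
Hence $J_1-J_2\ge0$ for every allocation. This step uses common support so that the variance identity holds, together with the zero-weight convention for pairs with $\pi_e(a\mid s)=0$. For the remaining term, $\alpha_3(s,a)\ge0$ by the sign properties in \S\ref{sec:B1_decomp}. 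Also $W(s,a;\mathbf n)\ge0$: it is zero when $N(s,a)=0$, and otherwise it is a ratio of nonnegative quantities, since $\sigma_r^2+\Delta_{g_k}=\operatorname{Var}(g_{k,j})\ge0$. Therefore $J_3\ge0$ and so $J\ge0$.

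\textbf{Boundedness.} Bound each term uniformly over $\mathcal F$.
\begin{itemize}
\item For the bias, $|B(s,a;\mathbf n)|\le\beta_{\max}$: it is a convex combination of the biases $\varepsilon_{g_k}$ with a zero weight on the factual part, or it equals the default bias when $N=0$.
\item Dropping the nonpositive $-J_2$, this gives $J_1-J_2\le J_1\le A_1\beta_{\max}^2$, as in \eqref{eq:true-bias-seminorm-bound}.
\item For $J_3$, when $N>0$,
\[
W \le \frac{\bigl(\sigma_r^2+\max_k\Delta_{g_k}^+\bigr)N}{N^2}\le \sigma_r^2+\max_k\Delta_{g_k}^+ ,
\]
because $N\ge1$ whenever $N>0$. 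Summing gives $J_3\le\sum_{s,a}\alpha_3(s,a)\bigl(\sigma_r^2(s,a)+\max_k\Delta_{g_k}(s,a)^+\bigr)$.
\end{itemize}
Set $J_{\max}$ to the sum of these two bounds. It is finite because $\mathcal S$, $\mathcal A$ and $\mathcal K$ are finite, second moments are finite, and $\alpha_1<\infty$ under common support.

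A simpler alternative is to note that $\mathcal F$ is finite, since the counts are capped by $N_s$. A maximum therefore exists as soon as every $J(\mathbf n)$ is finite, which follows from the same termwise finiteness. I would state the explicit bound but mention this as a remark.

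\textbf{Main obstacle.} The delicate part is the nonnegativity of $J_1-J_2$. $J_2$ enters with a minus sign and couples actions within a context, so a termwise argument fails. The variance (Cauchy--Schwarz) representation through $M_s\succeq0$ is what makes it work. I would also check that pairs with $N(s,a)=0$, which use the default bias, fit the same representation, since $\boldsymbol B_s$ is just an arbitrary real vector there.
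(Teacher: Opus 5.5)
Your proof is correct. It matches the paper on nonnegativity and differs on the upper bound.

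\textbf{Nonnegativity.} This part is essentially the paper's argument. The paper applies Cauchy--Schwarz within each context, with weights $\sqrt{\pi_b(a\mid s)}$ and $\pi_e(a\mid s)B(s,a)/\sqrt{\pi_b(a\mid s)}$ and using $\sum_a\pi_b(a\mid s)=1$. That is the same inequality as your statement that $\operatorname{Var}_{a\sim\pi_b(\cdot\mid s)}\bigl[\pi_e(a\mid s)B(s,a)/\pi_b(a\mid s)\bigr]\ge0$, and your treatment of $J_3$ via $\alpha_3\ge0$ and $W\ge0$ is identical to the paper's. Borrowing $M_s\succeq0$ from the perturbation lemma is not circular, since that lemma proves positive semidefiniteness directly.

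\textbf{Upper bound.} The paper's argument is purely existential. It notes that $\mathcal F$ is finite because counts are capped by $N_s$. It then checks that every coefficient and every $B(s,a)$, $W(s,a)$ is finite at each allocation, and sets $J_{\max}:=\max_{\mathbf n\in\mathcal F}J(\mathbf n)$. Your side remark describes exactly this route.

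Your main route is quantitative instead. You give the explicit constant
\[
A_1\beta_{\max}^2+\sum_{s,a}\alpha_3(s,a)\bigl(\sigma_r^2(s,a)+\max_k\Delta_{g_k}(s,a)^+\bigr),
\]
obtained from $|B|\le\beta_{\max}$, from $J_2\ge0$, and from $W\le(\sigma_r^2+\max_k\Delta_{g_k}^+)/N$ with $N\ge1$.

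\textbf{What each route buys.} The paper's route is shorter and identifies the attained maximum. Yours never uses feasibility: it bounds $J$ uniformly over all nonnegative integer allocations, so it would survive dropping the capacity or budget constraints. It is also stated in the same constants $A_1$ and $\beta_{\max}$ that drive the source-profile sensitivity bound.
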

\begin{proof}
\textit{Lower bound.}
For $J_1 - J_2$: fix any context $s$ and define $x_a := B(s,a)$, $w_a := \pi_e(a|s)$, and $c_a := \pi_b(a|s)$.
In the following display, action sums run over $a$ with $c_a>0$; common support gives $w_a=0$ on the omitted actions, so their contributions are zero.
By the Cauchy--Schwarz inequality applied with weights $(\sqrt{c_a})$ and $(\frac{w_a}{\sqrt{c_a}}\,x_a)$:
\[
    \underbrace{\left(\sum_a w_a\,x_a\right)^2}_{\textstyle= \mathcal{B}_s^2}
    = \left(\sum_a \sqrt{c_a}\cdot\frac{w_a\,x_a}{\sqrt{c_a}}\right)^2
    \leq \left(\sum_a c_a\right)\!\left(\sum_a \frac{w_a^2}{c_a}\,x_a^2\right)
    = \sum_a \frac{w_a^2}{c_a}\,x_a^2,
\]
where we used $\sum_a c_a = \sum_a \pi_b(a|s) = 1$.
Multiplying both sides by $d_0(s)$ and summing over $s$ yields $J_2(\mathbf{n}) \leq J_1(\mathbf{n})$, hence $J_1 - J_2 \geq 0$.

For $J_3$, pairs with $N(s,a)=0$ have $W(s,a)=0$ and contribute zero.
For a pair with $N(s,a)>0$, combining the two terms in $W(s,a)$ gives
\[
W(s,a)=\frac{\sigma_r^2(s,a)}{N(s,a)}+\frac{Q_D(s,a)}{N(s,a)^2}
=\frac{\sigma_r^2(s,a)\,N(s,a)+Q_D(s,a)}{N(s,a)^2},
\]
and the numerator expands as
\[
\sigma_r^2(s,a)\,n_f(s,a)
+\sum_{k=1}^K n_k(s,a)\,\bigl[\sigma_r^2(s,a)+\Delta_{g_k}(s,a)\bigr]
\ge 0,
\]
since $\sigma_r^2(s,a)\ge 0$, $n_f(s,a),\,n_k(s,a)\ge 0$, and $\sigma_r^2(s,a)+\Delta_{g_k}(s,a) = \operatorname{Var}(g_{k,j}|s,a)\ge 0$ for every $k$.
The denominator is positive, so $W(s,a)\ge0$; together with $\alpha_3(s,a)\ge0$, this shows that every $J_3$ summand is nonnegative and hence $J_3\ge0$.
Therefore $J(\mathbf n)=(J_1-J_2)+J_3\ge 0$.

\textit{Upper bound.}
Each count $n_k(s,a)$ is a nonnegative integer bounded by $N_s$, so $\mathcal F$ is finite and contains the zero allocation.
Common support and the zero-weight convention make all coefficients $\alpha_1$ and $\alpha_3$ finite.
For positive total counts, $N(s,a)\ge1$, so finite reward and annotation moments give finite $B(s,a)$ and $W(s,a)$; for zero counts, the fixed finite defaults give finite $B(s,a)$ and $W(s,a)=0$.
Thus $J(\mathbf n)$ is finite for every $\mathbf n\in\mathcal F$, and $J_{\max}:=\max_{\mathbf n\in\mathcal F}J(\mathbf n)<\infty$.
\end{proof}

\begin{proposition}[Non-separability across actions]
\label{prop:nonsep}
There exist valid instances with a single context $s$ and $|\mathcal A|\ge2$ for which no functions $\{f_a\}_{a\in\mathcal A}$ satisfy
\[
    J(\mathbf{n}) = \sum_{a \in \mathcal{A}} f_a\!\big(\{n_k(s,a)\}_{k\in\mathcal{K}}\big).
\]
\end{proposition}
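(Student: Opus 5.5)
The plan is to build an explicit single-context instance and show that $J$ violates a necessary condition for additive separability. The condition I will use is the vanishing of mixed finite differences across actions. If $J(\mathbf n)=\sum_a f_a(\{n_k(s,a)\}_k)$, then for any two distinct actions $a\neq a'$ and any allocation $\mathbf n$,
\[
\bigl[J(\mathbf n+\mathbf e_{a}+\mathbf e_{a'})-J(\mathbf n+\mathbf e_{a})\bigr]-\bigl[J(\mathbf n+\mathbf e_{a'})-J(\mathbf n)\bigr]=0,
\]
where $\mathbf e_a$ adds one annotation from a fixed source to pair $(s,a)$. The reason is that the $f_a$ and $f_{a'}$ terms each cancel in this double difference, and every other $f_{a''}$ is unchanged. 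It therefore suffices to exhibit one valid instance and one allocation at which this cross difference is nonzero.

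Next I compute the cross difference for the objective itself. The terms $J_1$ and $J_3$ are sums over pairs $(s,a)$, so their cross differences vanish identically. The only surviving contribution comes from $-J_2=-d_0(s)\mathcal B_s^2$. Write $\mathcal B_s=\pi_e(a|s)B(s,a)+\pi_e(a'|s)B(s,a')+\text{rest}$. Let $\delta_a$ and $\delta_{a'}$ be the increments of $B(s,a)$ and $B(s,a')$ caused by one added annotation, given by \eqref{eq:delta_B} as $n_f\varepsilon/(N(N+1))$. The double difference of $\mathcal B_s^2$ is then exactly $2\pi_e(a|s)\pi_e(a'|s)\delta_a\delta_{a'}$. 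Hence the cross difference of $J$ equals
\[
-2d_0(s)\,\pi_e(a|s)\pi_e(a'|s)\,\delta_a\delta_{a'}.
\]
This is the discrete analogue of the Hessian entry \eqref{eq:hess_J2_cross}.

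Finally I choose concrete numbers so that every factor is nonzero. Take one context with $d_0(s)=1$ and two actions with $\pi_b(\cdot|s)=\pi_e(\cdot|s)=(1/2,1/2)$, so common support holds. Use a single source with $\varepsilon_g(s,a)=\varepsilon_g(s,a')=1$ and $\Delta_g=0$, and set $\sigma_r^2=1$. Take $n_f(s,a)=n_f(s,a')=1$, which gives $N_s=2$, so one annotation per action is feasible under the capacity constraints. Start at $\mathbf n=\mathbf 0$ and use any budget $\kappa\ge 2c_1$. Then $\delta_a=\delta_{a'}=1/2$, and the cross difference is $-2\cdot\tfrac14\cdot\tfrac14=-1/8\neq0$.

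The main point requiring care is that all four allocations in the double difference lie in $\mathcal F$. In particular, both actions receiving one annotation must respect $\sum_a m_{s,a}\le N_s=2$ and $m_{s,a}\le N_s-n_f(s,a)=1$, which holds. I also need $N(s,a)>0$ throughout so that the default-model case never enters, and $n_f=1$ guarantees this. The instance extends to $|\mathcal A|>2$ by adding actions with $\pi_e=0$, which contribute nothing.
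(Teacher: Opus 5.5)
Your proposal is correct and follows essentially the same route as the paper. The paper uses the identical instance (one context, two actions, $\pi_b=\pi_e=1/2$, $n_f=1$, $\varepsilon=1$, $\sigma_r^2=1$, $\Delta=0$) and shows the same mixed difference $-1/8$ is nonzero. The only difference is cosmetic: the paper evaluates $J$ at all four allocations ($1/2$, $7/16$, $7/16$, $1/4$), whereas you isolate the $-J_2$ contribution directly, as the paper itself does in its non-submodularity corollary.
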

\begin{proof}
We first identify the cross-action coupling in the continuous relaxation, then verify non-separability on feasible integer allocations.

Since $J_1$ and $J_3$ decompose over individual $(s,a)$ pairs, they contribute zero to the cross-action derivative.
The only contribution comes from $-J_2$.
By Lemma~\ref{lem:hessian}(iii):
\begin{equation}
\label{eq:cross_action}
    \frac{\partial^2 J}{\partial n_k(s,a)\,\partial n_l(s,a')} = -\frac{2\,d_0(s)\,\pi_e(a|s)\,\pi_e(a'|s)\,(E_k\,N_a - P_a)(E_l\,N_{a'} - P_{a'})}{N_a^2\,N_{a'}^2}.
\end{equation}
For pairs with $n_f(s,a),n_f(s,a')>0$, at $\mathbf n=\mathbf0$ we have $P_a=P_{a'}=0$, giving
\[
    \frac{\partial^2 J}{\partial n_k(s,a)\,\partial n_l(s,a')}\bigg|_{\mathbf{n}=\mathbf{0}} = -\frac{2\,d_0(s)\,\pi_e(a|s)\,\pi_e(a'|s)\,E_k\,E_l}{n_f(s,a)\,n_f(s,a')},
\]
which is non-zero whenever $d_0(s)>0$, $\pi_e(a|s),\pi_e(a'|s)>0$, and $E_k,E_l\ne0$.

For integer allocations, we choose one context, two actions, one source, and $\pi_b(a_i|s)=\pi_e(a_i|s)=1/2$ for $i=1,2$.
We set $n_f(s,a_i)=1$, $\varepsilon_{g_1}(s,a_i)=1$, $\sigma_r^2(s,a_i)=1$, $\Delta_{g_1}(s,a_i)=0$, $c_1=1$, and $\kappa=2$.
Then $N_s=2$ and each action has annotation capacity one, so $(0,0)$, $(1,0)$, $(0,1)$, and $(1,1)$ are feasible annotation-count vectors.
Writing $J(x,y)$ for the objective at counts $x$ and $y$ on the two actions, substitution into~\eqref{eq:final_objective} gives
\[
J(0,0)=\frac12,\qquad J(1,0)=J(0,1)=\frac7{16},\qquad J(1,1)=\frac14.
\]
Consequently, $J(1,1)-J(1,0)-J(0,1)+J(0,0)=-1/8$.
Any additive per-action decomposition would make this difference zero, proving the claim.
\end{proof}

\begin{proposition}[Non-convexity and non-concavity]
\label{prop:nonconvex}
There exist valid instances for which the continuous relaxation of $J(\mathbf n)$ is neither convex nor concave over feasible allocations.
\end{proposition}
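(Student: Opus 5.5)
Since the claim is existential, the plan is to exhibit one valid instance whose continuous relaxation has a Hessian with both a positive and a negative eigenvalue at some feasible interior point. Neither convexity nor concavity can then hold on any open neighborhood inside the relaxed feasible region. The simplest route is a single context, a single action block, and a single source. In that setting $J$ reduces (up to the $J_2$ term) to the local objective $\phi$ of \eqref{eq:phi_main}, and Theorem~\ref{thm:four_regimes_main} already shows that $\phi$ can be non-monotone. For instance, take Regime~(II) or (IV): with $\mathfrak a,\mathfrak b$ of opposite sign, $\phi$ has an interior extremum and tends to the limit $\alpha_1\varepsilon^2$.

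\textbf{One variable.} I would first show that a one-dimensional $\phi$ on $[0,\infty)$ is neither convex nor concave. Differentiating $\phi'(n)=(\mathfrak a n+\mathfrak b)/(n_f+n)^3$ gives
\[
\phi''(n)=\frac{\mathfrak a(n_f+n)-3(\mathfrak a n+\mathfrak b)}{(n_f+n)^4}=\frac{-2\mathfrak a n+\mathfrak a n_f-3\mathfrak b}{(n_f+n)^4}.
\]
The numerator is affine in $n$ with slope $-2\mathfrak a$. If $\mathfrak a\ne0$ and the root $n_0=(\mathfrak a n_f-3\mathfrak b)/(2\mathfrak a)$ is positive, then $\phi''$ changes sign at $n_0$. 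Regime~(II) guarantees this: $\mathfrak a>0$ and $\mathfrak b<0$ give $n_0>n_f/2>0$. So $\phi$ is convex near $0$ and concave for large $n$.

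\textbf{Full objective.} To pass from $\phi$ to $J$ I need to handle $J_2$. Two options:
\begin{itemize}
\item Use a context with two actions, one of which has $\pi_e=0$. Then $\mathcal B_s=\pi_e(a|s)B(s,a)$, and $J_1-J_2$ restricted to the target action becomes $(\alpha_1-d_0\pi_e^2)B^2=\alpha_3B^2$ by identity \eqref{eq:key_identity}.
\item Alternatively, choose $\pi_e(a|s)=1$.
\end{itemize}
Either way the full one-variable objective is $\alpha_3\bigl(n^2\varepsilon^2+n_f\sigma^2+n\sigma_g^2\bigr)/(n_f+n)^2$. This has exactly the form of $\phi$ with $\alpha_1$ replaced by $\alpha_3$, so the same second-derivative computation applies with $\mathfrak a=\alpha_3(2\varepsilon^2n_f-\sigma_g^2)$ and $\mathfrak b=\alpha_3 n_f(\Delta-\sigma^2)$.

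\textbf{Concrete instance.} I would pick $d_0=1$, $\pi_b=(1/2,1/2)$, $\pi_e$ putting mass $1$ on the first action, $n_f=1$ for both actions, and $\sigma^2=1$, $\Delta=0$, $\varepsilon^2=1$. This gives $\mathfrak a=\alpha_3>0$ and $\mathfrak b=-\alpha_3<0$, so $n_0=(1+3)/2=2$. The capacity $N_s-n_f=1$ is too small, so I would enlarge the other action's factual count (say $n_f(s,a_2)=10$) to make the relaxed feasible interval $[0,10]$ contain points on both sides of $n_0$, for example $n=1$ and $n=3$.

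The main obstacle is bookkeeping: ensuring the relaxed feasible set actually straddles the inflection point under the one-annotation capacity constraints. Once the instance is fixed, the sign change of $\phi''$ plus restriction to a line segment in the feasible polytope suffices, because a convex or concave function stays so when restricted to a segment. I would also verify numerically that $\phi''(1)>0$ and $\phi''(3)<0$ as a sanity check.
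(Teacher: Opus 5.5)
Your proposal is correct and follows essentially the same route as the paper. Both use one context with two actions, $\pi_b=(1/2,1/2)$, $\pi_e$ concentrated on $a_1$, the identity \eqref{eq:key_identity} to collapse $J_1-J_2$ into $\alpha_3B^2$, an enlarged $n_f(s,a_2)$ to open capacity, and a sign change of the one-dimensional second derivative along $n_1(s,a_2)=0$; the paper checks $J''(0)>0$ and $J''(n_f)<0$ directly, while you locate the inflection point $n_0$ through $\mathfrak a,\mathfrak b$ in Regime~(II), and your instance's numerator $2(2-n)$ agrees with the paper's formula. The only loose end is that you should also fix $c_1$ and $\kappa$ (e.g.\ $c_1=1$, $\kappa\ge 3$) so the budget constraint does not cut the feasible interval before $n_0=2$.
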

\begin{proof}
We show non-convexity and non-concavity by exhibiting a feasible direction along which a diagonal entry of the Hessian of the continuous relaxation takes opposite signs at two configurations.

We construct an instance with one context $s$, two actions $\mathcal A=\{a_1,a_2\}$, a single annotation source ($K=1$), and $\pi_b(a_1|s)=\pi_b(a_2|s)=1/2$.
Set $\pi_e(a_1|s)=1$ and $\pi_e(a_2|s)=0$, and fix $n_1(s,a_2)=0$, so the only decision variable is $n:=n_1(s,a_1)$.
Write
\[
E:=\varepsilon_{g_1}(s,a_1),\quad D:=\Delta_{g_1}(s,a_1),\quad \sigma^2:=\sigma_r^2(s,a_1),\quad n_f:=n_f(s,a_1),\quad N:=n_f+n.
\]
We choose a positive integer $n_f$, set $n_f(s,a_2)=2n_f$, $c_1=1$, and $\kappa=2n_f$; then $N_s=3n_f$ and the feasible interval along this direction is $0\le n\le2n_f$.
Under this configuration, only $(s,a_1)$ contributes to the bias-related terms, and \(\mathcal B_s(\mathbf n)=\pi_e(a_1|s)\,B(s,a_1)=B(s,a_1)\).
Using the identity $\alpha_1(s,a_1)-d_0(s)\pi_e(a_1|s)^2=\alpha_3(s,a_1)$ (Eq.~\eqref{eq:key_identity}), the contribution of $(s,a_1)$ to $J_1-J_2$ collapses into $\alpha_3(s,a_1)\,B(s,a_1)^2$.
Thus, along this feasible direction,
\[
J(n)=\alpha_3\!\left[\frac{n^2 E^2}{N^2}+\frac{\sigma^2}{N}+\frac{n D}{N^2}\right],
\qquad \alpha_3:=\alpha_3(s,a_1)>0.
\]
Specializing the Hessian formulas of Lemma~\ref{lem:hessian}(i) and (iv) to $K=1$ (so that the mixed partial reduces to the second derivative), the diagonal entry of the Hessian is
\begin{equation}
\label{eq:Hess_diag}
J''(n)=\frac{2\alpha_3}{N^4}\!\left[E^2\,n_f(n_f-2n)+(\sigma^2-2D)\,N+3nD\right].
\end{equation}

\textit{Non-concavity.}
Evaluating \eqref{eq:Hess_diag} at $n=0$ (so $N=n_f$):
\[
J''(0)=\frac{2\alpha_3}{n_f^4}\!\left[E^2 n_f^2+(\sigma^2-2D)n_f\right]
=\frac{2\alpha_3\bigl(E^2 n_f+\sigma^2-2D\bigr)}{n_f^3}.
\]
For $\alpha_3>0$ and parameters with $E^2 n_f+\sigma^2-2D>0$ (e.g.\ $E\ne0$ and $0\le D\le\sigma^2/2$), $J''(0)>0$.
By continuity, $J''(n)>0$ at interior points of the feasible interval sufficiently close to zero, so $J$ is not concave.

\textit{Non-convexity.}
At $n=n_f$ (so $N=2n_f$):
\[
J''(n_f)=\frac{2\alpha_3}{(2n_f)^4}\!\left[-E^2 n_f^2+2(\sigma^2-2D)n_f+3n_f D\right]
=\frac{\alpha_3\bigl(-E^2 n_f+2\sigma^2-D\bigr)}{8\,n_f^3}.
\]
Choosing the bias with $E^2>(2\sigma^2-D)/n_f$ makes $J''(n_f)<0$, so the Hessian is not positive semidefinite; $J$ is not convex.

Both regimes coexist on a single instance: pick any $n_f\ge 1$, $\sigma^2>0$, $D$ with $0\le D\le \sigma^2/2$, and an $E$ satisfying $E^2>(2\sigma^2-D)/n_f$.
Then $J''(0)>0$ and $J''(n_f)<0$ simultaneously, certifying that the continuous relaxation of $J$ is neither convex nor concave.
\end{proof}

\begin{corollary}[Neither submodular nor supermodular]
\label{cor:nonsubmod}
There exist valid instances for which $J(\mathbf n)$, viewed as a function on the integer lattice $\mathbb Z_+^{|\mathcal S||\mathcal A|K}$, is neither submodular nor supermodular, with both properties violated by feasible allocations.
\end{corollary}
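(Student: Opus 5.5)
Submodularity of $J$ on the lattice means $J(\mathbf x+\mathbf e_i)-J(\mathbf x)\ge J(\mathbf x+\mathbf e_i+\mathbf e_j)-J(\mathbf x+\mathbf e_j)$ for distinct coordinates $i\ne j$ and all $\mathbf x$; supermodularity reverses the inequality. Equivalently, on a $2\times2$ square the second difference $\Delta_{ij}J(\mathbf x):=J(\mathbf x+\mathbf e_i+\mathbf e_j)-J(\mathbf x+\mathbf e_i)-J(\mathbf x+\mathbf e_j)+J(\mathbf x)$ is $\le0$ for submodularity and $\ge0$ for supermodularity. The plan is to exhibit, on a single valid instance, one feasible square with $\Delta_{ij}J<0$ and another with $\Delta_{ij}J>0$.

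The negative case is supplied directly by the integer example in Proposition~\ref{prop:nonsep}. There, with two actions, one source, and equal biases $\varepsilon=1$, the objective gives $J(1,1)-J(1,0)-J(0,1)+J(0,0)=-1/8<0$. All four vertices are feasible, so this rules out supermodularity.

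For the positive case, I would use the sign structure of the cross-action Hessian~\eqref{eq:cross_action}. At $\mathbf n=\mathbf0$ it equals $-2d_0\pi_e(a)\pi_e(a')E_kE_l/(n_f(s,a)n_f(s,a'))$, which is positive when $E_k$ and $E_l$ have opposite signs. This suggests modifying the same instance by setting $\varepsilon_{g_1}(s,a_2)=-1$ with everything else unchanged. To have both squares on one instance, I would enlarge it instead. The simplest route is to take two sources ($K=2$) on the same two-action context, with $\varepsilon_{g_1}\equiv1$ and $\varepsilon_{g_2}\equiv-1$ on both actions, and raise the budget so that all relevant vertices are feasible. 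One square then varies $n_1(s,a_1)$ and $n_1(s,a_2)$, which reproduces $-1/8$. The other varies $n_1(s,a_1)$ and $n_2(s,a_2)$.

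The one real computation is the opposite-sign square. With $n_f=1$, $\pi_b=\pi_e=1/2$, $d_0=1$, $\sigma^2=1$, $\Delta=0$, we have $\alpha_1=1/2$ and $\alpha_3=1/4$. A single annotation on a pair gives $B=\pm1/2$ and $W=1/2$; an unannotated pair has $B=0$ and $W=1$. In the square, $J_1$ and $J_3$ are additive across actions, so only $-J_2=-\mathcal B_s^2$ contributes to the second difference. Here $\mathcal B_s=\tfrac12(B_1+B_2)$, which is $0$, $1/4$, $-1/4$, and $0$ at the four vertices. The second difference is therefore $-(0-\tfrac1{16}-\tfrac1{16}+0)=1/8>0$, ruling out submodularity.

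The capacity constraints are the main thing to check. Here $N_s=2$, and the per-pair cap is $N_s-n_f=1$, so at most one annotation fits per action and two total. Both squares have vertex $(1,1)$ with one annotation per action, so they are feasible, and a budget $\kappa=2$ with unit costs suffices. Combining the two squares, one negative and one positive, shows that $J$ is neither submodular nor supermodular on this instance, with every witness a feasible allocation.
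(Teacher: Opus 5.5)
Your proof is correct and follows essentially the paper's argument: mixed second differences at the zero allocation in a single context, where $J_1$ and $J_3$ are additive across actions and only $-J_2=-\mathcal B_s^2$ contributes, so the sign is set by the product of the two biases. The only difference is where the opposite-sign bias sits. The paper uses one source and a third action with biases $(1,1,-1)$, whereas you add a second source with bias $-1$ on the same two actions, which works equally well and lets you reuse the $-1/8$ computation from Proposition~\ref{prop:nonsep}.
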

\begin{proof}
We test the defining lattice inequality on feasible integer allocations.
Submodularity requires $J(\mathbf x)+J(\mathbf y)\ge J(\mathbf x\wedge\mathbf y)+J(\mathbf x\vee\mathbf y)$, where $\wedge$ and $\vee$ denote coordinate-wise minimum and maximum; supermodularity requires the reverse inequality.
For distinct coordinates $i,j$ with unit vectors $\mathbf e_i,\mathbf e_j$, we define the mixed finite difference
\[
\delta_{ij}J(\mathbf n):=J(\mathbf n+\mathbf e_i+\mathbf e_j)-J(\mathbf n+\mathbf e_i)-J(\mathbf n+\mathbf e_j)+J(\mathbf n).
\]
Taking $\mathbf x=\mathbf n+\mathbf e_i$ and $\mathbf y=\mathbf n+\mathbf e_j$ shows that submodularity requires $\delta_{ij}J(\mathbf n)\le0$, whereas supermodularity requires $\delta_{ij}J(\mathbf n)\ge0$.

We construct one instance violating both requirements, with one context $s$, three actions $a_1,a_2,a_3$, and one source.
For each action, we set $\pi_b(a_i|s)=\pi_e(a_i|s)=1/3$, $n_f(s,a_i)=1$, $\sigma_r^2(s,a_i)=1$, and $\Delta_{g_1}(s,a_i)=0$.
We write $\varepsilon_i:=\varepsilon_{g_1}(s,a_i)$ and choose $(\varepsilon_1,\varepsilon_2,\varepsilon_3)=(1,1,-1)$.
These moments can be realized by independent rewards taking values $1$ and $3$ with equal probability, with each annotation distributed as an independent reward draw plus the corresponding bias.
We set $c_1=1$ and $\kappa=2$, giving $N_s=3$ and per-action capacities $N_s-n_f(s,a_i)=2$.
Thus $\mathbf0$, $\mathbf e_i$, $\mathbf e_j$, and $\mathbf e_i+\mathbf e_j$ are feasible for every $i\ne j$, where $\mathbf e_i$ now denotes one annotation for action $a_i$.

The terms $J_1$ and $J_3$ are sums of per-action functions, so their mixed finite differences vanish.
At the zero allocation, all reward-model biases are zero; adding one annotation for action $a_i$ gives $B(s,a_i)=\varepsilon_i/2$.
Consequently, $\mathcal B_s(\mathbf0)=0$, $\mathcal B_s(\mathbf e_i)=\varepsilon_i/6$, and $\mathcal B_s(\mathbf e_i+\mathbf e_j)=(\varepsilon_i+\varepsilon_j)/6$.
Only $-J_2=-\mathcal B_s^2$ contributes to the mixed difference, yielding
\[
\delta_{ij}J(\mathbf0)
=-\frac{(\varepsilon_i+\varepsilon_j)^2-\varepsilon_i^2-\varepsilon_j^2}{36}
=-\frac{\varepsilon_i\varepsilon_j}{18}.
\]
Hence $\delta_{12}J(\mathbf0)=-1/18<0$ violates supermodularity, while $\delta_{13}J(\mathbf0)=1/18>0$ violates submodularity on the same instance.
The same violations persist with additional sources by setting their annotation counts to zero.
\end{proof}

\section{Analysis of the Majorization-Minimization Algorithm}
\label{sec:appendix_mm}

This appendix provides the theoretical analysis of the proposed MM framework.
Throughout this section, we use the notation established in \S\ref{sec:B1_decomp}.
In particular, $\mathcal{B}_s(\mathbf{n}) := \sum_{a \in \mathcal{A}} \pi_e(a|s) B(s,a)$ denotes the context-level bias accumulation, with $B$ and $W$ defined in~\eqref{eq:final_objective} and its preceding text, including the default-model case $N(s,a)=0$.
We write $\mathcal{F}$ for the full feasible set defined by all constraints in~\eqref{eq:optimization_problem}, and $\mathcal{F}_0$ for the feasible set with the budget constraint removed.

\subsection{Pseudocode of the Algorithms}
\label{sec: pseudo}

Algorithm~\ref{alg:main} retains the best feasible allocation found at each majorization-minimization (MM) iteration, while Algorithm~\ref{alg:subproblem} generates candidates by local source search and dynamic programming (DP).
Both algorithms use factual fitting counts, reward moments, source profiles, policy probabilities, default rewards, and costs from the allocation problem; these fixed inputs are suppressed in subroutine calls.
The MM coefficients are $\omega_1(s,a)=\alpha_1(s,a)$, $\omega_3(s,a)=\alpha_3(s,a)$, and $\omega_2^{(t)}(s,a)=-2d_0(s)\mathcal B_s(\mathbf n^{(t)})\pi_e(a|s)$.
The local search and DP tables use the complete penalized cost $\psi_{s,a}^{(t)}(\mathbf z;\lambda):=\tilde J_{s,a}^{(t)}(\mathbf z)+\lambda\sum_k c_kz_k$, where $\tilde J_{s,a}^{(t)}$ is the summand defined in \S\ref{sec:surrogate_solver}.
For each total count $m$, the subproblem solver records the largest remaining positive swap gain $\delta_{s,a}(m,\lambda)$, with zero assigned when no improving swap exists.
These residuals certify the approximation error even when the swap cap is reached (Proposition~\ref{prop:greedy_opt}).
All ties in source selection, swaps, DP, and candidate ordering use a fixed index order, and the outer update retains the current allocation unless the surrogate strictly improves.

\begin{algorithm}[htbp]
\caption{Majorization-minimization (MM) allocation with feasible-candidate retention}
\label{alg:main}
\small
\begin{algorithmic}[1]
\Require Fixed problem inputs above, budget $\kappa$, feasible $\mathbf n^{(0)}$ (default $\mathbf0$), iteration limits $T_{\max},T_{\mathrm{swap}}\ge1$, tolerances $\epsilon\ge0$, $\epsilon_\lambda>0$, initial multiplier $\lambda_0>0$.
\Ensure Feasible allocation with objective no larger than $J(\mathbf n^{(0)})$.
\For{$t = 0, 1, \dots, T_{\max}-1$}
    \State Compute MM coefficients $\boldsymbol{\omega}^{(t)}$ from $\mathbf{n}^{(t)}$.
    \State $\mathbf n_{\mathrm{best}}\leftarrow\mathbf n^{(t)}$; $\mathcal Q_t\leftarrow\varnothing$.
    \State $(\mathbf n_0,\boldsymbol\delta_0)\leftarrow\textsc{DP}(0,\boldsymbol\omega^{(t)},T_{\mathrm{swap}})$.
    \State If $C(\mathbf n_0)\le\kappa$, add $(0,\mathbf n_0,\boldsymbol\delta_0)$ to $\mathcal Q_t$.
    \State $\lambda_{\mathrm{low}}\leftarrow0$; $\lambda_{\mathrm{high}}\leftarrow\lambda_0$.
    \State $(\mathbf n_{\mathrm{high}},\boldsymbol\delta_{\mathrm{high}})\leftarrow\textsc{DP}(\lambda_{\mathrm{high}},\boldsymbol\omega^{(t)},T_{\mathrm{swap}})$.
    \While{$C(\mathbf n_{\mathrm{high}})>\kappa$}
        \State $\lambda_{\mathrm{low}}\leftarrow\lambda_{\mathrm{high}}$; $\lambda_{\mathrm{high}}\leftarrow2\lambda_{\mathrm{high}}$.
        \State $(\mathbf n_{\mathrm{high}},\boldsymbol\delta_{\mathrm{high}})\leftarrow\textsc{DP}(\lambda_{\mathrm{high}},\boldsymbol\omega^{(t)},T_{\mathrm{swap}})$.
    \EndWhile
    \State Add $(\lambda_{\mathrm{high}},\mathbf n_{\mathrm{high}},\boldsymbol\delta_{\mathrm{high}})$ to $\mathcal Q_t$.
    \While{$\lambda_{\mathrm{high}}-\lambda_{\mathrm{low}}>\epsilon_\lambda$}
        \State $\lambda_{\mathrm{mid}}\leftarrow(\lambda_{\mathrm{low}}+\lambda_{\mathrm{high}})/2$.
        \State $(\mathbf n',\boldsymbol\delta')\leftarrow\textsc{DP}(\lambda_{\mathrm{mid}},\boldsymbol\omega^{(t)},T_{\mathrm{swap}})$.
        \If{$C(\mathbf n')>\kappa$}
            \State $\lambda_{\mathrm{low}}\leftarrow\lambda_{\mathrm{mid}}$.
        \Else
            \State $\lambda_{\mathrm{high}}\leftarrow\lambda_{\mathrm{mid}}$.
            \State Add $(\lambda_{\mathrm{mid}},\mathbf n',\boldsymbol\delta')$ to $\mathcal Q_t$.
        \EndIf
    \EndWhile
    \For{each $(\lambda,\mathbf n',\boldsymbol\delta')\in\mathcal Q_t$}
        \If{$\tilde J(\mathbf n';\mathbf n^{(t)})<\tilde J(\mathbf n_{\mathrm{best}};\mathbf n^{(t)})$}
            \State $\mathbf n_{\mathrm{best}}\leftarrow\mathbf n'$.
        \EndIf
    \EndFor
    \State $\mathbf n^{(t+1)}\leftarrow\mathbf n_{\mathrm{best}}$.
    \If{$\mathbf n^{(t+1)}=\mathbf n^{(t)}$ \textbf{or} $J(\mathbf n^{(t)})-J(\mathbf n^{(t+1)})\le\epsilon$}
        \State \textbf{break}
    \EndIf
\EndFor
\State \Return $\mathbf{n}^{(t+1)}$
\end{algorithmic}
\end{algorithm}

\begin{algorithm}[htbp]
\caption{Capped source search and dynamic programming (\textsc{DP})}
\label{alg:subproblem}
\small
\begin{algorithmic}[1]
\Require Fixed problem inputs, multiplier $\lambda\ge0$, MM coefficients $\boldsymbol\omega^{(t)}$, swap cap $T_{\mathrm{swap}}\ge1$.
\Ensure Capacity-feasible allocation $\mathbf n\in\mathcal F_0$ and table of residual swap gains $\boldsymbol\delta$.
\For{each context $s \in \mathcal{S}$}
    \For{each action $a \in \mathcal{A}$}
        \State $M_{s,a}\leftarrow N_s-n_f(s,a)$.
        \For{$m=0,\ldots,M_{s,a}$}
            \State Initialize $\mathbf z$ with $m$ annotations from the cheapest source.
            \If{$m>0$ \textbf{and} $K>1$}
                \For{$i=1,\ldots,T_{\mathrm{swap}}$}
                    \State Choose $(u,v)$, $u\ne v$, $z_u\ge1$, maximizing $g=\psi_{s,a}^{(t)}(\mathbf z;\lambda)-\psi_{s,a}^{(t)}(\mathbf z+\mathbf e_v-\mathbf e_u;\lambda)$.
                    \State If $g\le0$, \textbf{break}.
                    \State $\mathbf z\leftarrow\mathbf z+\mathbf e_v-\mathbf e_u$.
                \EndFor
            \EndIf
            \State Store $\mathbf n^\sharp_{s,a}[m]\leftarrow\mathbf z$ and $H^\sharp_{s,a}[m]\leftarrow\psi_{s,a}^{(t)}(\mathbf z;\lambda)$.
            \State Store $\delta_{s,a}(m,\lambda)$ as the largest remaining positive swap gain (zero if none).
        \EndFor
    \EndFor
    \State Order actions as $a_1,\ldots,a_{|\mathcal A|}$; set $D_0(0)=0$ and $D_0(w)=\infty$ for $w>0$.
    \For{$j=1,\ldots,|\mathcal A|$}
        \For{$w=0,\ldots,N_s$}
            \State $D_j(w)\leftarrow\min_{0\le m\le\min\{w,M_{s,a_j}\}}\{D_{j-1}(w-m)+H^\sharp_{s,a_j}[m]\}$.
            \State Store a minimizing count $m_j(w)$.
        \EndFor
    \EndFor
    \State $w^\star\in\arg\min_{0\le w\le N_s}D_{|\mathcal A|}(w)$; set $w\leftarrow w^\star$.
    \For{$j=|\mathcal A|,\ldots,1$}
        \State $m\leftarrow m_j(w)$; $\mathbf n_{s,a_j}\leftarrow\mathbf n^\sharp_{s,a_j}[m]$; $w\leftarrow w-m$.
    \EndFor
\EndFor
\State \Return $\mathbf n,\boldsymbol\delta$.
\end{algorithmic}
\end{algorithm}

\clearpage

\subsection{Surrogate Construction and Majorization Property}
\label{app: mm}

The MM framework replaces the non-convex objective $J(\mathbf{n})$ with a locally tight upper bound.
To remove the cross-action coupling, we majorize $-J_2(\mathbf{n}) = -\sum_s d_0(s)\,\mathcal{B}_s(\mathbf{n})^2$ while retaining $J_1$ and $J_3$.
We apply the tangent majorant of the concave scalar function $h(x)=-x^2$ at $x=\mathcal B_s(\mathbf n^{(t)})$, giving an expression affine in $\mathcal B_s(\mathbf n)$.

\begin{lemma}[Valid Majorization]
\label{lem:majorization}
For any fixed feasible allocation $\mathbf n^{(t)}$, the full surrogate $\tilde{J}_{\mathrm{full}}(\mathbf{n};\,\mathbf{n}^{(t)})$ defined in Proposition~\ref{prop:mm_majorant_main} of \S\ref{sec:mm_surrogate} satisfies:
\begin{enumerate}
    \item[(i)] \textbf{(Upper bound)} $\tilde{J}_{\mathrm{full}}(\mathbf{n};\,\mathbf{n}^{(t)}) \geq J(\mathbf{n})$ for all $\mathbf{n} \in \mathcal{F}$.
    \item[(ii)] \textbf{(Tangency)} $\tilde J_{\mathrm{full}}(\mathbf n^{(t)};\mathbf n^{(t)}) = J(\mathbf{n}^{(t)})$.
\end{enumerate}
Moreover, the majorization gap admits the closed-form expression:
\begin{equation}
\label{eq:mm_gap_formula}
    \tilde{J}_{\mathrm{full}}(\mathbf{n};\,\mathbf{n}^{(t)}) - J(\mathbf{n}) = \sum_{s \in \mathcal{S}} d_0(s)\bigl(\mathcal{B}_s(\mathbf{n}) - \mathcal{B}_s(\mathbf{n}^{(t)})\bigr)^2.
\end{equation}
\end{lemma}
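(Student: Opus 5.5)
The plan is to compute the gap $\tilde J_{\mathrm{full}}(\mathbf n;\mathbf n^{(t)})-J(\mathbf n)$ directly in closed form, since (i) and (ii) both follow from it. Write $J=J_1+J_3-J_2$ with $J_2(\mathbf n)=\sum_s d_0(s)\mathcal B_s(\mathbf n)^2$. By the definition in Proposition~\ref{prop:mm_majorant_main}, $\tilde J_{\mathrm{full}}$ keeps $J_1$ and $J_3$ unchanged. Both $J_1$ and $J_3$ therefore cancel in the difference, leaving
\[
\tilde J_{\mathrm{full}}(\mathbf n;\mathbf n^{(t)})-J(\mathbf n)=\sum_{s\in\mathcal S}d_0(s)\Bigl[(\mathcal B_s^{(t)})^2-2\mathcal B_s^{(t)}\mathcal B_s(\mathbf n)+\mathcal B_s(\mathbf n)^2\Bigr].
\]
The bracket is a perfect square, $(\mathcal B_s(\mathbf n)-\mathcal B_s^{(t)})^2$, which gives \eqref{eq:mm_gap_formula}. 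Equivalently, one can view the bracket as the tangent-line majorization of the concave map $h(x)=-x^2$ at $x_0=\mathcal B_s^{(t)}$, namely $-x^2\le x_0^2-2x_0x$, whose slack is exactly $(x-x_0)^2$.

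Part (i) then follows because $d_0(s)\ge0$ and each square is nonnegative. Part (ii) follows by substituting $\mathbf n=\mathbf n^{(t)}$: every $\mathcal B_s(\mathbf n^{(t)})-\mathcal B_s^{(t)}$ vanishes, so the gap is zero.

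The only point needing care is that the identity must hold for every allocation, including those with $N(s,a)=0$ for some pairs. I would handle this by noting that $\mathcal B_s(\mathbf n)$ is a well-defined real number in all cases, since it uses the default bias $r_0(s,a)-\mu_r(s,a)$ when $N(s,a)=0$. The computation above is purely algebraic in the scalars $\mathcal B_s(\mathbf n)$ and never differentiates or divides by $N$, so no case split is required. Because nothing beyond this algebra is involved, I do not expect any real obstacle. The argument also holds for all $\mathbf n$, not only those in $\mathcal F$.
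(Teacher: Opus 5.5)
Your proposal is correct and is essentially the paper's argument. The paper proves (i) via the tangent-line bound $-x^2\le x_0^2-2x_0x$ for the concave map $h(x)=-x^2$, obtains (ii) by substitution, and then expands the gap into the perfect square; you compute the perfect-square gap first and read off (i) and (ii) from it, which is the same algebra in a more economical order, and your remark that the identity is purely algebraic in the scalars $\mathcal B_s(\mathbf n)$ (so it covers the default case $N(s,a)=0$ and every $\mathbf n$, not only $\mathbf n\in\mathcal F$) is also correct.
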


\begin{proof}
We fix $\mathbf n^{(t)}\in\mathcal F$ and any $\mathbf n\in\mathcal F$.
Because $h(x)=-x^2$ is concave, its tangent at any $x_0\in\mathbb R$ gives a global upper bound:
\begin{equation*}
    -x^2 \;\leq\; -x_0^2 - 2x_0(x-x_0) \;=\; x_0^2 - 2x_0\,x, \qquad \forall\, x \in \mathbb{R},
\end{equation*}
with equality if and only if $x = x_0$.
Substituting $x = \mathcal{B}_s(\mathbf{n})$ and $x_0 = \mathcal{B}_s(\mathbf{n}^{(t)})$, multiplying by the nonnegative weights $d_0(s)$, and summing over contexts gives
\[
    -\sum_s d_0(s)\,\mathcal{B}_s(\mathbf{n})^2 \;\leq\; \sum_s d_0(s)\bigl[\mathcal{B}_s(\mathbf{n}^{(t)})^2 - 2\,\mathcal{B}_s(\mathbf{n}^{(t)})\,\mathcal{B}_s(\mathbf{n})\bigr].
\]
The full surrogate (including the constant term) is:
\begin{equation}
\label{eq:Jtilde_full}
    \tilde{J}_{\mathrm{full}}(\mathbf{n};\,\mathbf{n}^{(t)}) = J_1(\mathbf{n}) + \sum_s d_0(s)\bigl[\mathcal{B}_s(\mathbf{n}^{(t)})^2 - 2\,\mathcal{B}_s(\mathbf{n}^{(t)})\,\mathcal{B}_s(\mathbf{n})\bigr] + J_3(\mathbf{n}).
\end{equation}
\emph{Property (i)} follows from $-J_2(\mathbf{n}) \leq \tilde{J}_{\mathrm{full}}(\mathbf{n};\,\mathbf{n}^{(t)}) - J_1(\mathbf{n}) - J_3(\mathbf{n})$.
At $\mathbf n=\mathbf n^{(t)}$, the summed linearized term in~\eqref{eq:Jtilde_full} equals
\[
    \sum_s d_0(s)\bigl[(\mathcal B_s^{(t)})^2-2(\mathcal B_s^{(t)})^2\bigr]
    =-\sum_s d_0(s)(\mathcal B_s^{(t)})^2
    =-J_2(\mathbf n^{(t)}).
\]
Substitution into~\eqref{eq:Jtilde_full} gives $\tilde J_{\mathrm{full}}(\mathbf n^{(t)};\mathbf n^{(t)})=J_1(\mathbf n^{(t)})-J_2(\mathbf n^{(t)})+J_3(\mathbf n^{(t)})=J(\mathbf n^{(t)})$, proving \emph{property (ii)}.

For the gap formula, we compute:
\begin{align*}
    &\tilde{J}_{\mathrm{full}}(\mathbf{n};\,\mathbf{n}^{(t)}) - J(\mathbf{n}) \\
    &= \sum_s d_0(s)\bigl[\mathcal{B}_s(\mathbf{n}^{(t)})^2 - 2\mathcal{B}_s(\mathbf{n}^{(t)})\mathcal{B}_s(\mathbf{n}) + \mathcal{B}_s(\mathbf{n})^2\bigr] \\
    &= \sum_s d_0(s)\bigl(\mathcal{B}_s(\mathbf{n}) - \mathcal{B}_s(\mathbf{n}^{(t)})\bigr)^2.
\end{align*}
The optimization surrogate in~\eqref{eq:surrogate_def} satisfies
\[
    \tilde J(\mathbf n;\mathbf n^{(t)})
    =\tilde J_{\mathrm{full}}(\mathbf n;\mathbf n^{(t)})-J_2(\mathbf n^{(t)}).
\]
Because $J_2(\mathbf n^{(t)})$ is independent of $\mathbf n$, both surrogates have the same minimizers and give the same comparisons between allocations at fixed $t$.
At the current allocation, however, $\tilde J(\mathbf n^{(t)};\mathbf n^{(t)})=J(\mathbf n^{(t)})-J_2(\mathbf n^{(t)})$.
\end{proof}

\begin{proposition}[Monotone Descent under Surrogate-Improving Updates]
\label{prop:mm_descent}
Suppose at every iteration $t\ge 0$ the update $\mathbf{n}^{(t+1)}\in\mathcal{F}$ satisfies the \emph{surrogate-improvement} condition
\begin{equation}
\label{eq:surrogate_improving}
    \tilde{J}\!\bigl(\mathbf{n}^{(t+1)};\,\mathbf{n}^{(t)}\bigr)
    \;\le\;
    \tilde{J}\!\bigl(\mathbf{n}^{(t)};\,\mathbf{n}^{(t)}\bigr).
\end{equation}
Then the original objective is monotonically non-increasing,
\begin{equation*}
    J\!\bigl(\mathbf{n}^{(t+1)}\bigr) \;\le\; J\!\bigl(\mathbf{n}^{(t)}\bigr),
    \qquad \forall\, t\ge 0,
\end{equation*}
and the sequence $\{J(\mathbf{n}^{(t)})\}$ converges.
The fallback rule $\mathbf{n}^{(t+1)} \leftarrow \mathbf{n}^{(t)}$ used in Algorithm~\ref{alg:main} ensures \eqref{eq:surrogate_improving} regardless of how accurately the inner Lagrangian subproblem is solved; in particular, exact surrogate minimization is \emph{not} required for descent of $J$.
\end{proposition}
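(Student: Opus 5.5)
The argument chains the two properties of Lemma~\ref{lem:majorization} around the assumed surrogate improvement, and then uses the lower bound from Proposition~\ref{prop:nonneg} to get convergence.

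\textbf{Step 1: one-step descent.} Fix $t$. By Lemma~\ref{lem:majorization}(i), the full surrogate upper-bounds $J$ at every feasible point. Since $\mathbf n^{(t+1)}\in\mathcal F$,
\[
J(\mathbf n^{(t+1)})\le \tilde J_{\mathrm{full}}(\mathbf n^{(t+1)};\mathbf n^{(t)}).
\]
The optimization surrogate differs from the full one only by the allocation-independent constant $J_2(\mathbf n^{(t)})$, i.e. $\tilde J=\tilde J_{\mathrm{full}}-J_2(\mathbf n^{(t)})$, as noted at the end of the proof of Lemma~\ref{lem:majorization}. Hence condition \eqref{eq:surrogate_improving} is equivalent to
\[
\tilde J_{\mathrm{full}}(\mathbf n^{(t+1)};\mathbf n^{(t)})\le \tilde J_{\mathrm{full}}(\mathbf n^{(t)};\mathbf n^{(t)}).
\]
Tangency, Lemma~\ref{lem:majorization}(ii), identifies the right-hand side with $J(\mathbf n^{(t)})$. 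Chaining the three relations gives $J(\mathbf n^{(t+1)})\le J(\mathbf n^{(t)})$. The only point needing care is to add the constant back before invoking tangency, because $\tilde J(\mathbf n^{(t)};\mathbf n^{(t)})\ne J(\mathbf n^{(t)})$ in general.

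\textbf{Step 2: convergence.} By Step 1 and induction over $t$, the sequence $\{J(\mathbf n^{(t)})\}$ is non-increasing. Every iterate lies in $\mathcal F$, so Proposition~\ref{prop:nonneg} bounds the sequence below by $0$. The monotone convergence theorem then gives convergence. Alternatively, $\mathcal F$ is finite, so the sequence takes finitely many values and is eventually constant.

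\textbf{Step 3: the fallback rule.} In Algorithm~\ref{alg:main}, $\mathbf n_{\mathrm{best}}$ is initialized to $\mathbf n^{(t)}$. It is replaced only by a candidate with strictly smaller $\tilde J(\cdot;\mathbf n^{(t)})$. Every candidate in $\mathcal Q_t$ was added only after its budget check $C(\mathbf n')\le\kappa$. Each DP output also satisfies the capacity constraints, since it lies in $\mathcal F_0$. Therefore $\mathbf n^{(t+1)}\in\mathcal F$ and \eqref{eq:surrogate_improving} holds, whatever the quality of the local search or the $\lambda$-search. No step uses exact surrogate minimization, which gives the final claim. I expect no real obstacle here. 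The main thing to verify is feasibility of the retained candidates, which comes from the budget filter together with the capacity-feasible DP output.
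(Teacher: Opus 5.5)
Your proposal is correct and follows essentially the same argument as the paper: add the constant $J_2(\mathbf n^{(t)})$ back to pass from $\tilde J$ to $\tilde J_{\mathrm{full}}$, then chain the upper bound, the surrogate improvement, and tangency from Lemma~\ref{lem:majorization}, and obtain convergence from $J\ge 0$ (Proposition~\ref{prop:nonneg}). Your explicit check that every retained candidate is budget- and capacity-feasible is a small addition the paper leaves implicit; it also relies on $\mathbf n^{(0)}\in\mathcal F$, which Algorithm~\ref{alg:main} requires as an input, together with induction over $t$.
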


\begin{proof}
At fixed $t$, adding $J_2(\mathbf n^{(t)})$ to both sides of~\eqref{eq:surrogate_improving} gives the same improvement inequality for $\tilde J_{\mathrm{full}}$.
Combining this inequality with the upper bound and tangency in Lemma~\ref{lem:majorization} yields
\[
\begin{aligned}
    J(\mathbf{n}^{(t+1)})
    &\le \tilde{J}_{\mathrm{full}}(\mathbf{n}^{(t+1)};\mathbf{n}^{(t)}) \\
    &\le \tilde{J}_{\mathrm{full}}(\mathbf{n}^{(t)};\mathbf{n}^{(t)}) \\
    &= J(\mathbf{n}^{(t)}).
\end{aligned}
\]
The fallback update satisfies~\eqref{eq:surrogate_improving} with equality.
Proposition~\ref{prop:nonneg} gives $J\ge0$, so the non-increasing objective sequence converges.
\end{proof}

\begin{remark}[Reconciliation with Theorem~\ref{thm:per_iter_error}]
\label{rem:reconciliation}
Proposition~\ref{prop:mm_descent} only certifies monotonic descent of $J$ along the iterates produced by Algorithm~\ref{alg:main}.
It is \emph{independent} of the suboptimality of $\mathbf{n}^{(t+1)}$ relative to the exact constrained minimizer $\hat{\mathbf{n}}^{(t+1)}=\arg\min_{\mathcal F}\tilde J(\cdot;\mathbf n^{(t)})$.
Theorem~\ref{thm:per_iter_error} subsequently bounds the gap to the \emph{global} minimizer $\mathbf{n}^*$ of $J$, and that bound naturally incorporates Inner-loop suboptimality (Proposition~\ref{prop:lag_gap}).
The two results therefore play complementary roles: Proposition~\ref{prop:mm_descent} guarantees algorithmic stability, while Theorem~\ref{thm:per_iter_error} quantifies how close each iterate may be to a global optimum.
\end{remark}

\subsection{Local Source Search and Exact Count Aggregation}
\label{sec:C2}

At a fixed multiplier $\lambda$, separability allows us to search source mixtures independently and then aggregate their costs across actions.
We bound the local-search error before propagating it through the exact DP aggregation.

\begin{lemma}[Separability of the Surrogate]
\label{lem:separability}
The surrogate objective $\tilde{J}(\mathbf{n};\,\mathbf{n}^{(t)})$ decomposes as:
\[
    \tilde{J}(\mathbf{n};\,\mathbf{n}^{(t)}) = \sum_{s \in \mathcal{S}} \sum_{a \in \mathcal{A}} \phi_{s,a}^{(t)}\!\bigl(\{n_k(s,a)\}_{k=1}^K\bigr).
\]
Here $\phi_{s,a}^{(t)}=\tilde J_{s,a}^{(t)}$ is the local summand from \S\ref{sec:surrogate_solver} and depends only on the annotation counts for $(s,a)$.
\end{lemma}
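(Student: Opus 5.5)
The plan is to expand the surrogate in \eqref{eq:surrogate_def} term by term and check that each term is a sum over pairs $(s,a)$ of a function of $\mathbf n_{s,a}=(n_1(s,a),\ldots,n_K(s,a))$ alone.

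First, $J_1(\mathbf n)=\sum_{s,a}\alpha_1(s,a)B(s,a;\mathbf n)^2$ and $J_3(\mathbf n)=\sum_{s,a}\alpha_3(s,a)W(s,a;\mathbf n)$ are already sums over pairs. The definitions of $B$ and $W$ (including the default case $N(s,a)=0$) involve only $n_f(s,a)$, the fixed moments at $(s,a)$, and the counts $n_k(s,a)$. The same holds for $N(s,a)=n_f(s,a)+\sum_k n_k(s,a)$, and it also covers the switch between the default branch and the ratio branch. So each summand depends only on $\mathbf n_{s,a}$.

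Second, the linearized coupling term is the only place where $\mathcal B_s(\mathbf n)$ enters. The coefficient $\mathcal B_s^{(t)}=\mathcal B_s(\mathbf n^{(t)})$ is a constant at iteration $t$, independent of $\mathbf n$. Substituting $\mathcal B_s(\mathbf n)=\sum_a\pi_e(a\mid s)B(s,a;\mathbf n)$ and exchanging the finite sums gives
\[
-2\sum_s d_0(s)\mathcal B_s^{(t)}\mathcal B_s(\mathbf n)=\sum_{s,a}\omega_2^{(t)}(s,a)\,B(s,a;\mathbf n),
\]
with $\omega_2^{(t)}(s,a)=-2d_0(s)\mathcal B_s^{(t)}\pi_e(a\mid s)$ as in the pseudocode. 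Each term again depends only on $\mathbf n_{s,a}$.

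Collecting terms, I define
\[
\phi^{(t)}_{s,a}(\mathbf n_{s,a})=\alpha_1(s,a)B(s,a;\mathbf n)^2+\omega_2^{(t)}(s,a)B(s,a;\mathbf n)+\alpha_3(s,a)W(s,a;\mathbf n).
\]
This is exactly the local summand $\tilde J^{(t)}_{s,a}$ of \S\ref{sec:surrogate_solver}, and it yields the claimed decomposition.

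There is no real obstacle; the argument is bookkeeping. The one point needing care is that the quadratic coupling $\mathcal B_s(\mathbf n)^2$ in $J_2$ is what breaks separability in the original objective, as Proposition~\ref{prop:nonsep} shows. It has been replaced by an expression \emph{linear} in $\mathcal B_s(\mathbf n)$ with frozen coefficient $\mathcal B_s^{(t)}$, so no cross-action products remain. The dropped constant $\sum_s d_0(s)(\mathcal B_s^{(t)})^2$ does not affect separability; if one works with $\tilde J_{\mathrm{full}}$ instead, it can be absorbed into any single summand.
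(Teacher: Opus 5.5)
Your proposal is correct and matches the paper's proof: $J_1$ and $J_3$ are already sums over pairs, and the linearized $-J_2$ term rewrites as $\sum_{s,a}\omega_2^{(t)}(s,a)B(s,a)$ with the frozen coefficient $\omega_2^{(t)}(s,a)=-2d_0(s)\mathcal B_s(\mathbf n^{(t)})\pi_e(a\mid s)$. Your added remarks about the default branch $N(s,a)=0$ and the dropped constant are harmless refinements.
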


\begin{proof}
The terms $J_1$ and $J_3$ are already sums over individual $(s,a)$ pairs.
The linearized $-J_2$ contribution to the surrogate is:
\[
    -2\sum_s d_0(s)\,\mathcal{B}_s(\mathbf{n}^{(t)})\!\sum_a \pi_e(a|s)\,B(s,a) = \sum_s \sum_a \omega_2^{(t)}(s,a)\,B(s,a),
\]
where $\omega_2^{(t)}(s,a) = -2\,d_0(s)\,\mathcal{B}_s(\mathbf{n}^{(t)})\,\pi_e(a|s)$ is a constant with respect to $\mathbf{n}$.
Because each $B(s,a)$ depends only on $\{n_k(s,a)\}_k$, the optimization surrogate $\tilde J$ is separable.
\end{proof}

\begin{lemma}[Convexity of the Local Cost for Fixed Total Count]
\label{lem:convex_fixed_m}
Fix an MM iteration $t$, a pair $(s,a)$, a multiplier $\lambda\ge0$, and a total count $m$ with $N_0:=n_f(s,a)+m>0$.
Holding $N_0$ constant, the complete local cost is
\begin{equation}
\label{eq:psi_local}
\begin{aligned}
    \psi_{s,a}^{(t)}(\mathbf n_K;\lambda)
    &= \frac{\omega_1}{N_0^2}\left(\sum_k n_kE_k\right)^2
       +\frac{\omega_2^{(t)}}{N_0}\sum_k n_kE_k \\
    &\quad+\omega_3\left(\frac{\sigma_r^2(s,a)}{N_0}
       +\frac{\sum_k n_kD_k}{N_0^2}\right)
       +\lambda\sum_k c_kn_k,
\end{aligned}
\end{equation}
and is convex in $\mathbf n_K:=(n_1,\ldots,n_K)^\top$ on the simplex $\{\mathbf n_K\ge\mathbf0,\ \mathbf1^\top\mathbf n_K=m\}$.
The coefficients $\omega_1=\alpha_1(s,a)$, $\omega_3=\alpha_3(s,a)$, and $\omega_2^{(t)}=-2d_0(s)\mathcal B_s(\mathbf n^{(t)})\pi_e(a|s)$ are fixed during this search.
\end{lemma}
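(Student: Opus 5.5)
The plan is to first confirm that the displayed formula \eqref{eq:psi_local} is exactly the local summand $\psi_{s,a}^{(t)}(\mathbf z;\lambda)=\tilde J_{s,a}^{(t)}(\mathbf z)+\lambda\sum_k c_kz_k$. Then I observe that, once the total count is frozen, every term is either affine in $\mathbf n_K$ or a nonnegative multiple of the square of a linear form.

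\textbf{Deriving the formula.} By Lemma~\ref{lem:separability}, the $(s,a)$ summand of $\tilde J$ is $\omega_1B(s,a)^2+\omega_2^{(t)}B(s,a)+\omega_3W(s,a)$. On the set $\mathbf 1^\top\mathbf n_K=m$ we have $N(s,a)=n_f(s,a)+m=N_0>0$. So the positive-count definitions from \S\ref{sec:B1_decomp} apply: $B(s,a)=P/N_0$ with $P=\sum_kn_kE_k$, and $W(s,a)=\sigma_r^2(s,a)/N_0+Q_D/N_0^2$ with $Q_D=\sum_kn_kD_k$. The latter is the simplification \eqref{eq:var_rhat}. Substituting these and adding the cost penalty reproduces \eqref{eq:psi_local}. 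The key point is that the denominators $N_0$ and $N_0^2$ are now constants rather than functions of $\mathbf n_K$; this is what removes the rational-function nonconvexity seen in Proposition~\ref{prop:nonconvex}.

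\textbf{Term-by-term convexity.} The second term, the $\sigma_r^2/N_0$ term, the $Q_D/N_0^2$ term, and $\lambda\sum_kc_kn_k$ are all affine in $\mathbf n_K$ (the $\sigma_r^2/N_0$ term is in fact constant). Their signs do not matter, so $\omega_2^{(t)}$ may be negative and the $D_k$ may be of either sign. The only nonlinear term is $\frac{\omega_1}{N_0^2}(\mathbf e^\top\mathbf n_K)^2$, a quadratic form with Hessian $\frac{2\omega_1}{N_0^2}\mathbf e\mathbf e^\top$. This Hessian is positive semidefinite because $\omega_1=\alpha_1(s,a)\ge0$, by the sign properties in \S\ref{sec:B1_decomp}. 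Equivalently, the term is a convex scalar function $x\mapsto cx^2$ with $c\ge0$, composed with a linear map.

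\textbf{Conclusion and main subtlety.} A sum of convex and affine functions is convex. The set $\{\mathbf n_K\ge\mathbf 0,\ \mathbf 1^\top\mathbf n_K=m\}$ is convex, being an intersection of a hyperplane with the orthant. Hence $\psi_{s,a}^{(t)}$ is convex there, which proves the lemma. The only real subtlety is making sure the expression is used only on the fixed-$m$ slice, where $N_0$ is constant. The case $N_0=0$ is excluded by hypothesis, which avoids the default-model branch. I would also remark that convexity holds for the continuous relaxation on the simplex, not just at integer points.
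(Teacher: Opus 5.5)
Your proposal is correct and follows essentially the same argument as the paper. With $N_0$ fixed, every term except $\frac{\omega_1}{N_0^2}(\mathbf e^\top\mathbf n_K)^2$ is affine, and that term has the positive semidefinite rank-one Hessian $\frac{2\omega_1}{N_0^2}\mathbf e\mathbf e^\top$ because $\omega_1=\alpha_1(s,a)\ge 0$; your explicit derivation of the displayed formula from the separable surrogate summand is a harmless addition that the paper leaves implicit.
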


\begin{proof}
The Hessian of $\psi$ with respect to $\mathbf{n}_K$ is
\[
    \mathbf{H}_\psi = \frac{2\omega_1}{N_0^2}\,\mathbf{e}\,\mathbf{e}^\top, \qquad \mathbf{e} = (E_1,\ldots,E_K)^\top.
\]
For every direction $\mathbf v\in\mathbb R^K$, $\mathbf v^\top\mathbf H_\psi\mathbf v=2\omega_1(\mathbf e^\top\mathbf v)^2/N_0^2\ge0$ because $\omega_1=\alpha_1(s,a)\ge0$.
Thus the quadratic extension with fixed denominator is convex, as is its restriction to the simplex.
Strict convexity is not required and need not hold because the Hessian has rank at most one.
\end{proof}

The term $\omega_3\sigma_r^2(s,a)/N_0$ is constant across source mixtures at fixed $m$, but varies across counts and must be retained in $H^\sharp_{s,a}(m,\lambda)$.
For $m=0$, the only source vector is $\mathbf0$, so Algorithm~\ref{alg:subproblem} stores its complete local cost without a swap search.
If also $n_f(s,a)=0$, that cost is $\omega_1(r_0-\mu_r)^2+\omega_2^{(t)}(r_0-\mu_r)$, using the default bias and zero reward-model variance.

\begin{proposition}[Pairwise-Exchange Termination and Approximation Error]
\label{prop:greedy_opt}
Fix $t$, $(s,a)$, $m\ge1$, and $\lambda\ge0$.
The source search in Algorithm~\ref{alg:subproblem} accepts at most $\min\{T_{\mathrm{swap}},\binom{m+K-1}{K-1}-1\}$ swaps.
For its output $\mathbf n^\sharp$, let $H^\sharp_{s,a}(m,\lambda):=\psi_{s,a}^{(t)}(\mathbf n^\sharp;\lambda)$ and define the residual gain
\[
    \delta_{s,a}(m,\lambda):=
    \max_{\substack{u,v\in\mathcal K\\n_u^\sharp\ge1}}
    \left[\psi_{s,a}^{(t)}(\mathbf n^\sharp;\lambda)
    -\psi_{s,a}^{(t)}(\mathbf n^\sharp+\mathbf e_v-\mathbf e_u;\lambda)\right]_+,
\]
where $[x]_+:=\max\{x,0\}$ and allowing $u=v$ includes a zero-gain move when $K=1$.
Relative to the exact fixed-count minimum $H_{s,a}^{(t,\lambda)}(m)$, the output satisfies
\begin{equation}
\label{eq:greedy_bound}
\begin{aligned}
    0&\le H^\sharp_{s,a}(m,\lambda)-H_{s,a}^{(t,\lambda)}(m)
    \le\eta_{s,a}(m)+m\,\delta_{s,a}(m,\lambda),\\
    \eta_{s,a}(m)&:=\frac{\alpha_1(s,a)m\Delta E_{s,a}^{\,2}}{(n_f(s,a)+m)^2},
    \qquad
    \Delta E_{s,a}:=\max_k E_k-\min_k E_k.
\end{aligned}
\end{equation}
The output is swap-local if and only if $\delta_{s,a}(m,\lambda)=0$.
With no swap cap, strict descent reaches such an output after at most $\binom{m+K-1}{K-1}-1$ accepted swaps.
For $m=0$, the local gap is zero, and we set $\eta_{s,a}(0)=\delta_{s,a}(0,\lambda)=0$.
When $n_f(s,a)>0$, $\eta_{s,a}(m)\le\alpha_1(s,a)\Delta E_{s,a}^{\,2}/[4n_f(s,a)]$ for every $m$.
\end{proposition}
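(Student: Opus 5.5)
The proof has three parts: the swap-count bound, the approximation gap, and the two side claims (swap-locality and the bound on $\eta_{s,a}(m)$).

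\textbf{Swap count.} Fix $t$, $(s,a)$, $m\ge1$, $\lambda$. By Lemma~\ref{lem:convex_fixed_m}, $N_0=n_f(s,a)+m$ is constant during the search, so $\psi:=\psi^{(t)}_{s,a}(\cdot;\lambda)$ is a fixed convex quadratic on the lattice simplex $\Sigma_m:=\{\mathbf z\in\mathbb Z_+^K:\mathbf 1^\top\mathbf z=m\}$, which has $\binom{m+K-1}{K-1}$ points. Each accepted swap has gain $g>0$, so $\psi$ strictly decreases and no point is revisited. Hence at most $|\Sigma_m|-1$ swaps are accepted, and the cap $T_{\mathrm{swap}}$ gives the stated minimum. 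With no cap, the search stops only when no swap improves, i.e.\ when $\delta_{s,a}(m,\lambda)=0$. This also gives the equivalence ``swap-local $\iff\delta=0$'', which is just the definition of $\delta$ as the largest positive-part swap gain.

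\textbf{Lower bound of the gap.} This is immediate, since $\mathbf n^\sharp\in\Sigma_m$ and $H$ is the minimum over $\Sigma_m$.

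\textbf{Upper bound of the gap.} Write $\psi(\mathbf z)=\frac{\omega_1}{N_0^2}(\mathbf e^\top\mathbf z)^2+\mathbf l^\top\mathbf z+\text{const}$, where $\mathbf l$ is linear and collects the $\omega_2$, $D_k$, and cost terms. Let $\mathbf z^\star$ be an exact minimizer. The difference $\mathbf z^\star-\mathbf n^\sharp$ can be decomposed into at most $m$ unit transfers $\mathbf e_{v_i}-\mathbf e_{u_i}$, each with $u_i$ in the support of $\mathbf n^\sharp$; this is a transport decomposition with total mass at most $m$. Convexity gives
\[
\psi(\mathbf z^\star)\ge\psi(\mathbf n^\sharp)+\nabla\psi(\mathbf n^\sharp)^\top(\mathbf z^\star-\mathbf n^\sharp),
\]
so the gap is at most $-\sum_i\nabla\psi(\mathbf n^\sharp)^\top(\mathbf e_{v_i}-\mathbf e_{u_i})$. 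For each unit swap, the exact change satisfies
\[
\psi(\mathbf n^\sharp+\mathbf e_v-\mathbf e_u)-\psi(\mathbf n^\sharp)=\nabla\psi^\top(\mathbf e_v-\mathbf e_u)+\frac{\omega_1}{N_0^2}(E_v-E_u)^2,
\]
and the left side is at least $-\delta$ by definition of $\delta$. Therefore $-\nabla\psi^\top(\mathbf e_v-\mathbf e_u)\le\delta+\omega_1\Delta E^2/N_0^2$. Summing over at most $m$ transfers bounds the gap by
\[
m\delta+\frac{\omega_1 m\Delta E^2}{N_0^2}=m\,\delta_{s,a}(m,\lambda)+\eta_{s,a}(m).
\]

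\textbf{Bound on $\eta_{s,a}(m)$.} The case $m=0$ is trivial. For $n_f>0$, the function $m/(n_f+m)^2$ is maximized at $m=n_f$ with value $1/(4n_f)$.

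\textbf{Main obstacle.} The delicate step is the decomposition of $\mathbf z^\star-\mathbf n^\sharp$ into at most $m$ unit transfers whose source coordinates lie in the support of $\mathbf n^\sharp$. I would carry this out by pairing the $\sum_k(n^\sharp_k-z^\star_k)_+\le m$ surplus units of $\mathbf n^\sharp$ with the deficit units. The check that every such transfer is individually feasible at $\mathbf n^\sharp$, i.e.\ $n^\sharp_u\ge1$, must be done explicitly so that $\delta$ applies to each transfer.
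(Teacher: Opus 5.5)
Your proposal is correct and follows essentially the same route as the paper: strict descent on the finite lattice simplex gives the swap count, and the gap bound combines the first-order convexity inequality at $\mathbf n^\sharp$, the exact quadratic swap expansion $\psi(\mathbf n^\sharp+\mathbf e_v-\mathbf e_u)-\psi(\mathbf n^\sharp)=\nabla\psi^\top(\mathbf e_v-\mathbf e_u)+\omega_1(E_v-E_u)^2/N_0^2$, and a transport decomposition of $\mathbf z^\star-\mathbf n^\sharp$ of total mass $\tfrac12\|\mathbf z^\star-\mathbf n^\sharp\|_1\le m$. The step you flag as the main obstacle is immediate: every donor satisfies $n^\sharp_u>z^\star_u\ge0$, and each transfer is evaluated separately at $\mathbf n^\sharp$ through the linearization, so only $n^\sharp_u\ge1$ is needed; you should also state that the $m=0$ gap is zero because the simplex is then the singleton $\{\mathbf 0\}$.
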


\begin{proof}
\emph{Finite termination.}
The integer simplex contains $\binom{m+K-1}{K-1}$ configurations.
Each accepted swap strictly decreases $\psi_{s,a}^{(t)}(\cdot;\lambda)$, so it cannot revisit a configuration.
Including the initial configuration gives at most $\binom{m+K-1}{K-1}-1$ accepted swaps, and the cap imposes the additional limit $T_{\mathrm{swap}}$.
Without a cap, the search stops only when there is no positive swap gain; this is precisely $\delta_{s,a}(m,\lambda)=0$.
For $K=1$ there are no swaps, and the sole source mixture is optimal.

\emph{Approximation error.}
With $m\ge1$, the denominator $N_0=n_f(s,a)+m$ is positive even if the factual count is zero.
We write $\psi(\mathbf n_K)=\omega_1(\mathbf e^\top\mathbf n_K)^2/N_0^2+\mathbf L^\top\mathbf n_K+C$, where $\mathbf e=(E_1,\ldots,E_K)^\top$ and $\mathbf L,C$ collect the fixed-count linear and constant terms in~\eqref{eq:psi_local}.
Its Hessian is $\mathbf H=2\omega_1\mathbf e\mathbf e^\top/N_0^2\succeq0$.

For any feasible swap $\mathbf d_{u,v}=\mathbf e_v-\mathbf e_u$, the residual definition and the exact quadratic expansion give
\begin{equation}
\label{eq:swap_grad_upper}
    -\nabla\psi(\mathbf n^{\sharp})^{\!\top}\mathbf d_{u,v}
    \le\delta_{s,a}(m,\lambda)+\tfrac12\mathbf d_{u,v}^\top\mathbf H\mathbf d_{u,v}
    =\delta_{s,a}(m,\lambda)+\frac{\omega_1(E_v-E_u)^2}{N_0^2}.
\end{equation}
Let $\mathbf z^*$ be a global minimizer on the same integer simplex.
Moving units from coordinates with $n_u^\sharp>z_u^*$ to coordinates with $n_v^\sharp<z_v^*$ gives $\mathbf z^*-\mathbf n^\sharp=\sum_{u,v}q_{uv}\mathbf d_{u,v}$, where $q_{uv}$ are nonnegative integers and every donor has $n_u^\sharp\ge1$.
The total moved mass is $\sum_{u,v}q_{uv}=\tfrac12\|\mathbf z^*-\mathbf n^\sharp\|_1\le m$.
Convexity and~\eqref{eq:swap_grad_upper} therefore yield
\[
\begin{aligned}
    \psi(\mathbf n^\sharp)-\psi(\mathbf z^*)
    &\le-\nabla\psi(\mathbf n^\sharp)^\top(\mathbf z^*-\mathbf n^\sharp)\\
    &\le\sum_{u,v}q_{uv}\left(\delta_{s,a}(m,\lambda)+\frac{\omega_1(E_v-E_u)^2}{N_0^2}\right)\\
    &\le m\,\delta_{s,a}(m,\lambda)+\frac{\omega_1m\Delta E_{s,a}^{\,2}}{N_0^2}.
\end{aligned}
\]
The lower bound follows because $\mathbf n^\sharp$ is a feasible source mixture.
Substituting $\omega_1=\alpha_1(s,a)$ proves~\eqref{eq:greedy_bound}.
For $m=0$, the simplex is the singleton $\{\mathbf0\}$, so the exact and stored costs coincide.
Finally, $(n_f+m)^2\ge4n_fm$ gives the uniform bound when $n_f>0$.
The count-specific bound remains valid for $n_f=0$ and $m\ge1$, where $\eta_{s,a}(m)=\alpha_1(s,a)\Delta E_{s,a}^{\,2}/m$.
\end{proof}

\subsection{Lagrangian Relaxation Analysis}
\label{sec:C3}

The global budget constraint $\sum_{s,a,k} c_k\,n_k(s,a) \leq \kappa$ couples the subproblems across contexts.
We relax it via a Lagrange multiplier $\lambda \geq 0$, yielding the dual function:
\[
    q(\lambda) = \min_{\mathbf{n} \in \mathcal{F}_0} \Bigl\{\tilde{J}(\mathbf{n};\,\mathbf{n}^{(t)}) + \lambda\bigl(\textstyle\sum_{s,a,k}c_k\,n_k(s,a) - \kappa\bigr)\Bigr\}.
\]
By Lemma~\ref{lem:separability}, the Lagrangian problem decomposes over contexts.
Algorithm~\ref{alg:subproblem} solves the count aggregation exactly for its supplied $H^\sharp$ tables; the local approximation errors determine its gap to the exact Lagrangian minimum.

The multiplier search always produces a feasible candidate in exact arithmetic.
Indeed, $\mathcal F_0$ is finite, $\mathbf0$ is represented in every set of DP tables, and $c_{\min}:=\min_k c_k>0$.
For fixed $t$, every nonzero allocation has cost at least $c_{\min}$, so any
\[
    \lambda>
    \frac{\tilde J(\mathbf0;\mathbf n^{(t)})
    -\min_{\mathbf n\in\mathcal F_0}\tilde J(\mathbf n;\mathbf n^{(t)})}{c_{\min}}
\]
makes the zero allocation strictly better than every nonzero allocation in the penalized objective.
Thus doubling eventually returns $\mathbf0$, or terminates earlier at another budget-feasible candidate.
Subsequent bisection is a finite candidate search, without a monotonicity guarantee for the costs returned by approximate local solves.

\begin{proposition}[Inner-loop suboptimality]
\label{prop:lag_gap}
Fix $t$ and let $\hat{\mathbf n}^{(t+1)}\in\arg\min_{\mathbf n\in\mathcal F}\tilde J(\mathbf n;\mathbf n^{(t)})$.
Choose any recorded feasible trial $(\lambda^{(t)},\mathbf n_{\mathrm{cand}}^{(t)},\boldsymbol\delta^{(t)})\in\mathcal Q_t$ from Algorithm~\ref{alg:main}.
To bound the aggregate local-search error, define
\[
    E_{\mathrm{Greedy}}^{(t)}
    :=\sum_{s,a}\max_{0\le m\le M_{s,a}}
    \left\{\eta_{s,a}(m)+m\,\delta_{s,a}(m,\lambda^{(t)})\right\},
\]
where the maxima are over integer counts and $M_{s,a}=N_s-n_f(s,a)$.
The actual outer update, including the current-allocation fallback, satisfies
\begin{equation}
\label{eq:lag_bound}
\begin{aligned}
    0\le\Delta_{\mathrm{Lag}}^{(t)}
    &:=\tilde J(\mathbf n^{(t+1)};\mathbf n^{(t)})
       -\tilde J(\hat{\mathbf n}^{(t+1)};\mathbf n^{(t)})\\
    &\le\lambda^{(t)}\bigl(\kappa-C(\mathbf n_{\mathrm{cand}}^{(t)})\bigr)
       +E_{\mathrm{Greedy}}^{(t)}.
\end{aligned}
\end{equation}
If all local residuals in that trial vanish and all factual counts are positive, then $E_{\mathrm{Greedy}}^{(t)}\le\sum_{s,a}\alpha_1(s,a)\Delta E_{s,a}^{\,2}/[4n_f(s,a)]$.
\end{proposition}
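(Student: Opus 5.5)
The plan is a standard Lagrangian weak-duality sandwich. For brevity write $\tilde J(\cdot):=\tilde J(\cdot;\mathbf n^{(t)})$ and $\lambda:=\lambda^{(t)}$.

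\textbf{Lower bound.} The actual update $\mathbf n^{(t+1)}$ is either a recorded feasible candidate or the fallback $\mathbf n^{(t)}$. Both lie in $\mathcal F$, while $\hat{\mathbf n}^{(t+1)}$ minimizes $\tilde J$ over $\mathcal F$. Hence $\Delta_{\mathrm{Lag}}^{(t)}\ge0$ immediately.

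\textbf{Reduce the upper bound to the chosen trial.} The outer loop keeps the candidate with the smallest surrogate among $\mathcal Q_t\cup\{\mathbf n^{(t)}\}$. Therefore
\[
\tilde J(\mathbf n^{(t+1)})\le\tilde J(\mathbf n_{\mathrm{cand}}^{(t)}),
\]
and it suffices to bound $\tilde J(\mathbf n_{\mathrm{cand}}^{(t)})-\tilde J(\hat{\mathbf n}^{(t+1)})$.

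\textbf{Compare the penalized objectives.} Let $L_\lambda(\mathbf n):=\tilde J(\mathbf n)+\lambda C(\mathbf n)$ on $\mathcal F_0$. I would establish three facts.
\begin{itemize}
\item[(a)] The candidate's value equals its recorded DP value: $L_\lambda(\mathbf n_{\mathrm{cand}}^{(t)})=\sum_{s,a}H^\sharp_{s,a}(m_{s,a},\lambda)$. This holds because the stored mixtures are exactly what is returned, and Lemma~\ref{lem:separability} makes $L_\lambda$ separable over pairs.
\item[(b)] The DP is exact on the approximate tables. Since each $H^\sharp_{s,a}(m,\lambda)\le H^{(t,\lambda)}_{s,a}(m)+\eta_{s,a}(m)+m\delta_{s,a}(m,\lambda)$ by Proposition~\ref{prop:greedy_opt}, I would compare against the exact minimizer's counts $\hat m_{s,a}$, which are feasible for the DP. This gives
\[
\sum_{s,a}H^\sharp_{s,a}(m_{s,a},\lambda)\le\sum_{s,a}H^\sharp_{s,a}(\hat m_{s,a},\lambda)\le\min_{\mathcal F_0}L_\lambda+E_{\mathrm{Greedy}}^{(t)}.
\]
The per-pair error is bounded by its maximum over $m$, which is why $E_{\mathrm{Greedy}}^{(t)}$ uses $\max_m$. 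It must be evaluated with the residuals $\boldsymbol\delta^{(t)}$ recorded at $\lambda^{(t)}$.
\item[(c)] Weak duality. Because $\mathcal F\subseteq\mathcal F_0$ and $\lambda\ge0$,
\[
\min_{\mathcal F_0}L_\lambda\le L_\lambda(\hat{\mathbf n}^{(t+1)})\le\tilde J(\hat{\mathbf n}^{(t+1)})+\lambda\kappa,
\]
using $C(\hat{\mathbf n}^{(t+1)})\le\kappa$.
\end{itemize}

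\textbf{Combine.} Chaining (a)–(c) yields
\[
\tilde J(\mathbf n_{\mathrm{cand}}^{(t)})+\lambda C(\mathbf n_{\mathrm{cand}}^{(t)})\le\tilde J(\hat{\mathbf n}^{(t+1)})+\lambda\kappa+E_{\mathrm{Greedy}}^{(t)}.
\]
Rearranging gives~\eqref{eq:lag_bound}.

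\textbf{Final claim.} When all residuals vanish and all factual counts are positive, each term reduces to $\eta_{s,a}(m)$. The uniform bound $\eta_{s,a}(m)\le\alpha_1\Delta E_{s,a}^2/(4n_f)$ from Proposition~\ref{prop:greedy_opt} then gives the stated bound on $E_{\mathrm{Greedy}}^{(t)}$.

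\textbf{Main obstacle.} Step (b) needs the most care. One has to confirm that the DP's feasible count set matches that of $\mathcal F_0$: the per-pair caps and per-context sums, with unused capacity allowed. One also has to handle the $m=0$ and $n_f=0$ default cases, where the local gap is zero. Once (b) is in place, the remaining steps are bookkeeping.
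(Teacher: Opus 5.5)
Your proposal is correct and follows the paper's proof essentially step for step. It has the same structure: nonnegativity from feasibility, and the outer-comparison reduction to the chosen trial. It then combines exact DP aggregation over the stored tables with Proposition~\ref{prop:greedy_opt}, evaluated at the counts of an $\mathcal F_0$-minimizer of the penalized surrogate (the paper's $\tilde{\mathbf n}_\lambda$), and closes with weak duality via $\mathcal F\subseteq\mathcal F_0$ and $C(\hat{\mathbf n}^{(t+1)})\le\kappa$. For the middle inequality in (b) to hold as written, your $\hat m_{s,a}$ must be the counts of that $\mathcal F_0$-minimizer rather than of $\hat{\mathbf n}^{(t+1)}$, though using the counts of $\hat{\mathbf n}^{(t+1)}$ directly would also give the final bound.
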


\begin{proof}
Feasibility of $\mathbf n^{(t+1)}$ and optimality of $\hat{\mathbf n}^{(t+1)}$ give $\Delta_{\mathrm{Lag}}^{(t)}\ge0$.
For the upper bound, fix the selected trial multiplier $\lambda=\lambda^{(t)}$ and write $L(\mathbf n;\lambda):=\tilde J(\mathbf n;\mathbf n^{(t)})+\lambda C(\mathbf n)$.

First, $D_j(w)$ in Algorithm~\ref{alg:subproblem} is the minimum stored cost for exactly $w$ annotations across the first $j$ actions.
The initialization gives this interpretation for $j=0$.
For stage $j$, any feasible assignment uses some $m\le\min\{w,M_{s,a_j}\}$ on action $a_j$, leaving $w-m$ annotations for the first $j-1$ actions.
Minimizing over these choices proves the interpretation inductively and supplies valid backtracking counts.
Selecting the best $w\le N_s$ enforces the context capacity inequality.
Assembling the independent context solutions therefore minimizes the sum of stored costs over $\mathcal F_0$.

Let $\tilde{\mathbf n}_\lambda\in\arg\min_{\mathbf n\in\mathcal F_0}L(\mathbf n;\lambda)$ and let $\tilde m_{s,a}:=\sum_k\tilde n_{\lambda,k}(s,a)$ be its counts.
By exact aggregation and Proposition~\ref{prop:greedy_opt},
\[
\begin{aligned}
    L(\mathbf n_{\mathrm{cand}}^{(t)};\lambda)
    &\le\sum_{s,a}H^\sharp_{s,a}(\tilde m_{s,a},\lambda)\\
    &\le\sum_{s,a}\left[
       H_{s,a}^{(t,\lambda)}(\tilde m_{s,a})
       +\eta_{s,a}(\tilde m_{s,a})
       +\tilde m_{s,a}\delta_{s,a}(\tilde m_{s,a},\lambda)\right]\\
    &\le L(\tilde{\mathbf n}_\lambda;\lambda)+E_{\mathrm{Greedy}}^{(t)}.
\end{aligned}
\]
The last inequality uses the fact that the exact fixed-count costs are no greater than the costs of the corresponding mixtures in $\tilde{\mathbf n}_\lambda$.
Because $\hat{\mathbf n}^{(t+1)}\in\mathcal F\subseteq\mathcal F_0$, this gives
\begin{equation}
\label{eq:greedy_lag}
    L(\mathbf n_{\mathrm{cand}}^{(t)};\lambda)
    \le L(\tilde{\mathbf n}_\lambda;\lambda)+E_{\mathrm{Greedy}}^{(t)}
    \le L(\hat{\mathbf n}^{(t+1)};\lambda)+E_{\mathrm{Greedy}}^{(t)}.
\end{equation}
The outer comparison ensures $\tilde J(\mathbf n^{(t+1)};\mathbf n^{(t)})\le\tilde J(\mathbf n_{\mathrm{cand}}^{(t)};\mathbf n^{(t)})$, even when the current allocation wins.
Rearranging~\eqref{eq:greedy_lag} and using $C(\hat{\mathbf n}^{(t+1)})\le\kappa$ yields
\[
\begin{aligned}
    \Delta_{\mathrm{Lag}}^{(t)}
    &\le\tilde J(\mathbf n_{\mathrm{cand}}^{(t)};\mathbf n^{(t)})
       -\tilde J(\hat{\mathbf n}^{(t+1)};\mathbf n^{(t)})\\
    &\le\lambda\bigl[C(\hat{\mathbf n}^{(t+1)})-C(\mathbf n_{\mathrm{cand}}^{(t)})\bigr]
       +E_{\mathrm{Greedy}}^{(t)}\\
    &\le\lambda\bigl[\kappa-C(\mathbf n_{\mathrm{cand}}^{(t)})\bigr]+E_{\mathrm{Greedy}}^{(t)}.
\end{aligned}
\]
If every residual is zero and every factual count is positive, the uniform local bound in Proposition~\ref{prop:greedy_opt} gives the stated bound on $E_{\mathrm{Greedy}}^{(t)}$.
\end{proof}

\subsection{Per-Iteration Approximation Error}
\label{sec:C4}

We now combine the majorization and Lagrangian analyses to bound the optimality gap.
\begin{theorem}[Per-Iteration Approximation Error]
\label{thm:per_iter_error}
Let $\mathbf{n}^*$ denote any global minimizer of $J(\mathbf{n})$ over $\mathcal{F}$, and use any recorded feasible trial and its error certificate from Proposition~\ref{prop:lag_gap}.
The solution $\mathbf{n}^{(t+1)}$ returned by one MM iteration satisfies:
\begin{equation}
\label{eq:per_iter_bound}
    J(\mathbf{n}^{(t+1)}) - J(\mathbf{n}^*) \;\leq\;
    \underbrace{\sum_{s \in \mathcal{S}} d_0(s)\bigl(\mathcal{B}_s(\mathbf{n}^*) - \mathcal{B}_s(\mathbf{n}^{(t)})\bigr)^2}_{\textstyle \mathcal{G}_{\mathrm{MM}}^{(t)}\;:\;\text{majorization gap}}
    \;+\;\underbrace{\lambda^{(t)}\bigl(\kappa-C(\mathbf n_{\mathrm{cand}}^{(t)})\bigr)}_{\text{Lagrangian relaxation gap}}
    \;+\;\underbrace{E_{\mathrm{Greedy}}^{(t)}}_{\text{inner greedy gap}}.
\end{equation}
The majorization gap is bounded uniformly by $\mathcal G_{\mathrm{MM}}^{(t)}\le4\beta_{\max}^2$, where $\beta_{\max}$ from Appendix~\ref{app:profile_sensitivity} bounds both annotation biases and default-model biases at zero factual counts.
The greedy error certificate includes remaining swap gains when the local search stops at its cap.
\end{theorem}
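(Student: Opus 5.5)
The plan is to chain the majorization inequality from Lemma~\ref{lem:majorization} with the inner-loop bound of Proposition~\ref{prop:lag_gap}, comparing everything to the surrogate value at the global minimizer $\mathbf n^*$. By the upper-bound property (i) of Lemma~\ref{lem:majorization},
\[
J(\mathbf n^{(t+1)})\le \tilde J_{\mathrm{full}}(\mathbf n^{(t+1)};\mathbf n^{(t)}) = \tilde J(\mathbf n^{(t+1)};\mathbf n^{(t)})+J_2(\mathbf n^{(t)}).
\]
Proposition~\ref{prop:lag_gap} then gives
\[
\tilde J(\mathbf n^{(t+1)};\mathbf n^{(t)})\le \tilde J(\hat{\mathbf n}^{(t+1)};\mathbf n^{(t)})+\lambda^{(t)}\bigl(\kappa-C(\mathbf n_{\mathrm{cand}}^{(t)})\bigr)+E_{\mathrm{Greedy}}^{(t)}.
\]
Since $\mathbf n^*\in\mathcal F$ and $\hat{\mathbf n}^{(t+1)}$ minimizes $\tilde J(\cdot;\mathbf n^{(t)})$ over $\mathcal F$, we may replace $\hat{\mathbf n}^{(t+1)}$ by $\mathbf n^*$ on the right.

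Next, I would add back the constant $J_2(\mathbf n^{(t)})$, which converts $\tilde J(\mathbf n^*;\mathbf n^{(t)})+J_2(\mathbf n^{(t)})$ into $\tilde J_{\mathrm{full}}(\mathbf n^*;\mathbf n^{(t)})$. The exact gap formula~\eqref{eq:mm_gap_formula} evaluated at $\mathbf n=\mathbf n^*$ then gives $\tilde J_{\mathrm{full}}(\mathbf n^*;\mathbf n^{(t)})=J(\mathbf n^*)+\mathcal G_{\mathrm{MM}}^{(t)}$. Substituting and subtracting $J(\mathbf n^*)$ yields~\eqref{eq:per_iter_bound}. The one point to check is that the fallback rule does not break the chain. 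Proposition~\ref{prop:lag_gap} already covers the actual outer update, including fallback, so the bound holds for any recorded feasible trial.

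For the uniform bound on $\mathcal G_{\mathrm{MM}}^{(t)}$, I would bound each context-level bias:
\[
|\mathcal B_s(\mathbf n)|\le\sum_a\pi_e(a|s)|B(s,a;\mathbf n)|\le\beta_{\max}.
\]
This holds because $B(s,a;\mathbf n)$ is a convex combination of zero (with weight $n_f/N$) and the source biases $\varepsilon_{g_k}$ when $N>0$, and equals the default bias when $N=0$; both are covered by $\beta_{\max}$. Hence $(\mathcal B_s(\mathbf n^*)-\mathcal B_s(\mathbf n^{(t)}))^2\le 4\beta_{\max}^2$, and summing against $d_0$, which is a probability distribution, gives $\mathcal G_{\mathrm{MM}}^{(t)}\le 4\beta_{\max}^2$.

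I expect the main subtlety to be the $N(s,a)=0$ case. There the definition of $\beta_{\max}$ only includes default biases at pairs with $f_x=0$, which is exactly when $N=0$ can occur, so the bound still goes through. The remark about the greedy certificate follows directly from the definition of $\delta_{s,a}$ in Proposition~\ref{prop:greedy_opt}, which records residual swap gains even when the swap cap is reached.
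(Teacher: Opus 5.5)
Your proposal is correct and follows essentially the same route as the paper: majorization, the inner-loop bound of Proposition~\ref{prop:lag_gap}, replacing $\hat{\mathbf n}^{(t+1)}$ by $\mathbf n^*$ through the optimality of $\hat{\mathbf n}^{(t+1)}$, and the exact gap formula, followed by the same convex-combination bound $|\mathcal B_s|\le\beta_{\max}$. The differences are cosmetic: you carry the constant $J_2(\mathbf n^{(t)})$ explicitly rather than working with $\tilde J_{\mathrm{full}}$ throughout, and you apply Proposition~\ref{prop:lag_gap} before, rather than after, the comparison with $\mathbf n^*$.
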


\begin{proof}
At fixed $t$, the constant difference between $\tilde J_{\mathrm{full}}$ and $\tilde J$ leaves both the constrained minimizer $\hat{\mathbf n}^{(t+1)}$ and the value difference $\Delta_{\mathrm{Lag}}^{(t)}$ unchanged.
Consequently, Lemma~\ref{lem:majorization} and the definition of $\Delta_{\mathrm{Lag}}^{(t)}$ in Proposition~\ref{prop:lag_gap} give
\begin{align*}
    J(\mathbf{n}^{(t+1)}) &\leq \tilde{J}_{\mathrm{full}}\!\bigl(\mathbf{n}^{(t+1)};\,\mathbf{n}^{(t)}\bigr) && \text{(majorization)} \\
    &= \tilde{J}_{\mathrm{full}}\!\bigl(\hat{\mathbf{n}}^{(t+1)};\,\mathbf{n}^{(t)}\bigr) + \Delta_{\mathrm{Lag}}^{(t)} && \text{(definition of $\Delta_{\mathrm{Lag}}^{(t)}$)} \\
    &\leq \tilde{J}_{\mathrm{full}}\!\bigl(\mathbf{n}^*;\,\mathbf{n}^{(t)}\bigr) + \Delta_{\mathrm{Lag}}^{(t)} && \text{(optimality of $\hat{\mathbf n}^{(t+1)}$)}.
\end{align*}
Subtracting $J(\mathbf n^*)$ and applying the gap formula~\eqref{eq:mm_gap_formula} yields
\[
    J(\mathbf n^{(t+1)})-J(\mathbf n^*)
    \le \sum_{s\in\mathcal S}d_0(s)\bigl(\mathcal B_s(\mathbf n^*)-\mathcal B_s(\mathbf n^{(t)})\bigr)^2
    +\Delta_{\mathrm{Lag}}^{(t)}.
\]
The upper bound on $\Delta_{\mathrm{Lag}}^{(t)}$ from Proposition~\ref{prop:lag_gap} then gives~\eqref{eq:per_iter_bound}.
For $N(s,a)>0$, the bias is a convex combination of zero factual bias and source biases, so $|B(s,a)|\le\beta_{\max}$.
For $N(s,a)=0$, the definition of $\beta_{\max}$ also bounds the default bias.
Because the target-policy probabilities sum to one, $|\mathcal B_s|\le\beta_{\max}$, and hence $|\mathcal B_s(\mathbf n^*)-\mathcal B_s(\mathbf n^{(t)})|\le2\beta_{\max}$.
Summing the squared differences with weights $d_0(s)$, which also sum to one, gives $\mathcal G_{\mathrm{MM}}^{(t)}\le4\beta_{\max}^2$.
\end{proof}

\subsection{Convergence Guarantees}
\label{sec:C5}

\begin{corollary}[Finite Descent and Stopping Criteria]
\label{cor:convergence}
With fixed problem inputs and deterministic tie-breaking, Algorithm~\ref{alg:main} has the following properties:
\begin{enumerate}
    \item[(i)] Every changed allocation strictly decreases $J$, and there are at most $|\mathcal F|-1$ such updates.
    \item[(ii)] If the outer iteration limit is removed and $\epsilon=0$, the algorithm reaches an unchanged allocation in finitely many iterations.
    This allocation is a fixed point of the deterministic update rule.
    A positive tolerance or finite iteration limit may terminate before a fixed point is reached.
    \item[(iii)] If an unchanged allocation $\mathbf n^{(\infty)}$ is reached at iteration $t_\star$, then
    \begin{equation}
    \label{eq:fixed_point_quality}
    \begin{aligned}
        J(\mathbf n^{(\infty)})-J(\mathbf n^*)
        &\le\sum_s d_0(s)\bigl(\mathcal B_s(\mathbf n^*)-\mathcal B_s(\mathbf n^{(\infty)})\bigr)^2\\
        &\quad+\lambda^{(t_\star)}\bigl(\kappa-C(\mathbf n_{\mathrm{cand}}^{(t_\star)})\bigr)
        +E_{\mathrm{Greedy}}^{(t_\star)},
    \end{aligned}
    \end{equation}
    using any feasible trial recorded at $t_\star$.
\end{enumerate}
\end{corollary}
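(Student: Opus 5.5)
The three parts rest on the descent and error results already in the paper. The plan is to use Proposition~\ref{prop:mm_descent} and the update rule of Algorithm~\ref{alg:main} for (i), a finiteness argument for (ii), and Theorem~\ref{thm:per_iter_error} at the terminal iteration for (iii).

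\emph{Part (i).} Algorithm~\ref{alg:main} replaces $\mathbf n_{\mathrm{best}}$ only when the surrogate strictly improves. So a changed allocation satisfies $\tilde J(\mathbf n^{(t+1)};\mathbf n^{(t)})<\tilde J(\mathbf n^{(t)};\mathbf n^{(t)})$. Because $\tilde J$ and $\tilde J_{\mathrm{full}}$ differ by a constant at fixed $t$, the same strict inequality holds for $\tilde J_{\mathrm{full}}$. The upper bound and tangency in Lemma~\ref{lem:majorization} then give
\[
J(\mathbf n^{(t+1)})\le\tilde J_{\mathrm{full}}(\mathbf n^{(t+1)};\mathbf n^{(t)})<\tilde J_{\mathrm{full}}(\mathbf n^{(t)};\mathbf n^{(t)})=J(\mathbf n^{(t)}).
\]
The objective values along changed iterates are therefore strictly decreasing, so no allocation can be revisited. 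Since $\mathcal F$ is finite (Proposition~\ref{prop:nonneg}), the sequence contains at most $|\mathcal F|$ distinct allocations. Hence there are at most $|\mathcal F|-1$ changes.

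\emph{Part (ii).} With no iteration limit and $\epsilon=0$, the stopping test fires exactly when $\mathbf n^{(t+1)}=\mathbf n^{(t)}$. When the allocation changes, part (i) gives $J(\mathbf n^{(t)})-J(\mathbf n^{(t+1)})>0=\epsilon$, so the tolerance test does not stop the loop. Part (i) bounds the number of changing iterations, so an unchanged iteration must occur within $|\mathcal F|$ iterations.

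To see that this allocation is a fixed point, note that the coefficients $\boldsymbol\omega^{(t)}$ depend only on $\mathbf n^{(t)}$. The bisection, the DP of Algorithm~\ref{alg:subproblem} and the candidate comparison are deterministic under fixed tie-breaking. Thus the update is a deterministic map $\mathbf n\mapsto U(\mathbf n)$, and an unchanged allocation satisfies $U(\mathbf n)=\mathbf n$. For a positive tolerance or a finite iteration cap, I would simply remark that either test can stop the loop at a changed iterate, so no fixed-point claim is made in that case.

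\emph{Part (iii).} Apply Theorem~\ref{thm:per_iter_error} at $t=t_\star$ with $\mathbf n^{(t_\star)}=\mathbf n^{(t_\star+1)}=\mathbf n^{(\infty)}$. Substituting $\mathbf n^{(\infty)}$ on both sides of \eqref{eq:per_iter_bound} gives \eqref{eq:fixed_point_quality} directly, with the recorded trial and its certificate taken from Proposition~\ref{prop:lag_gap} at $t_\star$. I need to check that $\mathcal Q_{t_\star}$ contains a feasible trial; the doubling argument in \S\ref{sec:C3} guarantees this.

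The main obstacle is a bookkeeping point in (i): the algorithm compares $\tilde J$, not $\tilde J_{\mathrm{full}}$, and I must confirm the strictness transfers. It does, because the two surrogates differ only by $J_2(\mathbf n^{(t)})$, which is the same for every candidate at fixed $t$.
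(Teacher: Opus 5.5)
Your proposal is correct and follows essentially the same route as the paper. Part (i) transfers the strict surrogate improvement to $\tilde J_{\mathrm{full}}$ through the constant offset $J_2(\mathbf n^{(t)})$, then uses the majorization chain and finiteness of $\mathcal F$; part (ii) uses the deterministic update map; part (iii) evaluates Theorem~\ref{thm:per_iter_error} at $\mathbf n^{(t_\star)}=\mathbf n^{(t_\star+1)}=\mathbf n^{(\infty)}$. Your extra checks are also correct and tighten the paper's terser argument: with $\epsilon=0$ the tolerance test fires only on unchanged updates, and $\mathcal Q_{t_\star}$ always contains the budget-feasible post-doubling trial.
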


\begin{proof}
The best-candidate variable starts at $\mathbf n^{(t)}$ and changes only on a strict surrogate improvement.
Thus any changed allocation satisfies $\tilde J_{\mathrm{full}}(\mathbf n^{(t+1)};\mathbf n^{(t)})<\tilde J_{\mathrm{full}}(\mathbf n^{(t)};\mathbf n^{(t)})$, and the majorization chain in Proposition~\ref{prop:mm_descent} gives strict descent of $J$.
No changed allocation can therefore revisit a previous allocation in the finite set $\mathcal F$, proving~(i).
After at most $|\mathcal F|-1$ changes, an update must leave the allocation unchanged.
With fixed inputs, search parameters, and tie-breaking, repeating the update at that allocation returns the same result, proving~(ii).
The practical stopping conditions can stop after a small strict improvement or at the iteration limit, neither of which implies an unchanged next update.
At an unchanged update, $\mathbf n^{(t_\star+1)}=\mathbf n^{(t_\star)}=\mathbf n^{(\infty)}$, so Theorem~\ref{thm:per_iter_error} gives~\eqref{eq:fixed_point_quality}, proving~(iii).
\end{proof}

\subsection{Computational Complexity}
\label{app:complexity}

\begin{proposition}[Total Computational Complexity]
\label{prop:complexity}
Let $S=|\mathcal S|$, $A=|\mathcal A|$, $K=|\mathcal K|$, $N_{\max}=\max_sN_s\ge1$, and let $T_{\mathrm{swap}}\ge1$ cap accepted swaps per local call.
Let $T_{\mathrm{MM}}$ be the number of outer iterations and $T_{\mathrm{Bis}}$ an upper bound on all subproblem calls per outer iteration, including the zero-multiplier trial, doubling, and bisection.
The total complexity of Algorithm~\ref{alg:main} is
\begin{equation}
\label{eq:total_complexity}
    \mathcal{O}\!\Bigl( T_{\mathrm{MM}}\cdot T_{\mathrm{Bis}} \cdot S\cdot A\cdot \bigl(K^{2}\,N_{\max}\,T_{\mathrm{swap}}+N_{\max}^{2}\bigr) \Bigr).
\end{equation}
For an iteration with $d$ multiplier doublings and bracket width $W>0$ before bisection, the number of subproblem calls is $2+d+\max\{0,\lceil\log_2(W/\epsilon_\lambda)\rceil\}$.
The finite-state bound in Proposition~\ref{prop:greedy_opt} also limits the actual number of accepted swaps, regardless of the chosen cap.
\end{proposition}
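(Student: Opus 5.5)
The complexity bound will follow from counting the work in one call to Algorithm~\ref{alg:subproblem} and multiplying by the number of calls. First I bound the cost of one call. I then multiply by $T_{\mathrm{Bis}}$ calls per outer iteration and by $T_{\mathrm{MM}}$ outer iterations. Finally, I add the overhead of computing MM coefficients and comparing candidates in Algorithm~\ref{alg:main}, and check that it is dominated.

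\textbf{Step 1 (coefficients and per-pair data).} Computing $\boldsymbol\omega^{(t)}$ requires $\mathcal B_s(\mathbf n^{(t)})$ for every context, which costs $\mathcal O(SAK)$. I will precompute, for each pair $(s,a)$, the running sums $\sum_k n_kE_k$, $\sum_k n_kD_k$ and $\sum_k c_kn_k$ of the current mixture $\mathbf z$. With these sums, evaluating $\psi^{(t)}_{s,a}(\mathbf z+\mathbf e_v-\mathbf e_u;\lambda)$ for a proposed swap takes $\mathcal O(1)$ by the closed form~\eqref{eq:psi_local}.

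\textbf{Step 2 (local source search).} For each pair $(s,a)$ and each count $m\le M_{s,a}\le N_{\max}$:
\begin{itemize}
\item initialization with the cheapest source costs $\mathcal O(K)$;
\item each swap iteration scans all $\mathcal O(K^2)$ ordered pairs $(u,v)$ at $\mathcal O(1)$ each, then updates the sums in $\mathcal O(1)$;
\item at most $T_{\mathrm{swap}}$ iterations are accepted, plus one final scan that finds no improving swap and records the residual $\delta_{s,a}(m,\lambda)$.
\end{itemize}
So each $m$ costs $\mathcal O(K^2(T_{\mathrm{swap}}+1))=\mathcal O(K^2T_{\mathrm{swap}})$, using $T_{\mathrm{swap}}\ge1$. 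Summing over $m$ and over pairs gives $\mathcal O(SAK^2N_{\max}T_{\mathrm{swap}})$.

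\textbf{Step 3 (DP aggregation).} For context $s$ and action $a_j$, each state $w\in\{0,\dots,N_s\}$ takes a minimum over at most $N_s+1$ choices of $m$. This costs $\mathcal O(N_{\max}^2)$ per action, hence $\mathcal O(AN_{\max}^2)$ per context. Choosing $w^\star$ and backtracking cost $\mathcal O(N_{\max}+AK)$, which is dominated. Summed over contexts, the DP costs $\mathcal O(SAN_{\max}^2)$.

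\textbf{Step 4 (assembly).} One subproblem call therefore costs $\mathcal O(SA(K^2N_{\max}T_{\mathrm{swap}}+N_{\max}^2))$. The outer loop of Algorithm~\ref{alg:main} also does other work, and I need to show it is absorbed:
\begin{itemize}
\item computing each candidate's cost $C(\mathbf n')$ and surrogate value $\tilde J(\mathbf n';\mathbf n^{(t)})$ costs $\mathcal O(SAK)$;
\item this is at most $\mathcal O(SAK^2N_{\max}T_{\mathrm{swap}})$, since $N_{\max},T_{\mathrm{swap}}\ge1$;
\item the candidate set $\mathcal Q_t$ has at most $T_{\mathrm{Bis}}$ entries, so these evaluations are absorbed per call.
\end{itemize}
Multiplying by $T_{\mathrm{Bis}}$ gives Proposition~\ref{prop:complexity_main}, and multiplying further by $T_{\mathrm{MM}}$ gives~\eqref{eq:total_complexity}.

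For the call count, I follow the structure of Algorithm~\ref{alg:main}: one call at $\lambda=0$, one initial call at $\lambda_0$, and $d$ doubling calls. Bisection then halves a bracket of width $W$ until it is at most $\epsilon_\lambda$, which takes $\max\{0,\lceil\log_2(W/\epsilon_\lambda)\rceil\}$ calls. Finiteness of $d$ follows from the argument in \S\ref{sec:C3}, and the swap-count limit follows from Proposition~\ref{prop:greedy_opt}.

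The main obstacle is Step 2. The stated bound requires swap evaluations to cost $\mathcal O(1)$ rather than $\mathcal O(K)$, so I must verify that the incremental-sum bookkeeping fully avoids recomputing~\eqref{eq:psi_local} from scratch. I also need to confirm that the residual computation costs only one extra $\mathcal O(K^2)$ scan per $m$.
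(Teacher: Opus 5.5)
Your proof is correct and follows the paper's own argument step for step: you maintain the same three running sums to get $\mathcal O(1)$ swap evaluations, use at most $T_{\mathrm{swap}}+1$ scans of cost $\mathcal O(K^2)$ per count (the extra scan certifies the residual), bound the DP aggregation by $\mathcal O(AN_{\max}^2)$ per context, note that the $\mathcal O(SAK)$ candidate comparisons are dominated, and obtain the $2+d+\max\{0,\lceil\log_2(W/\epsilon_\lambda)\rceil\}$ call count. One small correction: when the cap is reached, the final residual scan may find a positive swap gain rather than none, but this does not change its $\mathcal O(K^2)$ cost.
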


\begin{proof}
For a fixed pair and total count, source search accepts at most $T_{\mathrm{swap}}$ swaps and needs at most one additional scan to certify the final residual.
Each scan evaluates $K(K-1)$ swaps in $\mathcal O(K^2)$ by maintaining the sums $\sum_k n_kE_k$, $\sum_k n_kD_k$, and $\sum_k c_kn_k$; each accepted swap updates these sums in constant time.
Initialization and storing the source vector cost $\mathcal O(K)$, so the full fixed-count cost is $\mathcal O(K^2T_{\mathrm{swap}})$ for $T_{\mathrm{swap}}\ge1$.
There are at most $N_{\max}+1$ counts per pair, giving $\mathcal O(AN_{\max}K^2T_{\mathrm{swap}})$ per context when $N_{\max}\ge1$.
The action-indexed DP has $A(N_s+1)$ entries, each scanning at most $N_s+1$ feasible counts, so aggregation costs $\mathcal O(AN_{\max}^2)$ per context.
Backtracking costs $\mathcal O(AK)$, and scanning the residual tables costs $\mathcal O(AN_{\max})$, both within the preceding bounds.
Candidate objective comparisons cost $\mathcal O(SAK)$ per trial and are also dominated by the subproblem cost.
Summing over contexts, all multiplier trials, and outer iterations proves~\eqref{eq:total_complexity}.
There are two initial trials, one at zero and one at $\lambda_0$, followed by $d$ doubling trials.
Each bisection trial halves the interval width, giving the stated remaining trial count.
\end{proof}

\section{Experimental Details and Additional Results}
\label{sec:appendix_assist}

Code and reproduction instructions are available at \begin{center}
\url{https://github.com/Lygist/Budgeted-Multi-Source-Counterfactual-Annotation-for-Off-Policy-Evaluation}.
\end{center}
The accompanying \texttt{README} provides detailed instructions on data preparation, configuration, and the execution pipeline.

\paragraph{Computational resources.}
Local preprocessing, allocation, and OPE calculations used only the central processing unit (CPU) of a MacBook with an Apple M3 Pro chip; the codebase is also compatible with the standard Google Colab CPU environment.
LLM annotations were obtained through hosted application programming interfaces (APIs) from Google, OpenAI, and Anthropic.
Table~\ref{tab:benchmarks} reports full-instance solver runtimes, with a one-hour limit for each Gurobi run.
Theoretical runtime bounds are given in Proposition~\ref{prop:complexity_main} and Appendix~\ref{app:complexity}.

\subsection{Synthetic Clinical Bandit}
\label{app:syn_exp}

\subsubsection{Environment and source profiles}

We use $100$ contexts and $5$ actions.
For each context, $\pi_b(\cdot\mid s)\sim\operatorname{Dirichlet}(0.2\mathbf1)$.
The target policy retains a behavior-selected action with probability $0.9$ and otherwise replaces it by a uniformly sampled alternative.
The context distribution is sampled from a symmetric Dirichlet distribution.

Pairwise reward means are independently drawn from
\[
0.2\,\operatorname{Normal}(-2,3^2)
+
0.8\,\operatorname{SkewNormal}(\xi=5,\omega=6,\alpha=-2),
\]
and reward variances from $\operatorname{Uniform}(20,50)$.
The fitting dataset contains $M=1{,}000$ records.
Source $k$ has mean $\mu_r(s,a)+\varepsilon_k$, variance $\sigma_r^2(s,a)+\Delta_k$, and cost $c_k$.
The allocation budget is $\kappa=10{,}000$.

\subsubsection{Source Subset Comparison Under Consistent Optimization}
\label{app:syn_optimized}
To isolate the effect of source availability from allocation strategy, we evaluate all nonempty subsets $\mathcal{K}_{\mathrm{sub}}\subseteq\{1,2,3,4\}$ with MM, including the single-source subsets evaluated under random allocation in Figure~\ref{fig:source_combos}.
Figure~\ref{fig:source_combos_optimized} shows the resulting scaled MSE values.
Under consistent optimization, the qualitative conclusions of \S\ref{sec:experiments} are unchanged.

\begin{figure}[!htbp]
\centering
\includegraphics[width=\linewidth]{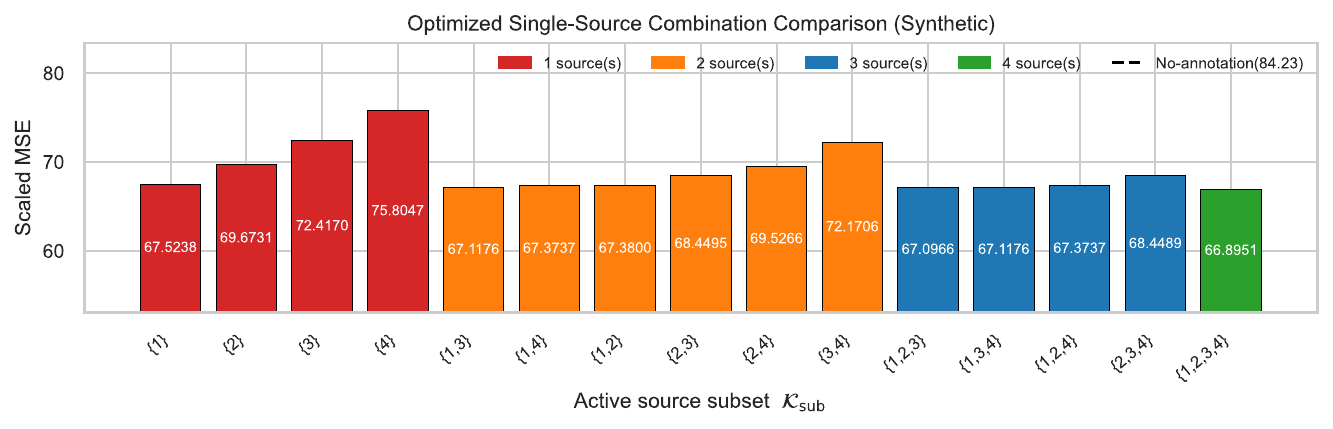}
\caption{Scaled mean squared error (MSE) $\mathcal V(\mathbf n)$ for each active source subset \emph{under the proposed majorization-minimization (MM) algorithm}.
Compared with Figure~\ref{fig:source_combos}, the single-source bars (red) are now obtained by optimization rather than uniformly random allocation; the multi-source bars are unchanged.}
\label{fig:source_combos_optimized}
\end{figure}

\subsubsection{Convergence Trace on the Synthetic Bandit}
\label{app:syn_convergence}

Figure~\ref{fig:syn_convergence} reports the convergence trace of the MM algorithm on the synthetic clinical bandit, replayed across the source subsets that achieve successively lower scaled MSE values: $\{1,3\}$, $\{1,2,3\}$, and finally $\{1,2,3,4\}$.
Scaled MSE drops sharply within the first MM iteration of each subset and stabilizes within roughly $5$--$6$ iterations, consistent with the monotone-descent property in Proposition~\ref{prop:mm_descent}.
The annotation-count tuples annotated on the trajectory show that the optimizer concentrates the budget on the lowest-bias source (Source~1) at first and progressively allocates a small share to higher-variance, lower-cost sources as additional sources become available.

\begin{figure}[!htbp]
\centering
\includegraphics[width=\linewidth]{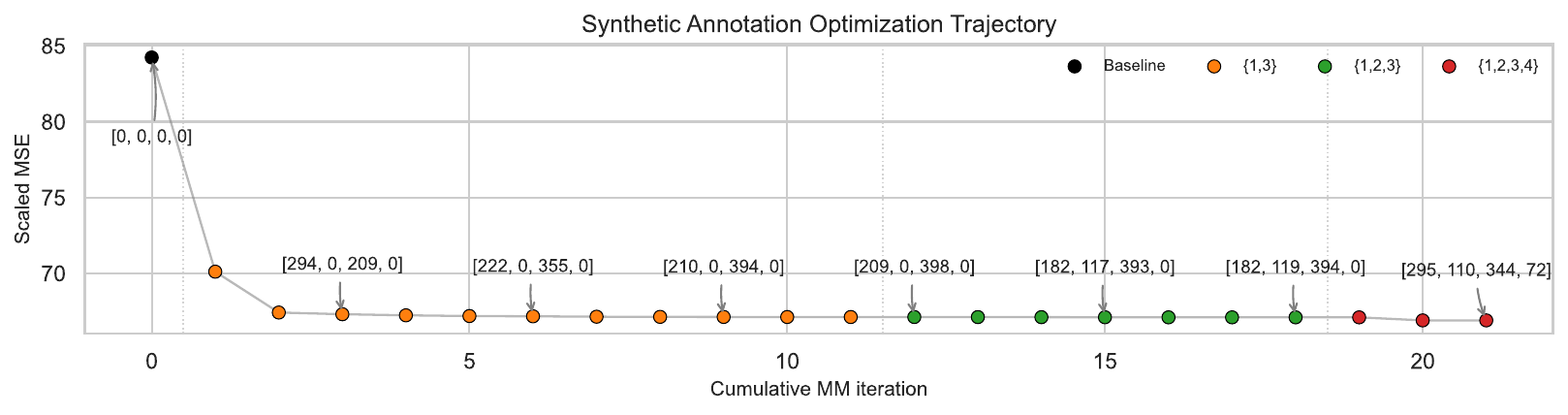}
\caption{Convergence trace of scaled mean squared error (MSE) $\mathcal V(\mathbf{n}^{(t)})$ across majorization-minimization (MM) iterations on the synthetic clinical bandit.
Different colors mark the active source subset; the tuple annotated at selected iterations records the allocation $[n_1,n_2,n_3,n_4]$ across Sources~1--4 in Table~\ref{tab:annotators}.}
\label{fig:syn_convergence}
\end{figure}

\subsection{ASSISTments Semi-Synthetic Environment}
\label{app:assist_exp}

\subsubsection{Dataset, Context Space, and Action Space}
\label{app:assist_data}
The ASSISTments 2012--13 School Data with Affect~\citep{feng2009assistments} contains $6{,}123{,}270$ student--problem interactions from the ASSISTments online middle-school mathematics tutoring platform.
Each row records: (i) student/problem identifiers, (ii) first-response latency \texttt{ms\_first\_response}, (iii) correctness of first attempt, hint count, and attempt count, (iv) problem metadata (\texttt{problem\_type}, \texttt{original} indicator), and (v) four affect-detector confidences (frustration, confusion, concentration, boredom).
We remove interactions with \texttt{ms\_first\_response} outside $[1{,}000,\; 600{,}000]$\,ms ($<1$\,s likely auto-fill, $>10$\,min likely idle), retaining $6{,}117{,}279$ rows ($99.9\%$), then sort by \texttt{(user\_id, start\_time)} so that past statistics use only strictly prior interactions.

\paragraph{Context Space ($|\mathcal{S}|=100$)}
Each interaction is embedded as a 6-dimensional vector
\[
    \phi_{\mathrm{context}} = \bigl[\mathrm{PastCorrect},\;\mathrm{PastHints},\;\mathrm{Frust},\;\mathrm{Conf},\;\mathrm{Conc},\;\mathrm{Bore}\bigr],
\]
where $\mathrm{PastCorrect}$ and $\mathrm{PastHints}$ are expanding-mean correct rate and hint count over the student's strictly prior problems (initialized to $0.5$ and $0$).
Features are standardized; K-means with $k{=}100$, $n_{\mathrm{init}}{=}10$ produces $100$ clusters.
Centroids in the original scale are passed to the LLM as the context descriptor.

\paragraph{Action Space ($|\mathcal{A}|=10$)}
We cross \texttt{problem\_type} (6 values) with \texttt{original} (2 values), then retain the $10$ combinations with at least $100$ observed interactions.
Table~\ref{tab:action_table} lists the resulting actions and their empirical statistics used both in the reward function and in the LLM prompt.

\begin{table}[!htbp]
\centering
\small
\caption{Action space of the ASSISTments environment.}
\label{tab:action_table}
\begin{tabular}{clcrrrrr}
\toprule
\textbf{ID} & \textbf{Problem Type} & \textbf{Original} & \textbf{Median RT} & \textbf{IQR (s)} & \textbf{Correct Rate} & \textbf{Count} \\
\midrule
0 & algebra        & No  & $11.7$\,s & $[5.1, 30.1]$ & $0.381$ & $128{,}221$ \\
1 & algebra        & Yes & $27.0$\,s & $[11.3, 63.3]$ & $0.678$ & $3{,}367{,}980$ \\
2 & choose\_1      & No  & $8.1$\,s  & $[4.3, 17.7]$ & $0.812$ & $127{,}222$ \\
3 & choose\_1      & Yes & $16.1$\,s & $[7.5, 39.0]$ & $0.705$ & $1{,}719{,}882$ \\
4 & choose\_n      & No  & $11.2$\,s & $[5.1, 19.4]$ & $0.600$ & $295$ \\
5 & choose\_n      & Yes & $13.3$\,s & $[7.6, 27.9]$ & $0.788$ & $11{,}300$ \\
6 & fill\_in\_1    & No  & $11.2$\,s & $[4.9, 25.8]$ & $0.335$ & $46{,}590$ \\
7 & fill\_in\_1    & Yes & $28.6$\,s & $[12.3, 66.1]$ & $0.659$ & $695{,}424$ \\
8 & open\_response & Yes & $76.9$\,s & $[27.2, 169.3]$ & $0.588$ & $17{,}629$ \\
9 & rank           & Yes & $51.1$\,s & $[25.4, 104.7]$ & $0.660$ & $2{,}726$ \\
\bottomrule
\end{tabular}
\end{table}

\subsubsection{Reward Function}
\label{sec:appendix_reward}
Following the feature-prediction spirit of APRIL~\citep{mandyam2025april}, we decompose the reward into a data-driven base and an LLM-predictable multiplier.
With $R_{\max}=30$:
\begin{equation}
    R(s, a, t) \;=\; R_{\max}\cdot \underbrace{P_{\mathrm{IRT}}(s,a)}_{\text{data-driven}} \cdot \underbrace{\mathrm{TimeEff}(t, \tilde{t}_a)}_{\text{LLM-predictable}}.
    \label{eq:appendix_reward}
\end{equation}

\paragraph{IRT component.}
For context $s$ with centroid correct rate $p_s$ and action $a$ with overall correct rate $q_a$, set $\theta_s = \mathrm{logit}(p_s)$, $b_a = \mathrm{logit}(1-q_a)$, and $P_{\mathrm{IRT}}(s,a) = \sigma(\theta_s - b_a)$, with $p_s, q_a$ clipped to $[0.02, 0.98]$.
This is a Rasch (1-PL) model depending only on observed data.
Across the $100 \times 10$ grid, $P_{\mathrm{IRT}}$ has range $[0.03, 0.98]$, mean $0.68$, std $0.22$—providing the \emph{primary} reward variation.

\paragraph{Time-efficiency component.}
Let $\tilde{t}_a$ be action $a$'s median response time in seconds.
Define
\[
    \mathrm{TimeEff}(t, \tilde{t}_a) \;=\; \max\!\left\{0.3,\; 1 - \alpha\cdot\bigl|\log_{10}(t/\tilde{t}_a)\bigr|\right\},\quad \alpha=0.4.
\]
This symmetric clamped-linear function is $1.0$ at $t=\tilde{t}_a$, decays linearly in log-space, and floors at $0.3$ for extreme mispredictions.
This constructed discount encodes a preference for response times near the action-specific median, penalizing both unusually short and unusually long responses.

\paragraph{Ground-truth reward.}
For each observed $(s,a)$ pair, the reference $\mu_r(s,a)$ is computed by plugging the empirical median $\tilde{t}_{s,a}$ into Eq.~\eqref{eq:appendix_reward}.
For the $54/1000$ pairs with fewer than $20$ observations, we fit a random forest on context centroids $+$ action one-hot features to predict $\log_{10}\tilde{t}_{s,a}$ ($5$-fold CV $R^2 = 0.89$) and fill the gap.
The final $\mu_r$ grid has range $[0.43, 28.7]$, mean $20.08$, std $10.77$.

\begin{figure}[ht]
\centering
\includegraphics[width=\linewidth]{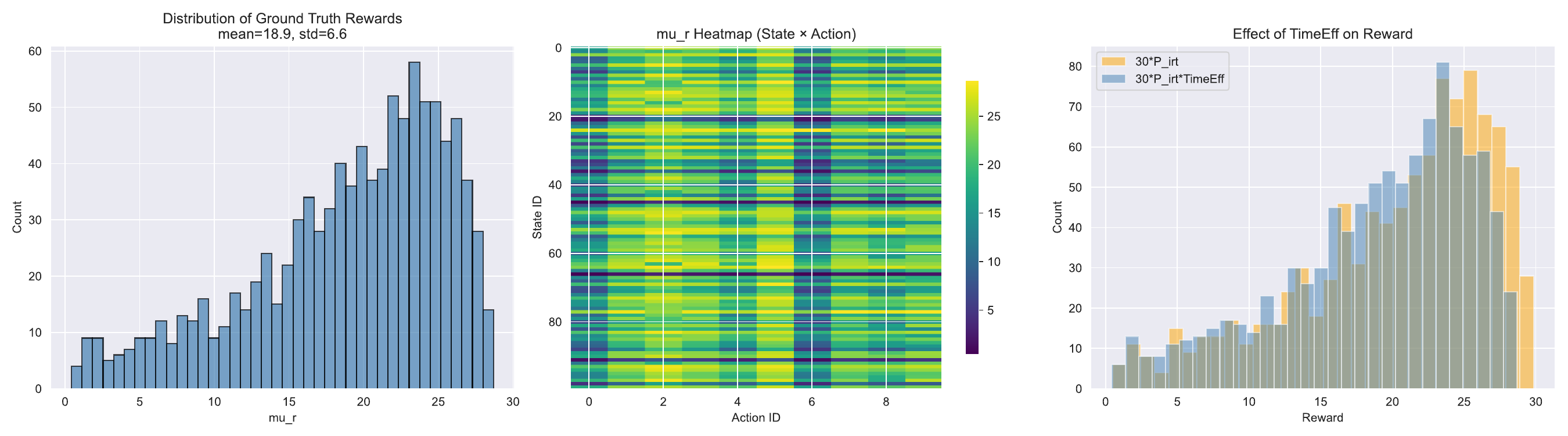}
\caption{Ground-Truth reward \textbf{Left:} distribution of Ground-Truth Rewards \textbf{Middle:} heatmap of factual context-action pair data \textbf{Right:} influence of Time-efficiency discount on the total reward}
\label{fig:assit_reward}
\end{figure}

\subsubsection{LLM Prompt Design}
\label{sec:appendix_prompt}
Each prompt is a \emph{batch} of $10$ $(s,a)$ pairs sharing a common task description, action-reference table, and domain-knowledge preamble, followed by pair-specific student profiles and $8$ bootstrap-sampled historical response times per pair.
All per-pair information (ability-level label, centroid features, action statistics, historical samples) is precomputed once from the dataframe before any API call, keeping per-batch overhead below $1$\,ms and the mean prompt length at $\sim\!3{,}500$ characters.
Figure~\ref{fig:assist_prompt} shows the template; curly-braced fields are populated from precomputed lookups.

\begin{figure}[p]
\centering
\begin{tcolorbox}[
    enhanced,
    colback=violet!4!white,
    colframe=violet!45!white,
    boxrule=0.7pt,
    arc=2.5mm,
    left=11pt, right=11pt, top=7pt, bottom=7pt,
    width=\linewidth,
    fontupper=\footnotesize,
    title={\textsc{LLM Prompt: ASSISTments Response-Time Prediction}},
    coltitle=white,
    colbacktitle=violet!55!black,
    attach boxed title to top left={yshift=-2mm, xshift=8mm},
    boxed title style={boxrule=0pt, arc=1.5mm},
]
You are an expert in educational data mining and intelligent tutoring systems.
Your task: predict the \textbf{response time in seconds} for students solving math problems on the ASSISTments platform.
``Response time'' = seconds from when a problem appears until the student's first action (submitting an answer \textsc{or} requesting a hint).

\vspace{5pt}
\textbf{\textcolor{violet!65!black}{[[ \#\# Action Reference (Problem Types) \#\# ]]}}

{\scriptsize
\begin{tabular}{@{}clcrrc@{}}
\toprule
ID & Problem Type & Orig. & Median RT & IQR (s) & CR \\
\midrule
0 & algebra         & No  & $11.7$s & $[5.1, 30.1]$  & $0.381$ \\
1 & algebra         & Yes & $27.0$s & $[11.3, 63.3]$ & $0.678$ \\
$\vdots$ & $\vdots$ & $\vdots$ & $\vdots$ & $\vdots$ & $\vdots$ \\
9 & rank            & Yes & $51.1$s & $[25.4, 104.7]$ & $0.660$ \\
\bottomrule
\end{tabular}
}

\vspace{5pt}
\textbf{\textcolor{violet!65!black}{[[ \#\# Key Factors Affecting Response Time \#\# ]]}}

\textbf{1.\ Student ability} (past correct rate): higher performers respond faster; strugglers take longer or give up quickly.

\textbf{2.\ Hint-seeking habit}: students who frequently request hints pause longer before their first action.

\textbf{3.\ Emotional context}: frustration/confusion $\to$ longer; boredom $\to$ rush or disengage; concentration $\to$ moderate, productive.

\textbf{4.\ Problem type}: multiple choice fastest; open response and rank slowest; algebra and fill-in in between.

\textbf{5.\ Original vs.\ scaffolding}: scaffolding problems typically $30$--$50\%$ faster than the original.

\textbf{6.\ Interaction effects}: struggling $\times$ hard $\to$ much longer than able $\times$ easy.

\vspace{5pt}
\textbf{\textcolor{violet!65!black}{[[ \#\# Historical Response Time Samples \#\# ]]}}

For each pair below, we provide actual response times from similar students on similar problems.
Use them to calibrate your prediction.

\vspace{5pt}
\textbf{\textcolor{violet!65!black}{[[ \#\# Student--Problem Pairs \#\# ]]}}

\vspace{2pt}
{\scriptsize\ttfamily
--- Pair 1 (context\_id: \{$s$\}, action\_id: \{$a$\}) --- \\
Student: \{ability\_level\}, correct\_rate=\{$p_s$\}, avg\_hints=\{$h_s$\}, \\
\phantom{Student: }frustration=\{$f_s$\}, confusion=\{$c_s$\}, concentration=\{$k_s$\}, boredom=\{$b_s$\} \\
Problem: \{problem\_type\}, original=\{orig\}, typical\_median=\{$\tilde{t}_a$\}s \\
Historical response times from similar cases: [\{$t_1$\}s, \{$t_2$\}s, ..., \{$t_8$\}s]
}

\vspace{2pt}
\ldots\,(9 more pairs in the same format)\,\ldots

\vspace{5pt}
\textbf{\textcolor{violet!65!black}{[[ \#\# Prediction Task \#\# ]]}}

For each pair, predict a single number: the response time in seconds.
Consider all factors and the historical samples carefully.
Output a JSON object conforming to the schema, with one entry per pair containing \texttt{context\_id}, \texttt{action\_id}, and \texttt{predicted\_response\_time\_seconds}.

\end{tcolorbox}
\caption{Prompt template used for all three large language models (LLMs) on the ASSISTments environment.
Bracketed headers \texttt{[[ \#\# \ldots \#\# ]]} follow the section-delimiter style of~\citet{mandyam2025april}; \{\texttt{curly\_braces}\} denote per-pair fields populated from precomputed pandas lookups.
Each batch contains $10$ pairs; $20$ independent queries at temperature $0.8$ are issued per $(s,a)$.}
\label{fig:assist_prompt}
\end{figure}
\clearpage

\subsubsection{LLM Annotation Allocation by MM}
We evaluate four LLM annotation sources: Gemini-2.5-Flash, GPT-4o-mini, GPT-o3-mini, and Claude-Haiku-4.5. 
For each of the $100\times 10$ context-action pairs, each model produces $20$ first-response-time predictions at temperature $0.8$, yielding $20{,}000$ predictions per model. 
We estimate each source's bias as $\hat\varepsilon_k(s,a)=\bar r_k(s,a)-\mu_r(s,a)$, where $\bar r_k(s,a)$ is the mean of the $20$ predicted rewards and $\mu_r(s,a)$ is the reference reward from the simulator. 
We estimate prediction variability by $\hat\sigma_k(s,a)$, the standard deviation of the $20$ predicted rewards.
Table~\ref{tab:llm_quality} summarizes these estimates and the API costs.
GPT-4o-mini has the lowest mean absolute bias and cost, while GPT-o3-mini costs about $78\times$ more without improving the reported quality summaries, consistent with the feature-prediction findings of APRIL~\citep{mandyam2025april}.

\begin{table}[!htbp]
\centering
\caption{LLM annotation quality on ASSISTments. Bias and variability are computed after converting predicted response times to rewards. The cost column reports the total API cost for the $20{,}000$ predictions used to estimate each model's empirical bias and variability; the allocation experiment uses the corresponding per-prediction cost.}
\label{tab:llm_quality}
\vspace{2pt}
\begin{tabular}{lcccc}
\toprule
\textbf{Source} &
$\operatorname{mean}|\hat\varepsilon_k|$ &
$\operatorname{median}|\hat\varepsilon_k|$ &
$\operatorname{mean}\hat\sigma_k$ &
\begin{tabular}[c]{@{}c@{}}Total API cost\\\end{tabular} \\
\midrule
Gemini-2.5-Flash & $1.09$ & $0.77$ & $0.53$ & \$$1.14$ \\
GPT-4o-mini      & $\mathbf{0.83}$ & $0.57$ & $0.44$ & $\mathbf{\$0.48}$ \\
GPT-o3-mini      & $0.96$ & $0.62$ & $0.55$ & \$$37.26$ \\
Claude-Haiku-4.5 & $0.86$ & $\mathbf{0.56}$ & $\mathbf{0.28}$ & \$$8.11$ \\
\bottomrule
\end{tabular}
\end{table}

We generate $6{,}000$ logged samples from the ASSISTments simulator, with pair-specific reward variances
$\sigma_r^2(s,a)\sim\mathrm{Uniform}(30,60)$. 
Based on the source diagnostics in Table~\ref{tab:llm_quality}, we run the MM-based allocation algorithm with the three lower-cost sources, excluding GPT-o3-mini, under budget $\kappa=\$0.50$.
The algorithm decides to purchase $6{,}000$ annotations and reduces the variance objective from $J(\mathbf 0)=88.61$ to $79.06$, a $10.77\%$ reduction. 

Mirroring the synthetic experiment, we evaluate every non-empty LLM source subset (excluding the high-cost GPT-o3-mini, which Table~\ref{tab:llm_quality} shows offers no quality advantage) under two protocols for the single source baseline: a uniformly random allocation and the proposed MM algorithm.

Figure~\ref{fig:llm_source_combos} reports the random-allocation baseline. Single-source subsets yield variance objectives in the range $80.79$--$83.72$, while every multi-source subset drops below~$80$.

\begin{figure}[!htbp]
\centering
\includegraphics[width=\linewidth]{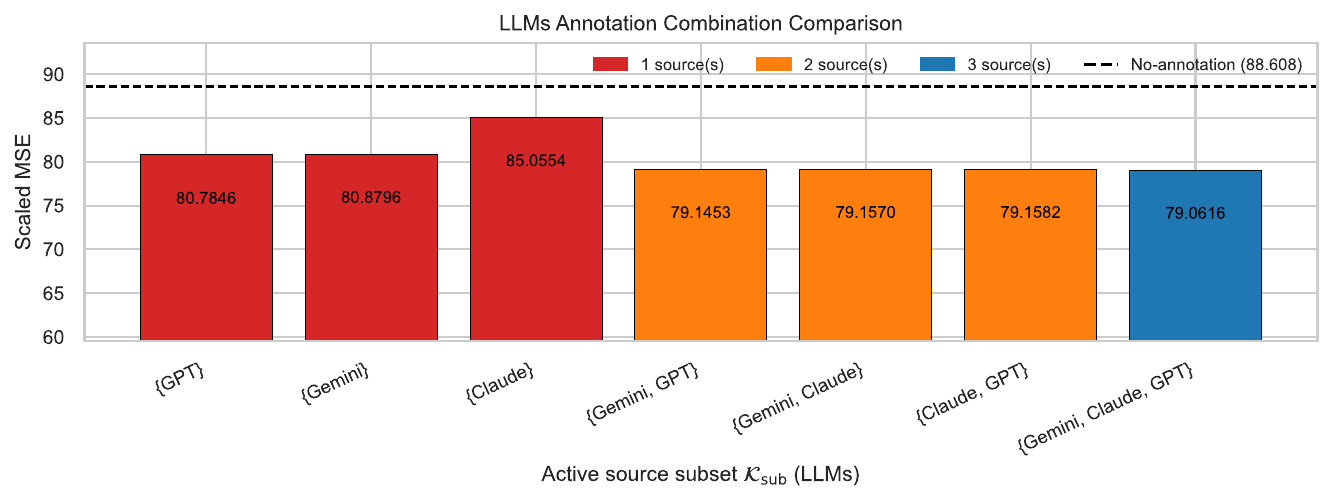}
\caption{Variance objective $J(\mathbf n)$ for each active LLM source subset under uniformly random allocation. Bars are grouped by the number of available sources; the dashed line is the no-annotation baseline $J(\mathbf 0)=88.61$.}
\label{fig:llm_source_combos}
\end{figure}

Figure~\ref{fig:llm_source_combos_optimized} repeats the comparison under consistent MM optimization. The single-source bars improve to $79.32$--$80.82$, narrowing the gap to multi-source results. The three-source configuration $\{\text{Gemini}, \text{Claude}, \text{GPT-4o-mini}\}$ remains the global optimum at $J=79.06$, a $10.77\%$ reduction over $J(\mathbf 0)=88.61$.

\begin{figure}[!htbp]
\centering
\includegraphics[width=\linewidth]{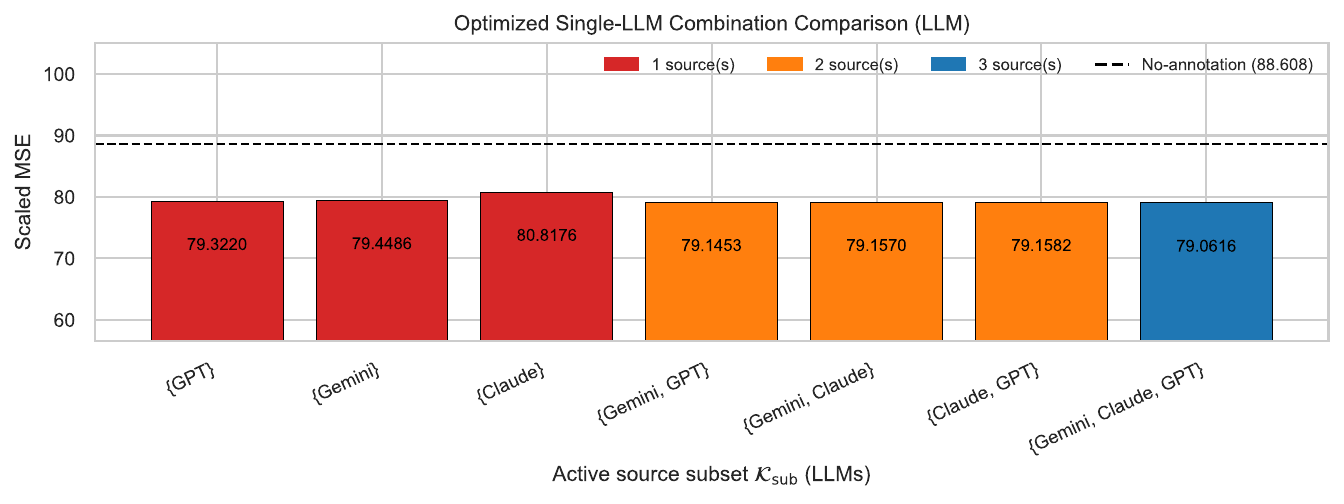}
\caption{Variance objective $J(\mathbf n)$ for each active LLM source subset under the proposed MM algorithm}
\label{fig:llm_source_combos_optimized}
\end{figure}

To illustrate the dynamics of the MM algorithm under realistic LLM annotation profiles, Figure~\ref{fig:llm_convergence} shows the convergence trace on ASSISTments. The objective decreases monotonically from $88.61$ to $79.06$.

\begin{figure}[!htbp]
\centering
\includegraphics[width=\linewidth]{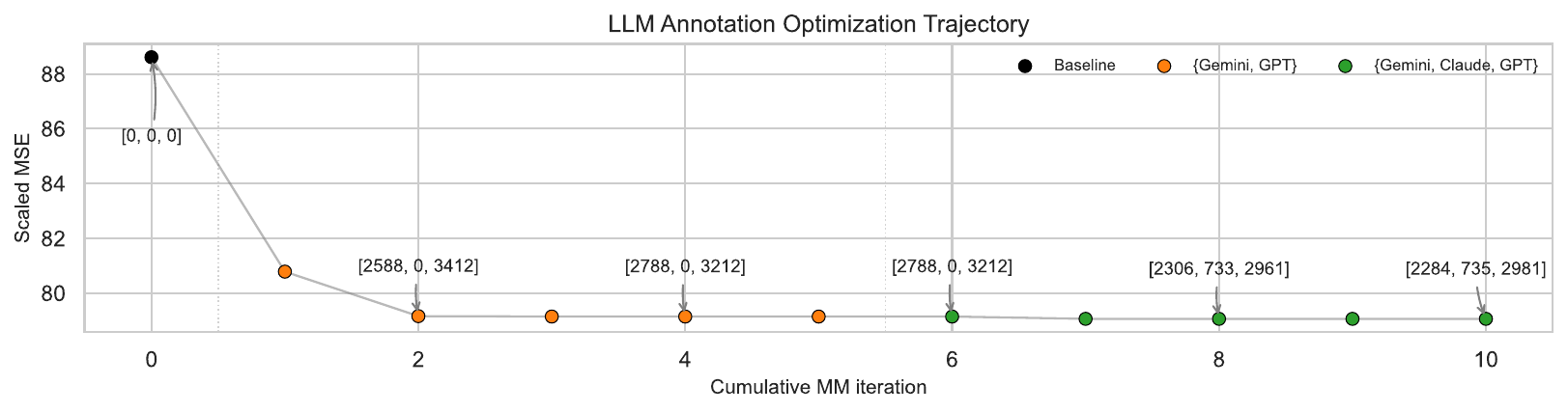}
\caption{Convergence trace of $J(\mathbf{n}^{(t)})$ across MM iterations on ASSISTments. The tuple at each highlighted iteration denotes the optimal allocation across [\texttt{Gemini-2.5-Flash}, \texttt{Claude-Haiku-4.5}, \texttt{GPT-4o-mini}]. The algorithm heavily favors \texttt{GPT-4o-mini} due to its low empirical bias and high cost-efficiency, while utilizing \texttt{Claude-Haiku-4.5} more conservatively because of its higher API cost.}
\label{fig:llm_convergence}
\end{figure}

\subsection{Exact-Solver Benchmarks and Runtime}
\label{app:solver_benchmarks}

To complement the certified small-instance optima in Table~\ref{tab:opt gap}, we warm-started Gurobi on the full Synthetic and ASSISTments instances with the best heuristic solution and imposed a one-hour time limit.
The runs did not certify global optimality within that limit, so we compare the methods against the valid lower bounds available at termination.
Let \(L\) be the scaled-MSE lower bound available at termination and \(z_A=\mathcal V(\mathbf n_A)\) the scaled MSE returned by method \(A\).
The table reports \(100(z_A-L)/z_A\).
The heuristic runtimes exclude the Gurobi computation used to obtain \(L\).

\begin{table}[!htbp]
\begin{center}
\caption{Solver benchmarks and runtimes for the Synthetic and ASSISTments environments.}
\label{tab:benchmarks}
\begin{tabular}{l|lrr}
\hline
Dataset & Method & Gap to Gurobi lower bound & Runtime (s)\\
\hline
Synthetic & Greedy (max-gain) & 0.6863\% & 0.0314\\
Synthetic & Greedy (cost-aware) & 3.9843\% & 0.0562\\
Synthetic & MM & 1.0474\% & 8.2390\\
Synthetic & MM (greedy init.) & 0.2807\% & 5.2657\\
Synthetic & Gurobi incumbent & 0.1258\% & 3600.6957\\
\hline
ASSISTments & Greedy (max-gain) & 0.8216\% & 1.0222\\
ASSISTments & Greedy (cost-aware) & 0.1769\% & 3.5662\\
ASSISTments & MM & 0.0050\% & 857.6975\\
ASSISTments & MM (greedy init.) & 0.0050\% & 521.8667\\
ASSISTments & Gurobi incumbent & 0.0047\% & 3604.0060\\
\hline
\end{tabular}
\end{center}
\end{table}

Table~\ref{tab:opt gap} shows greedy-initialized MM has a \(0.05\%\) mean gap and a \(0.26\%\) maximum gap to the certified optima on the small instances.
On the full instances, Table~\ref{tab:benchmarks} shows that it reaches within \(0.2807\%\) and \(0.0050\%\) of the Gurobi lower bounds in \(5.3\) and \(522\) seconds, respectively.
These comparisons show small empirical gaps to the available exact and lower-bound benchmarks.

\subsection{End-to-End Evaluation with Finite Pilots}
\label{app:end_to_end}

The solver benchmarks assess optimization quality for fixed profiles; we next evaluate final OPE performance when allocations are selected using finite-pilot estimates.
For each of $30$ pilot replications, factual moments are estimated from $20$ pilot rewards per pair and annotation moments from $20$ pilot draws per source--pair.
The pilot is separate from subsequent fitting rewards, acquired annotation realizations, and OPE outcomes.
Fitting-data context--action counts determine the capacities.

Each allocator receives the same pilot estimates and constraints.
For the optimized single-source baseline, we optimize each source separately and use the pilot objective to select the predicted best source.
Thus, this baseline does not select its source using final OPE outcomes.

After fixing an allocation, we generated \(50\) independent reward and annotation realizations from the distributions used in each environment and evaluated DM\(^{+}\)-IS on independent OPE datasets (\(N=1{,}000\) for Synthetic and \(N=6{,}000\) for ASSISTments).
In ASSISTments, the annotation realizations are drawn from empirical source models fitted to the observed LLM predictions.
Within each pilot replication, empirical MSE is the average squared deviation of the \(50\) final value estimates from the environment's true policy value.
Table~\ref{tab:end-to-end} reports the mean across pilot replications of the relative MSE reduction versus the corresponding no-annotation baseline.

\begin{table}[!htbp]
\begin{center}
\caption{Mean relative reduction in empirical mean squared error (MSE) over $30$ pilot replications.
For each pilot, MSE is estimated from $50$ independent final off-policy value estimates.}
\label{tab:end-to-end}
\begin{tabular}{lcc}
\toprule
Method & Synthetic & ASSISTments\\
\midrule
Best optimized single source & 17.86\% & 8.30\%\\
Greedy (max-gain)             & 18.43\% & 7.86\%\\
Greedy (cost-aware)           & 14.39\% & 8.46\%\\
MM                            & 18.67\% & 8.46\%\\
MM (greedy init.)             & \textbf{19.13\%} & \textbf{8.47\%}\\
\bottomrule
\end{tabular}
\end{center}
\end{table}

Table~\ref{tab:end-to-end} shows greedy-initialized MM reduces empirical MSE by \(19.13\%\) on Synthetic and \(8.47\%\) on ASSISTments relative to no annotation.
Thus, the allocation gains persist in these experiments after finite-sample profile estimation and evaluation of the final OPE estimator.

\subsection{Within-pair bootstrap}
\label{app:bootsrap}

As a complementary check, we assess sensitivity to resampling the LLM outputs used to estimate source profiles.
For each of $30$ bootstrap replications, we resample the $20$ observed predictions within each source--pair, re-estimate its moments, and rerun MM.
We evaluate the resulting allocation using reference profiles estimated from all $20$ original predictions per source--pair.

The average reference-profile MSE reduction is $10.765\%$, compared with $10.774\%$ using the original profiles.
The standard deviation is $0.0038$ percentage points, and the mean difference is $0.009$ percentage points.
This measures conditional sensitivity to resampling the observed outputs.
Because evaluation uses the same empirical source reference, it is not held-out validation of the population LLM distribution and does not measure uncertainty in the simulator's factual reward reference.

\section{Broader Impacts and Asset Use}

\subsection{Broader Impacts}
\label{app:broader_impacts}

Budgeted counterfactual annotation can improve the efficiency of policy evaluation before deployment.
In healthcare and education, unreliable annotations or violations of the evaluation assumptions may nevertheless yield inaccurate policy comparisons with consequences for treatment or instruction.
Application-specific validation, including assessment of subgroup outcomes, is therefore important before deployment.

\subsection{Existing Data and Model Services}
\label{app:asset_terms}

The ASSISTments 2012--13 School Data with Affect~\citep{feng2009assistments,wang2015affect} are provided by Neil T. Heffernan and the ASSISTments team at Worcester Polytechnic Institute through the \href{https://sites.google.com/site/assistmentsdata/datasets/2012-13-school-data-with-affect}{official dataset release}.
The \href{https://sites.google.com/site/assistmentsdata/}{ASSISTments data-use conditions} require users to report and delete any inadvertently discovered personally identifiable information and prohibit using that information.
Our experiments use existing records, with no participant recruitment, crowdsourcing, or interventions.

The annotation sources are Google's Gemini-2.5-Flash, OpenAI's GPT-4o-mini, and Anthropic's Claude-Haiku-4.5.
Hosted access is subject to the \href{https://developers.google.com/terms}{Google APIs Terms of Service} and \href{https://ai.google.dev/gemini-api/terms}{Gemini API Additional Terms of Service}, the \href{https://openai.com/policies/services-agreement/}{OpenAI Services Agreement}, and the \href{https://www.anthropic.com/legal/commercial-terms}{Anthropic Commercial Terms of Service}, respectively.

\newpage
\input{checklist.tex}

\end{document}

%% file: checklist.tex
\section*{NeurIPS Paper Checklist}

\begin{enumerate}

\item {\bf Claims}
    \item[] Question: Do the main claims made in the abstract and introduction accurately reflect the paper's contributions and scope?
    \item[] Answer: \answerYes{} 
    \item[] Justification: The claims in the abstract and introduction match the theoretical results proved in Appendices~\ref{sec:appendix_derivations}--\ref{sec:appendix_mm} and the experimental results in Section~\ref{sec:experiments}.
    \item[] Guidelines:
    \begin{itemize}
        \item The answer \answerNA{} means that the abstract and introduction do not include the claims made in the paper.
        \item The abstract and/or introduction should clearly state the claims made, including the contributions made in the paper and important assumptions and limitations.
        A \answerNo{} or \answerNA{} answer to this question will not be perceived well by the reviewers.
        \item The claims made should match theoretical and experimental results, and reflect how much the results can be expected to generalize to other settings.
        \item It is fine to include aspirational goals as motivation as long as it is clear that these goals are not attained by the paper.
    \end{itemize}

\item {\bf Limitations}
    \item[] Question: Does the paper discuss the limitations of the work performed by the authors?
    \item[] Answer: \answerYes{} 
    \item[] Justification: The Conclusion discusses the contextual-bandit scope, reliance on known or estimated source profiles, and semi-synthetic ASSISTments environment.
    Sensitivity to estimated profiles is analyzed in Appendix~\ref{app:profile_sensitivity} and evaluated in Section~\ref{sec:exp_robustness}.
    \item[] Guidelines:
    \begin{itemize}
        \item The answer \answerNA{} means that the paper has no limitation while the answer \answerNo{} means that the paper has limitations, but those are not discussed in the paper.
        \item The authors are encouraged to create a separate ``Limitations'' section in their paper.
        \item The paper should point out any strong assumptions and how robust the results are to violations of these assumptions (e.g., independence assumptions, noiseless settings, model well-specification, asymptotic approximations only holding locally).
        The authors should reflect on how these assumptions might be violated in practice and what the implications would be.
        \item The authors should reflect on the scope of the claims made, e.g., if the approach was only tested on a few datasets or with a few runs.
        In general, empirical results often depend on implicit assumptions, which should be articulated.
        \item The authors should reflect on the factors that influence the performance of the approach.
        For example, a facial recognition algorithm may perform poorly when image resolution is low or images are taken in low lighting.
        Or a speech-to-text system might not be used reliably to provide closed captions for online lectures because it fails to handle technical jargon.
        \item The authors should discuss the computational efficiency of the proposed algorithms and how they scale with dataset size.
        \item If applicable, the authors should discuss possible limitations of their approach to address problems of privacy and fairness.
        \item While the authors might fear that complete honesty about limitations might be used by reviewers as grounds for rejection, a worse outcome might be that reviewers discover limitations that aren't acknowledged in the paper.
        The authors should use their best judgment and recognize that individual actions in favor of transparency play an important role in developing norms that preserve the integrity of the community.
        Reviewers will be specifically instructed to not penalize honesty concerning limitations.
    \end{itemize}

\item {\bf Theory assumptions and proofs}
    \item[] Question: For each theoretical result, does the paper provide the full set of assumptions and a complete (and correct) proof?
    \item[] Answer: \answerYes{} 
    \item[] Justification: Assumptions accompany the theoretical statements, and their derivations and proofs are provided in Appendices~\ref{sec:appendix_derivations}--\ref{sec:appendix_mm}.
    \item[] Guidelines:
    \begin{itemize}
        \item The answer \answerNA{} means that the paper does not include theoretical results.
        \item All the theorems, formulas, and proofs in the paper should be numbered and cross-referenced.
        \item All assumptions should be clearly stated or referenced in the statement of any theorems.
        \item The proofs can either appear in the main paper or the supplemental material, but if they appear in the supplemental material, the authors are encouraged to provide a short proof sketch to provide intuition.
        \item Inversely, any informal proof provided in the core of the paper should be complemented by formal proofs provided in appendix or supplemental material.
        \item Theorems and Lemmas that the proof relies upon should be properly referenced.
    \end{itemize}

    \item {\bf Experimental result reproducibility}
    \item[] Question: Does the paper fully disclose all the information needed to reproduce the main experimental results of the paper to the extent that it affects the main claims and/or conclusions of the paper (regardless of whether the code and data are provided or not)?
    \item[] Answer: \answerYes{} 
    \item[] Justification: Section~\ref{sec:experiments} and Appendix~\ref{sec:appendix_assist} describe the environments, reward functions, large language model (LLM) prompts, and evaluation protocols; code and reproduction instructions are available at the GitHub repository link in Appendix~\ref{sec:appendix_assist}.
    \item[] Guidelines:
    \begin{itemize}
        \item The answer \answerNA{} means that the paper does not include experiments.
        \item If the paper includes experiments, a \answerNo{} answer to this question will not be perceived well by the reviewers: Making the paper reproducible is important, regardless of whether the code and data are provided or not.
        \item If the contribution is a dataset and\slash or model, the authors should describe the steps taken to make their results reproducible or verifiable.
        \item Depending on the contribution, reproducibility can be accomplished in various ways.
        For example, if the contribution is a novel architecture, describing the architecture fully might suffice, or if the contribution is a specific model and empirical evaluation, it may be necessary to either make it possible for others to replicate the model with the same dataset, or provide access to the model.
        In general, releasing code and data is often one good way to accomplish this, but reproducibility can also be provided via detailed instructions for how to replicate the results, access to a hosted model (e.g., in the case of a large language model), releasing of a model checkpoint, or other means that are appropriate to the research performed.
        \item While NeurIPS does not require releasing code, the conference does require all submissions to provide some reasonable avenue for reproducibility, which may depend on the nature of the contribution.
        For example
        \begin{enumerate}
            \item If the contribution is primarily a new algorithm, the paper should make it clear how to reproduce that algorithm.
            \item If the contribution is primarily a new model architecture, the paper should describe the architecture clearly and fully.
            \item If the contribution is a new model (e.g., a large language model), then there should either be a way to access this model for reproducing the results or a way to reproduce the model (e.g., with an open-source dataset or instructions for how to construct the dataset).
            \item We recognize that reproducibility may be tricky in some cases, in which case authors are welcome to describe the particular way they provide for reproducibility.
            In the case of closed-source models, it may be that access to the model is limited in some way (e.g., to registered users), but it should be possible for other researchers to have some path to reproducing or verifying the results.
        \end{enumerate}
    \end{itemize}

\item {\bf Open access to data and code}
    \item[] Question: Does the paper provide open access to the data and code, with sufficient instructions to faithfully reproduce the main experimental results, as described in supplemental material?
    \item[] Answer: \answerYes{} 
    \item[] Justification: We provide a GitHub repository containing all code, public data sources, data preprocessing scripts, and a detailed README to reproduce the experiments. See details in Appendix~\ref{sec:appendix_assist}.
    \item[] Guidelines:
    \begin{itemize}
        \item The answer \answerNA{} means that paper does not include experiments requiring code.
        \item Please see the NeurIPS code and data submission guidelines (\url{https://neurips.cc/public/guides/CodeSubmissionPolicy}) for more details.
        \item While we encourage the release of code and data, we understand that this might not be possible, so \answerNo{} is an acceptable answer.
        Papers cannot be rejected simply for not including code, unless this is central to the contribution (e.g., for a new open-source benchmark).
        \item The instructions should contain the exact command and environment needed to run to reproduce the results.
        See the NeurIPS code and data submission guidelines (\url{https://neurips.cc/public/guides/CodeSubmissionPolicy}) for more details.
        \item The authors should provide instructions on data access and preparation, including how to access the raw data, preprocessed data, intermediate data, and generated data, etc.
        \item The authors should provide scripts to reproduce all experimental results for the new proposed method and baselines.
        If only a subset of experiments are reproducible, they should state which ones are omitted from the script and why.
        \item At submission time, to preserve anonymity, the authors should release anonymized versions (if applicable).
        \item Providing as much information as possible in supplemental material (appended to the paper) is recommended, but including URLs to data and code is permitted.
    \end{itemize}

\item {\bf Experimental setting/details}
    \item[] Question: Does the paper specify all the training and test details (e.g., data splits, hyperparameters, how they were chosen, type of optimizer) necessary to understand the results?
    \item[] Answer: \answerYes{} 
    \item[] Justification: Section~\ref{sec:experiments} and Appendix~\ref{sec:appendix_assist} describe the synthetic and semi-synthetic environments, preprocessing, source-profile estimation, and separation of pilot, fitting, and evaluation samples.
    \item[] Guidelines:
    \begin{itemize}
        \item The answer \answerNA{} means that the paper does not include experiments.
        \item The experimental setting should be presented in the core of the paper to a level of detail that is necessary to appreciate the results and make sense of them.
        \item The full details can be provided either with the code, in appendix, or as supplemental material.
    \end{itemize}

\item {\bf Experiment statistical significance}
    \item[] Question: Does the paper report error bars suitably and correctly defined or other appropriate information about the statistical significance of the experiments?
    \item[] Answer: \answerYes{} 
    \item[] Justification: We report the mean and standard deviation across 100 independent environment initializations(e.g., shaded bands in Figure~\ref{fig:four_regimes}). We also provide bootstraps experiment to test the robustness of the algorithm in Section~\ref{sec:exp_robustness}.
    \item[] Guidelines:
    \begin{itemize}
        \item The answer \answerNA{} means that the paper does not include experiments.
        \item The authors should answer \answerYes{} if the results are accompanied by error bars, confidence intervals, or statistical significance tests, at least for the experiments that support the main claims of the paper.
        \item The factors of variability that the error bars are capturing should be clearly stated (for example, train/test split, initialization, random drawing of some parameter, or overall run with given experimental conditions).
        \item The method for calculating the error bars should be explained (closed form formula, call to a library function, bootstrap, etc.)
        \item The assumptions made should be given (e.g., Normally distributed errors).
        \item It should be clear whether the error bar is the standard deviation or the standard error of the mean.
        \item It is OK to report 1-sigma error bars, but one should state it.
        The authors should preferably report a 2-sigma error bar than state that they have a 96\% CI, if the hypothesis of Normality of errors is not verified.
        \item For asymmetric distributions, the authors should be careful not to show in tables or figures symmetric error bars that would yield results that are out of range (e.g., negative error rates).
        \item If error bars are reported in tables or plots, the authors should explain in the text how they were calculated and reference the corresponding figures or tables in the text.
    \end{itemize}

\item {\bf Experiments compute resources}
    \item[] Question: For each experiment, does the paper provide sufficient information on the computer resources (type of compute workers, memory, time of execution) needed to reproduce the experiments?
    \item[] Answer: \answerYes{} 
    \item[] Justification: Appendix~\ref{sec:appendix_assist} identifies the local hardware and hosted LLM services, and Table~\ref{tab:benchmarks} reports full-instance solver runtimes.
    Memory, runtimes for the remaining experiments, and total project compute are not reported.
    \item[] Guidelines:
    \begin{itemize}
        \item The answer \answerNA{} means that the paper does not include experiments.
        \item The paper should indicate the type of compute workers CPU or GPU, internal cluster, or cloud provider, including relevant memory and storage.
        \item The paper should provide the amount of compute required for each of the individual experimental runs as well as estimate the total compute.
        \item The paper should disclose whether the full research project required more compute than the experiments reported in the paper (e.g., preliminary or failed experiments that didn't make it into the paper).
    \end{itemize}
    
\item {\bf Code of ethics}
    \item[] Question: Does the research conducted in the paper conform, in every respect, with the NeurIPS Code of Ethics \url{https://neurips.cc/public/EthicsGuidelines}?
    \item[] Answer: \answerYes{} 
    \item[] Justification: The research conforms to the NeurIPS Code of Ethics.
    \item[] Guidelines:
    \begin{itemize}
        \item The answer \answerNA{} means that the authors have not reviewed the NeurIPS Code of Ethics.
        \item If the authors answer \answerNo, they should explain the special circumstances that require a deviation from the Code of Ethics.
        \item The authors should make sure to preserve anonymity (e.g., if there is a special consideration due to laws or regulations in their jurisdiction).
    \end{itemize}

\item {\bf Broader impacts}
    \item[] Question: Does the paper discuss both potential positive societal impacts and negative societal impacts of the work performed?
    \item[] Answer: \answerYes{} 
    \item[] Justification: Appendix~\ref{app:broader_impacts} discusses the benefit of more efficient policy evaluation, potential consequences of inaccurate policy comparisons, and the need for application-specific validation.
    \item[] Guidelines:
    \begin{itemize}
        \item The answer \answerNA{} means that there is no societal impact of the work performed.
        \item If the authors answer \answerNA{} or \answerNo, they should explain why their work has no societal impact or why the paper does not address societal impact.
        \item Examples of negative societal impacts include potential malicious or unintended uses (e.g., disinformation, generating fake profiles, surveillance), fairness considerations (e.g., deployment of technologies that could make decisions that unfairly impact specific groups), privacy considerations, and security considerations.
        \item The conference expects that many papers will be foundational research and not tied to particular applications, let alone deployments.
        However, if there is a direct path to any negative applications, the authors should point it out.
        For example, it is legitimate to point out that an improvement in the quality of generative models could be used to generate Deepfakes for disinformation.
        On the other hand, it is not needed to point out that a generic algorithm for optimizing neural networks could enable people to train models that generate Deepfakes faster.
        \item The authors should consider possible harms that could arise when the technology is being used as intended and functioning correctly, harms that could arise when the technology is being used as intended but gives incorrect results, and harms following from (intentional or unintentional) misuse of the technology.
        \item If there are negative societal impacts, the authors could also discuss possible mitigation strategies (e.g., gated release of models, providing defenses in addition to attacks, mechanisms for monitoring misuse, mechanisms to monitor how a system learns from feedback over time, improving the efficiency and accessibility of ML).
    \end{itemize}
    
\item {\bf Safeguards}
    \item[] Question: Does the paper describe safeguards that have been put in place for responsible release of data or models that have a high risk for misuse (e.g., pre-trained language models, image generators, or scraped datasets)?
    \item[] Answer: \answerNA{} 
    \item[] Justification: The contribution is an allocation method using existing data and hosted LLMs; we do not release a pretrained generative model or a newly collected student dataset.
    \item[] Guidelines:
    \begin{itemize}
        \item The answer \answerNA{} means that the paper poses no such risks.
        \item Released models that have a high risk for misuse or dual-use should be released with necessary safeguards to allow for controlled use of the model, for example by requiring that users adhere to usage guidelines or restrictions to access the model or implementing safety filters.
        \item Datasets that have been scraped from the Internet could pose safety risks.
        The authors should describe how they avoided releasing unsafe images.
        \item We recognize that providing effective safeguards is challenging, and many papers do not require this, but we encourage authors to take this into account and make a best faith effort.
    \end{itemize}

\item {\bf Licenses for existing assets}
    \item[] Question: Are the creators or original owners of assets (e.g., code, data, models), used in the paper, properly credited and are the license and terms of use explicitly mentioned and properly respected?
    \item[] Answer: \answerYes{} 
    \item[] Justification: Appendix~\ref{app:asset_terms} identifies the ASSISTments release and creators, links its data-use conditions, and identifies the LLM providers and applicable service terms.
    \item[] Guidelines:
    \begin{itemize}
        \item The answer \answerNA{} means that the paper does not use existing assets.
        \item The authors should cite the original paper that produced the code package or dataset.
        \item The authors should state which version of the asset is used and, if possible, include a URL.
        \item The name of the license (e.g., CC-BY 4.0) should be included for each asset.
        \item For scraped data from a particular source (e.g., website), the copyright and terms of service of that source should be provided.
        \item If assets are released, the license, copyright information, and terms of use in the package should be provided.
        For popular datasets, \url{paperswithcode.com/datasets} has curated licenses for some datasets.
        Their licensing guide can help determine the license of a dataset.
        \item For existing datasets that are re-packaged, both the original license and the license of the derived asset (if it has changed) should be provided.
        \item If this information is not available online, the authors are encouraged to reach out to the asset's creators.
    \end{itemize}

\item {\bf New assets}
    \item[] Question: Are new assets introduced in the paper well documented and is the documentation provided alongside the assets?
    \item[] Answer: \answerYes{} 
    \item[] Justification: The code and detailed instructions are in our repository at [repository URL].
    \item[] Guidelines:
    \begin{itemize}
        \item The answer \answerNA{} means that the paper does not release new assets.
        \item Researchers should communicate the details of the dataset\slash code\slash model as part of their submissions via structured templates.
        This includes details about training, license, limitations, etc.
        \item The paper should discuss whether and how consent was obtained from people whose asset is used.
        \item At submission time, remember to anonymize your assets (if applicable).
        You can either create an anonymized URL or include an anonymized zip file.
    \end{itemize}

\item {\bf Crowdsourcing and research with human subjects}
    \item[] Question: For crowdsourcing experiments and research with human subjects, does the paper include the full text of instructions given to participants and screenshots, if applicable, as well as details about compensation (if any)?
    \item[] Answer: \answerNA{} 
    \item[] Justification: We use synthetic data and secondary analysis of existing ASSISTments records, with no participant recruitment, crowdsourcing, or interventions (Appendix~\ref{app:asset_terms}).
    \item[] Guidelines:
    \begin{itemize}
        \item The answer \answerNA{} means that the paper does not involve crowdsourcing nor research with human subjects.
        \item Including this information in the supplemental material is fine, but if the main contribution of the paper involves human subjects, then as much detail as possible should be included in the main paper.
        \item According to the NeurIPS Code of Ethics, workers involved in data collection, curation, or other labor should be paid at least the minimum wage in the country of the data collector.
    \end{itemize}

\item {\bf Institutional review board (IRB) approvals or equivalent for research with human subjects}
    \item[] Question: Does the paper describe potential risks incurred by study participants, whether such risks were disclosed to the subjects, and whether Institutional Review Board (IRB) approvals (or an equivalent approval/review based on the requirements of your country or institution) were obtained?
    \item[] Answer: \answerNA{} 
    \item[] Justification: The experiments use synthetic data and secondary analysis of the public ASSISTments release, with no new participant recruitment or interventions (Appendix~\ref{app:asset_terms}).
    \item[] Guidelines:
    \begin{itemize}
        \item The answer \answerNA{} means that the paper does not involve crowdsourcing nor research with human subjects.
        \item Depending on the country in which research is conducted, IRB approval (or equivalent) may be required for any human subjects research.
        If you obtained IRB approval, you should clearly state this in the paper.
        \item We recognize that the procedures for this may vary significantly between institutions and locations, and we expect authors to adhere to the NeurIPS Code of Ethics and the guidelines for their institution.
        \item For initial submissions, do not include any information that would break anonymity (if applicable), such as the institution conducting the review.
    \end{itemize}

\item {\bf Declaration of LLM usage}
    \item[] Question: Does the paper describe the usage of LLMs if it is an important, original, or non-standard component of the core methods in this research?
    Note that if the LLM is used only for writing, editing, or formatting purposes and does \emph{not} impact the core methodology, scientific rigor, or originality of the research, declaration is not required.
    \item[] Answer: \answerYes{} 
    \item[] Justification: Section~\ref{sec:assist_experiment} and Appendix~\ref{app:assist_exp} describe the LLM annotation sources, response-time prediction task, prompts, and conversion of predictions to rewards.
    \item[] Guidelines:
    \begin{itemize}
        \item The answer \answerNA{} means that the core method development in this research does not involve LLMs as any important, original, or non-standard components.
        \item Please refer to our LLM policy in the NeurIPS handbook for what should or should not be described.
    \end{itemize}

\end{enumerate}